\documentclass[twoside,11pt]{article}

\usepackage{blindtext}

\usepackage[preprint]{jmlr2e}

\usepackage{lastpage}
\jmlrheading{23}{2022}{1-\pageref{LastPage}}{1/21; Revised 5/22}{9/22}{21-0000}{Yuling Jiao, Yanming Lai, Huazhen Lin, Wensen Ma, Houduo Qi, Defeng Sun}

\firstpageno{1}

\usepackage{enumerate, amsmath, bm, bbm, enumitem, mathrsfs, accents}

\def\P{\mathbb{P}}
\def\E{\mathbb{E}}
\def\R{\mathbb{R}}
\def\N{\mathbb{N}}
\def\eps{\varepsilon}
\def\1{\mathbbm{1}}

\newcommand{\argmin}{\mathop{\mathrm{argmin}}}
\newcommand{\argmax}{\mathop{\mathrm{argmax}}}

\usepackage{xcolor}

\newtheorem{innercustompro}{Porposition}
\newenvironment{mypro}[1]
  {\renewcommand\theinnercustompro{#1}\innercustompro}
  {\endinnercustompro}
\newtheorem{innercustomthm}{Theorem}
\newenvironment{mythm}[1]
  {\renewcommand\theinnercustomthm{#1}\innercustomthm}
  {\endinnercustomthm}
\newtheorem{innercustomcor}{Corollary}
\newenvironment{mycor}[1]
{\renewcommand\theinnercustomcor{#1}\innercustomcor}
{\endinnercustomcor}
\begin{document}

\title{Beyond the Prompt in Large Language Models: Comprehension, In-Context Learning, and Chain-of-Thought}

\author{
    \name Yuling Jiao\textsuperscript{1,2,3,4}, Yanming Lai\textsuperscript{5}, Huazhen Lin\textsuperscript{6,7,8}, 
    Wensen Ma\textsuperscript{5}, Houduo Qi\textsuperscript{9,5}, Defeng Sun\textsuperscript{5} \\
    \addr \textsuperscript{1}School of Artificial Intelligence, Wuhan University \\
    \textsuperscript{2}National Center for Applied Mathematics in Hubei, Wuhan University \\
    \textsuperscript{3}Hubei Key Laboratory of Computational Science, Wuhan University \\
    \textsuperscript{4}Institute for Math \& AI, Wuhan University \\
    \textsuperscript{5}Department of Applied Mathematics, The Hong Kong Polytechnic University \\
    \textsuperscript{6}Center of Statistical Research, Southwestern University of Finance and Economics \\
    \textsuperscript{7}School of Statistics and Data Science, Southwestern University of Finance and Economics \\
    \textsuperscript{8}New Cornerstone Science Laboratory, Southwestern University of Finance and Economics\\
    \textsuperscript{9}Department of Data Science and Artificial Intelligence, The Hong Kong Polytechnic University
}


\maketitle

\begin{abstract}
    Large Language Models (LLMs) have demonstrated remarkable proficiency across diverse tasks, exhibiting emergent properties such as semantic prompt comprehension, In-Context Learning (ICL), and Chain-of-Thought (CoT) reasoning. Despite their empirical success, the theoretical mechanisms driving these phenomena remain poorly understood. This study dives into the foundations of these observations by addressing three critical questions: (1) \textit{How do LLMs accurately decode prompt semantics despite being trained solely on a next-token prediction objective?} (2) \textit{Through what mechanism does ICL facilitate performance gains without explicit parameter updates?} and (3) \textit{Why do intermediate reasoning steps in CoT prompting effectively unlock capabilities for complex, multi-step problems?}

    Our results demonstrate that, through the autoregressive process, LLMs are capable of exactly inferring the transition probabilities between tokens across distinct tasks using provided prompts. We show that ICL enhances performance by reducing prompt ambiguity and facilitating posterior concentration on the intended task. Furthermore, we find that CoT prompting activates the model's capacity for task decomposition, breaking complex problems into a sequence of simpler sub-tasks that the model has mastered during the pretraining phase. By comparing their individual error bounds, we provide novel theoretical insights into the statistical superiority of advanced prompt engineering techniques.
\end{abstract}


\section{Introduction}
The development of Large Language Models (LLMs) has revolutionized the paradigm of human-computer interaction \citep{radford2018improving, radford2019language, brown2020language, chowdhery2022palm, touvron2023llama, achiam2023gpt, team2024gemini, comanici2025gemini, guo2025deepseek} based on Transformer architecture~\citep{vaswani2017attention}. These tools allow users to communicate in natural language while keeping model parameters fixed, an advancement that rests entirely on the models' comprehension capabilities for users' intention. Yet, the underlying training task for LLMs is remarkably consistent: the autoregressive prediction of the next token. How such a straightforward paradigm generates complex comprehension is a central inquiry into the mystery of these emergent systems. A key obstacle in this investigation is the formidable technical challenge of analyzing the Transformer architecture itself. While the Transformer is the primary vehicle for this intelligence, its theoretical characterization remains an elusive goal, posing a significant analytical barrier to understanding how simple autoregressive objectives are scaled into sophisticated cognitive abilities.

In the absence of a complete theoretical roadmap, our understanding of these models has been primarily driven by empirical discoveries at the interface of input design. In practice, it has been observed that the strategic design of prompts can induce specific desired behaviors in LLMs without any modification to the model parameters. This paradigm is known as prompt engineering \citep{brown2020language, wei2022chain, liu2023pre, zhou2023leasttomostpromptingenablescomplex, sahoo2024systematic}. Within this field, the two most prominent heuristics are In-Context Learning (ICL) \citep{brown2020language, dong2024survey} and Chain-of-Thought (CoT) \citep{wei2022chain}.

In-Context Learning allows LLMs to infer tasks from a small number of demonstrations provided directly in the prompt context. By leveraging these input-output examples, the model generalizes to novel instances and achieves superior performance compared to standard zero-shot prompting, even though intuitively the distribution of prompts, which concatenate independent examples, is quite different from natural language for pretraining. To date, this intuitive paradigm has become a cornerstone of prompt engineering, serving as the basis for increasingly sophisticated methods tailored for complex problem-solving. However, despite these empirical gains, the theoretical mechanisms underpinning such improvements remain poorly understood.

Building upon the success of few-shot demonstrations, Chain-of-Thought (CoT) prompting extends the few-shot paradigm by requiring the model to generate a sequence of intermediate reasoning steps before arriving at a final conclusion. This strategy has proven particularly effective for complex tasks involving arithmetic, symbolic logic, and multi-step commonsense reasoning—domains where standard direct-output ICL often falls short \citep{wei2022chain, kojima2022large}. By explicitly modeling the ``process" rather than just the ``result", CoT provides the model with a logical blueprint that aids in navigating high-dimensional state spaces to find the correct solution.

However, the reason why these intermediate reasoning steps in few-shot demonstrations effectively unlock such new capabilities remains a profound theoretical mystery. It is not immediately obvious how seeing a few examples of ``reasoning" allows a model to resolve complex problems that it cannot handle via standard ICL. Specifically, what fundamental advantage does this structured prompting provide compared to the other two prompts, and what emergent abilities are activated to facilitate this transition?

In this study, we provide a foundational analysis to address these inquiries. Leveraging a rigorous theoretical framework for Transformers, we demonstrate that the autoregressive objective enables LLMs to accurately decode prompt semantics and precisely infer transition probabilities between tokens across distinct tasks. Our findings reveal that while standard ICL facilitates performance gains via posterior concentration on an intended task (by reducing prompt ambiguity), few-shot CoT prompting goes further: it activates the model’s capacity for task decomposition. This allows the model to resolve complex problems by breaking them into a sequence of simpler sub-tasks mastered during the pretraining phase. By deriving and comparing individual error bounds, we provide a rigorous theoretical account of the statistical superiority of few-shot CoT over both standard ICL and zero-shot prompting.

\subsection{Related Works}
There is significant interest in exploring the underlying mechanisms behind the improvements induced by prompt engineering; however, many fundamental questions remain underexplored.
\paragraph{ICL Theory from a Bayesian Perspective}
\citet{xie2022explanationincontextlearningimplicit} were among the first to analyze ICL through a Bayesian lens. By modeling the token generation process as a Hidden Markov Model (HMM), they demonstrated that LLMs can recover transition rules between adjacent hidden states as the number of demonstrations approaches infinity. Building on this foundation, \citet{wies2023learnability} and \citet{jiang2023latentspacetheoryemergent} established non-asymptotic bounds for the number of demonstrations required for effective ICL. However, these works rely on the strong assumption that LLMs near-perfectly approximate the underlying language distribution following pretraining—a premise that remains difficult to verify as the sample complexity of Transformers is still an active area of exploration. Furthermore, \citet{jiang2023latentspacetheoryemergent} utilized ambiguity to characterize the uncertainty of natural language; yet, their results imply that demonstrations necessarily reduce zero-shot ambiguity, a conclusion that often contradicts empirical observations. We argue that this discrepancy arises from their omission of prior imbalances within the task space. In response, we introduce a novel proof strategy that establishes a theoretical threshold for ICL, offering a more robust perspective. Additionally, while \citet{pmlr-v258-zhang25d} characterized pretraining error using the PAC-Bayes framework, their analysis primarily concerns posterior convergence for static tasks and overlooks inherent task-level uncertainty. Finally, \citet{liu2025context} adopted a regression-based setup to provide asymptotic risk bounds for ICL.

\paragraph{ICL Theory from Other Perspectives}
In addition to the Bayesian view, some studies prioritize the representational capacity of Transformers, providing existence proofs for their potential to perform ICL, rather than describing the mechanistic reality of how existing LLMs function. Specifically, \citet{garg2022can, bai2023transformers, panwar2024incontextlearningbayesianprism, hataya2024automatic, kim2024transformers, li2025transformers, shen2025understanding, ching2026efficient} demonstrated the expressive power of Transformers, revealing that Transformers combined with ICL-like inputs can universally recover signals produced by regression functions with specific structures (e.g., linear functions, two-layer neural networks).
Conversely, another strand of research investigates the training dynamics, such as \citet{von2023transformers, von2023uncovering, ahn2023transformers, dai2023can, bai2023transformers, fu2024transformers, zhang2024trained} While these studies often require modifications to the softmax activation function within the attention layer (e.g., using Identity or ReLU), or just consider the shallow Transformer, to derive theoretical guarantees, \citet{ren2024towards} analyzed the standard softmax activation to provide a theory from a dynamical perspective. A critical limitation of this line of work is that the training data is often explicitly structured to teach in-context learning (e.g., linear regression tasks), whereas real-world LLM pretraining data are not structured in this manner.

\paragraph{Theory for Chain-of-Thought (CoT)}
Unlike standard ICL, the theoretical understanding of CoT prompting remains relatively scarce. \citet{li2023dissecting} adopted a dynamic view, modeling CoT as an implicit gradient descent process that reduces the sample complexity of learning compositional functions. \citet{prystawski2023think} proved that CoT prompts can bring LLM outputs much closer to the true distribution. Furthermore, from an information-theoretic lens, \citet{ton2024understanding} claimed that reasoning paths wrongly executed by LLMs provide negligible information gain. But the most central question is: \textit{where does the emergent ability induced by CoT which is observed by us come from?} \citet{feng2023towards} attributed the success of CoT to extended computational depth (proving that bounded-depth Transformers cannot solve certain math problems without CoT), we argue that this view neglects the uncertainty involved in task selection. In contrast, we demonstrate that the primary role of CoT is to disambiguate the reasoning path, enabling the model to navigate non-stationary trajectories that are not explicitly defined in the pretraining distribution. \citet{hu2024unveiling} modeled Chain-of-Thought (CoT) as sequential regression functions, allowing them to incorporate pretraining analysis into their error bounds. However, their theory explicitly assumes that the pretraining dataset inherently contains data with a multi-step structure similar to CoT. This assumption may obscure a deeper understanding of how CoT reasoning emerges from standard pretraining.


\paragraph{Summary of Research Gap} In short, existing theories establish guarantees only for limited cases, without a direct comparison between different prompting strategies. Moreover, these theories often rely on simple assumptions that may not reflect how real-world LLMs behave. This lack of comparison, combined with a failure to mechanistically explain the emergent reasoning catalyzed by CoT, leaves the comparative landscape of prompting strategies theoretically underexplored.

\paragraph{Transformer Approximation Ability} Current research on memorization primarily focuses on classification tasks, which does not adequately capture the impact of Transformers in large language models (LLMs). To address this gap, we investigate the ability of Transformers to memorize complex probability distributions, thereby providing a powerful tool for further exploration of LLMs. Based on this result, we furthermore explore the sample complexity of current LLMs, offering a rigorous understanding from statistic perspective.

The memorization capacity of neural networks is closely related to their approximation capabilities. For classic feedforward neural networks (FNNs), studies on memorization capacity can be found in \cite{park2021provable,vardi2022on}. For research on the memorization capacity of Transformers, see \cite{kim2023provable,mahdavi2024memorization,madden2024upper,kajitsuka2025on}.

\subsection{Contributions}
The primary contributions of this study are summarized as follows:
\begin{itemize}
    \item We propose a unified framework to conduct a comprehensive analysis of prominent LLM prompting strategies, including zero-shot (Theorem~\ref{Theorem: LLMs' comprehension}), ICL (Theorem~\ref{theorem: ICL}), and CoT (Theorem~\ref{theorem: CoT}), which enable us to characterize the underlying mechanisms through which these prompts enhance model performance.
    \item Our framework offers a novel insight for the emergent abilities elicited by CoT (Theorem~\ref{theorem: CoT}). Specifically, it demonstrates how pretrained LLMs facilitate task composition, thereby explaining the emergence of the complex behaviors observed in practice.  
    \item To ensure a rigorous pretraining analysis (Theorem~\ref{Theorem: pretraining}), we establish a comprehensive theory for the Transformer architecture, covering both its generalization (Appendix~\ref{section: generlization}) and memorization capabilities (Appendix~\ref{Section: approximation}). Notably, our analysis maintains high architectural fidelity, requiring no excessive structural modifications compared to existing theoretical studies. 
\end{itemize}

\subsection{Organization}
In Section~\ref{section: preliminaries}, we introduce preliminaries and our modeling setup. Section~\ref{section: pretraining} provides a theoretical analysis of the autoregressive pretraining pipeline, where Theorem~\ref{Theorem: LLMs' comprehension} characterizes the model's ability to comprehend and recover latent tasks. We then analyze prompt engineering strategies in detail: Section~\ref{section: ICL} focuses on In-Context Learning, with Theorem~\ref{theorem: ICL} showing how such prompts exponentially mitigate task ambiguity. Finally, Section~\ref{section: CoT} discusses Chain-of-Thought prompting, demonstrating theoretically how it provides instructions for task decomposition, allowing LLMs to solve complex problems through the composition of atomic tasks.

\section{Preliminaries}\label{section: preliminaries}
We adopt $|S|$ to represent the cardinality of the set $S$. For a given integer $k \in \N$, we use $[k]$ to denote the integer set $\{1, 2, \cdots, k\}$. We denote the $i$-th element of a vector $\bm{v} \in \R^k$ as $v_i$ and adopt $\|\bm{v}\|_2 := (\sum_{i=1}^kv_i^2)^{1/2}$ to denote its $2$-norm. For a matrix $\bm{A} \in \R^{a \times b}$, we denote its $i$-th row as $\bm{A}_{i,:}$, while its $j$-th column as $\bm{A}_{:, j}$ and the $(i,j)$-th entry as $A_{i,j}$. Moreover, the notation $\bm{A}_{:, \prec j}$ represents the first $j-1$ columns of $\bm{A}$. For non-negative sequences $(a_i)$ and $(b_i)$, we write $a_i = \mathcal{O}(b_i)$ if there exists $A > 0$ and $I \in \N$ such that $a_i \leq Ab_i$ for all $i \geq I$. 

In addition, given two discrete probability distributions $\mathbb{P}$ and $\mathbb{Q}$ defined on a shared countable sample space $\Omega$, we use $D_{KL}(\mathbb{P} \, \| \, \mathbb{Q})$ to denote their Kullback--Leibler divergence:
\begin{align*}
    D_{KL}(\mathbb{P} \, \| \, \mathbb{Q}) = \sum_{x \in \Omega} \mathbb{P}(x) \log \frac{\mathbb{P}(x)}{\mathbb{Q}(x)},
\end{align*}
where $\mathbb{P}(x)$ and $\mathbb{Q}(x)$ are the probability mass functions (PMFs) of $\mathbb{P}$ and $\mathbb{Q}$, respectively. Meanwhile, the total variation distance between $\mathbb{P}$ and $\mathbb{Q}$ is defined as 
\begin{align*}
    \mathrm{TV}(\mathbb{P}, \mathbb{Q}) = \frac{1}{2} \sum_{x \in \Omega} |\mathbb{P}(x) - \mathbb{Q}(x)|,
\end{align*}
which is equivalent to $\sup_{A \subseteq \Omega} |\mathbb{P}(A) - \mathbb{Q}(A)|$.

In the context of natural language processing, we use \texttt{<SOS>}, \texttt{<EOS>}, \texttt{<PAD>} to respectively denote the representations of Start-Of-Sequence, End-Of-Sequence and Padding identifiers. 

Let $\mathcal{V} = \{\bm{t}_i\}_{i \in [|\mathcal{V}|]}$ denote the predefined vocabulary of finite cardinality, which comprises all possible tokens involved in a Large Language Model. Specifically, each $\bm{t} \in \mathcal{V}$ corresponds to an embedding vector in the space $\R^d$. The set of all valid probability simplex over $\mathcal{V}$ is denoted by $\Delta(\mathcal{V})$. Namely, this set consists of vectors $\bm{p}$ such that $p_i \geq 0$ for all $i \in [\mathcal{V}]$ and $\sum_{i \in [\mathcal{V}]}p_i = 1$. Accordingly, $\Delta^n(\mathcal{V})$ denotes the set of matrices $\bm{A} \in \R^{|\mathcal{V}| \times n}$ such that each column $\bm{A}_{:, j}$ for $j \in [n]$ represents a valid probability distribution over $\mathcal{V}$, meaning $\bm{A}_{:, j} \in \Delta(\mathcal{V})$.

\subsection{Transformer}\label{subsection: transformer}
We adopt $\sigma_S$ to represent the column-wise softmax function. Specifically, $\sigma_S(\bm{A})_{i,j}:= \exp(A_{i,j}) / \sum_{k=1}^p\exp(A_{k,j})$. The element-wise ReLU activation function are denoted by $\sigma_R$, i.e., $\sigma_R(\bm{v}):= [\max(0, v_1), \max(0, v_2). \cdots, \max(0, v_p)]^\top \in \R^p$.

\begin{definition}[Masked Self-Attention Layer]
    Given an input sequence $\bm{Z} \in \R^{m \times n}$, the output of a self-attention layer $\mathcal{F}^{(\mathrm{SA})}_l:\R^{d \times n} \to \R^{d \times n}$ at block $l \in [D]$ is calculated by
    \begin{align}
        \label{eq: self-attention layer}
        \mathcal{F}^{(\mathrm{SA})}_l(\bm{Z}) := \bm{Z} + \sum_{h=1}^H\bm{W}_{hl}^{(O)}\bm{W}_{hl}^{(V)}\bm{Z}\sigma_S\Big[\Big(\bm{W}_{hl}^{(K)}\bm{Z}\Big)^\top\Big(\bm{W}_{hl}^{(Q)}\bm{Z}\Big) + \bm{M}\Big] \in \R^{d \times n},
    \end{align}
    where $\bm{W}_{hl}^{(V)}, \bm{W}_{hl}^{(K)}, \bm{W}_{hl}^{(Q)} \in \R^{s \times d}$ and $\bm{W}_{hl}^{(O)} \in \R^{d \times s}$ are value, key, query and projection matrices at head $h \in [H]$ with head size $s$, respectively, while $\bm{M} \in \R^{n \times n}$ is a masking matrix, which is defined as
    \begin{equation*}
        \bm{M} = \begin{pmatrix}
            0 & 0 & \cdots & 0 \\
            -\infty & 0 & \cdots & 0 \\
            \vdots & \vdots & \ddots \\
            -\infty & -\infty & \cdots & 0
        \end{pmatrix} \in \R^{n \times n}.
    \end{equation*}
\end{definition}

\begin{definition}[Feed-Forward Layer]
    The output $\bm{H} \in \R^{d \times n}$ of the masked self-attention layer at block $l$ is then passed to the feed-forward layer, which performs the following token-wise operation:
    \begin{align*}
        \mathcal{F}^{(\mathrm{FF})}_l(\bm{H})_{:,k} := \bm{H}_{:,k} + \bm{W}^{(2)}_l \sigma_R\Big(\bm{W}_l^{(1)}\bm{H}_{:,k} + \bm{b}_l^{(1)}\Big) + \bm{b}_l^{(2)} \in \R^{d}
    \end{align*}
    where $k \in [n], \bm{W}_l^{(1)} \in \R^{r \times d}$ and $\bm{W}_l^{(2)} \in \R^{d \times r}$ are weight matrices with hidden dimension $r$, and $\bm{b}_l^{(1)} \in \R^r$ and $\bm{b}_l^{(2)} \in \R^d$ are bias terms.
\end{definition}

Similar to established theoretical works \citep{kim2023provable, KajitsukaS24, kajitsuka2025on}, we adopt the following definition for the Transformer class:
\begin{definition}[Transformers]\label{def: transformers}
    We define the Transformer block $\mathcal{F}_l : \R^{d \times n} \to \R^{d \times n}$ at block $l \in [D]$ is defined as a composition of these two layers, that is, $\mathcal{F}_l := \mathcal{F}_l^{(\mathrm{FF})} \circ \mathcal{F}_l^{(\mathrm{SA})}$, and the whole architecture of the Transformer $\mathcal{T}: \R^{d \times n} \to \Delta^n(\mathcal{V})$ is expressed by
    \begin{align}
        \label{eq: Transformer}
        \mathcal{T} := \sigma_S\circ\mathcal{E}_{\mathrm{out}} \circ \mathcal{F}_D \circ \cdots \circ \mathcal{F}_1 \circ \mathcal{E}_{\mathrm{in}}
    \end{align}
    where $\mathcal{E}_{\mathrm{in}}: \R^{d \times n} \to \R^{d \times n}$ is defined as $\mathcal{E}_{\mathrm{in}}(\bm{X}) = \bm{X} + \bm{P}$, where $\bm{P} \in \R^{d \times n}$ is positional encoder and $\mathcal{E}_{\mathrm{out}}:\R^{d \times n} \to \R^{|\mathcal{V}| \times n}$ are token-wise linear mappings. By Following \citet{kim2023provable, KajitsukaS24, kajitsuka2025on}, we define the width of the Transformer model as $W := \max(d, sH, r)$ and its depth as the number of block $D$. Finally, we define $\mathcal{T}(W, D)$ as the collection of Transformers of width $W$ and depth $D$.
\end{definition}

\section{Setup}\label{Section: Setup}
Let $D_N = \{\bm{d}^{(i)}\}_{i \in [N]}$ denote a pretraining dataset for a Large Language Model (LLM) consisting of $N$ documents. Each document $\bm{d}^{(i)} \in \mathbb{R}^{d \times n}$ is represented as a sequence of $n$ tokens: $\bm{d}^{(i)} = (\bm{t}_1^{(i)}, \bm{t}_2^{(i)}, \dots, \bm{t}_n^{(i)})$, where each token $\bm{t}^{(i)}_j \in \mathbb{R}^d$ is an embedding vector in a $d$-dimensional Euclidean space.

We denote the underlying generative distribution of these documents as $q(\bm{d})$. Drawing upon the theoretical frameworks of \citet{xie2022explanationincontextlearningimplicit} and \citet{jiang2023latentspacetheoryemergent}, we model the document generation as a two-step hierarchical latent variable process:

\paragraph{Latent Task Sampling:} First, a latent variable $\theta$ is sampled from an inaccessible prior distribution $q(\theta)$, which is a probability measure defined over the support set $\Theta$.

\paragraph{Document Generation:} The latent variable $\theta$ then determines the conditional distribution from which the document is sampled, $q(\bm{d} \mid \theta)$. This process unfolds sequentially:
\begin{enumerate}
    \item The initial token is sampled from $q(\bm{t}_1 \mid \texttt{<SOS>}, \theta)$.
    \item Subsequent tokens are sampled from $q(\bm{t}_{j+1} \mid \bm{h}_j, \theta)$, where $\bm{h}_j \in \mathbb{R}^{d \times n}$ denotes the historical context up to index $j$. To maintain a consistent matrix dimension, the context is represented as: 
    \begin{align*} 
        \bm{h}_j= (\bm{t}_1, \dots, \bm{t}_j, \texttt{<PAD>}, \dots, \texttt{<PAD>}). 
    \end{align*}
    \item The process terminates when an \texttt{<EOS>} token is sampled. If the resulting sequence length is less than the maximum capacity $n$, the remainder of the matrix is filled with \texttt{<PAD>} identifiers to ensure that $\bm{d} \in \mathbb{R}^{d \times n}$.    
\end{enumerate}
We assume throughout this work that all documents adhere to a maximum length constraint of $n$ tokens. Formally, this implies that the \texttt{<EOS>} token is guaranteed to appear within the first $n$ positions of any sequence. Under this assumption, the marginal document distribution $q(\bm{d})$ is obtained by marginalizing over the latent variable $\theta$:
\begin{align*}
    q(\bm{d}) := \sum_{\theta \in \Theta} \prod_{j=1}^{n-1}q(\bm{t}_{j+1} \mid \bm{h}_j, \,\theta)\,q(\theta).
\end{align*}

Intuitively, the latent variable $\theta$ represents the underlying task or intent governing the generation process. For instance, given the prefix ``Albert Einstein was", various continuations, such as ``German", ``a physicist", or ``wise", are all statistically plausible. Without additional context, determining the ``optimal" token is ill-posed. However, if the latent task is specified as ``identify profession", the conditional probability of sampling ``physicist" will dominate other candidates. In this regard, a fundamental metric for evaluating the comprehension of LLMs is \textit{latent task identifiability: the model's capacity to precisely infer the latent variable $\theta$ solely from the provided prompt $\bm{x}$ and historical context $\bm{h}$.} To formally analyze this capability of LLMs, we define the genuine length of a given sequence as follows:

\begin{definition}[Genuine Length]
    We define the genuine length function $\mathcal{\ell} : \R^{d \times n} \to [n]$ to quantify the number of non-\texttt{<PAD>} tokens in a sequences $\bm{s} \in \R^{d \times n}$.

    Formally, the genuine length is the cardinality of the index set of valid tokens:
    \begin{align*}
        \mathcal{\ell}(\bm{s}) := \Big|\Big\{j \in [n] \,\Big|\, \bm{s}_{:, j} \neq \texttt{<PAD>}\Big\}\Big|.
    \end{align*}
\end{definition}

For notational convenience, consider a series of sequences $\bm{s}^{(1)}, \bm{s}^{(2)}, \dots, \bm{s}^{(k)} \in \R^{d \times n}$ such that the sum of their genuine lengths does not exceed $n$, i.e., $\sum_{i=1}^k \ell(\bm{s}^{(i)}) \leq n$. Let $n_i := \ell(\bm{s}^{(i)})$ denote the genuine length of each sequence $\bm{s}^{(i)}$ for $i \in [k]$. We define the concatenation operator $\circ$ to represent the sequential merging of the non-\texttt{<PAD>} segments of these contexts, followed by terminal padding to maintain a fixed sequence length. Formally:
\begin{align*}
    \bm{s}^{(1)} \circ \bm{s}^{(2)} \circ \cdots \circ \bm{s}^{(k)} = (\bm{s}^{(1)}_{1:n_1}, \cdots, \bm{s}^{(k)}_{1:n_k}, \underbrace{\texttt{<PAD>},  \cdots ,\texttt{<PAD>}}_{n - \sum_{i=1}^kn_i \text{ terms}}) \in \R^{d \times n}.
\end{align*}
We emphasize that the resulting document produced by the $\circ$ operation is always a matrix of dimension $d \times n$, ensuring it remains within the valid input space of the LLM.

In this framework, we adopt the following metrics to evaluate the quality of a given prompt $\bm{x} \in \mathbb{R}^{d \times n}$
\begin{definition}[Dominated Inferred Task and Ambiguity]\label{def: dominated task and ambiguity (non-history)}
    We define the Dominated Inferred Task $\theta_{\bm{x}}$ as the most probable latent task conditioned on the given prompt $\bm{x} \in \mathbb{R}^{d \times n}$:
\begin{align*}
    \theta_{\bm{x}} := \argmax_{\theta \in \Theta} q(\theta \mid \bm{x}).
\end{align*}
The \textbf{task ambiguity} induced by the prompt $\bm{x}$ is defined as: 
\begin{align*}
    \mathcal{A}_{\Theta}(\bm{x}) := 1- q(\theta_{\bm{x}} \mid \bm{x}).
\end{align*}
\end{definition}
The quantity $\mathcal{A}_{\Theta}(\bm{x})$ quantifies the uncertainty inherent in inferring the latent task. Intuitively, a smaller $\mathcal{A}_{\Theta}(\bm{x})$ indicates that the prompt $\bm{x}$ is significantly concise, as the posterior probability mass becomes highly concentrated on a single, dominant task. Moreover, these definitions lead to the following properties:

\begin{proposition}\label{proposition: ε2 < ε1 (without h)}
    For any given prompt $\bm{x} \in \R^{d \times n}$, if $\Theta_1 \subseteq \Theta_2$, then 
    \begin{align*}
        \mathcal{A}_{\Theta_2}(\bm{x}) \leq \mathcal{A}_{\Theta_1}(\bm{x}).
    \end{align*}
\end{proposition}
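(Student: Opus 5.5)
The plan is to read $\mathcal{A}_{\Theta}(\bm{x})$ as a quantity in which only the \emph{search set} of the $\argmax$ in Definition~\ref{def: dominated task and ambiguity (non-history)} varies. The posterior $q(\cdot \mid \bm{x})$ is the single fixed posterior induced by the generative model of Section~\ref{Section: Setup}. With that reading, the statement reduces to monotonicity of a supremum under enlargement of the index set. I would first make this reading explicit: for $i \in \{1,2\}$ write
\begin{align*}
    \theta^{(i)}_{\bm{x}} := \argmax_{\theta \in \Theta_i} q(\theta \mid \bm{x}), \qquad \mathcal{A}_{\Theta_i}(\bm{x}) = 1 - q(\theta^{(i)}_{\bm{x}} \mid \bm{x}) = 1 - \max_{\theta \in \Theta_i} q(\theta \mid \bm{x}).
\end{align*}
The maximum exists whenever $\Theta_i$ is finite. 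For countable $\Theta_i$ it also exists, because $q(\cdot \mid \bm{x})$ is summable, so for any $\eps>0$ only finitely many $\theta$ have $q(\theta\mid\bm{x})\ge \eps$. In the degenerate case where all values are zero, any element attains the maximum.

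The key step is the inclusion argument. Since $\theta^{(1)}_{\bm{x}} \in \Theta_1 \subseteq \Theta_2$, the element $\theta^{(1)}_{\bm{x}}$ is a feasible candidate in the maximization over $\Theta_2$. Therefore
\begin{align*}
    q(\theta^{(2)}_{\bm{x}} \mid \bm{x}) = \max_{\theta \in \Theta_2} q(\theta \mid \bm{x}) \ \geq\ q(\theta^{(1)}_{\bm{x}} \mid \bm{x}).
\end{align*}
Subtracting both sides from $1$ reverses the inequality and gives $\mathcal{A}_{\Theta_2}(\bm{x}) \le \mathcal{A}_{\Theta_1}(\bm{x})$. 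If the $\argmax$ is not unique, the value $\max_{\theta} q(\theta\mid\bm{x})$ does not depend on which maximizer is chosen. Hence $\mathcal{A}$ is well defined and the argument is unaffected.

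The only real obstacle is interpretive rather than technical. If one instead renormalized the posterior within each $\Theta_i$, that is, used $q(\theta\mid\bm{x})/q(\Theta_i\mid\bm{x})$, the claim could fail. Shrinking the set can then increase the normalized mass of its best element. I would therefore state at the start of the proof that $q(\theta \mid \bm{x})$ denotes the fixed model posterior, which is what the definition literally uses, so that enlarging $\Theta$ only adds candidates to the maximization.
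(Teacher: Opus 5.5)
Your proposal is correct and matches the paper's proof, which is the same one-line monotonicity argument $1-\max_{\theta\in\Theta_2} q(\theta\mid\bm{x}) \le 1-\max_{\theta\in\Theta_1} q(\theta\mid\bm{x})$ with a single fixed, unrenormalized posterior. Your caveat about renormalization is well placed: the paper's proof implicitly relies on the same fixed-posterior reading, and under a renormalized reading the inequality can indeed reverse.
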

This property implies that a latent variable $q(\theta)$ with a larger support set reduces the ambiguity induced by the same document $\bm{x}$, as a broader task space provides more candidates for the dominant task.   

\begin{proposition}\label{proposition: ε of the compositional prompt (without h)}
Let $\bm{x}_1 \in \R^{d \times n}$ be any prompt, then
\begin{align*}
    \E_{\bm{x}_2\sim q(\cdot\,\mid\, \bm{x}_1)}\big\{\mathcal{A}_\Theta(\bm{x}_1\circ \bm{x}_2)\big\}\ \le\ \mathcal{A}_\Theta(\bm{x}_1).
\end{align*}
Furthermore, if $\bm{x}_2$ is another prompt such that $\theta_{\bm{x}_1} = \theta_{\bm{x}_1 \, \circ \,\bm{x}_2}$ and $q(\bm{x}_2\mid \bm{x}_1,\,\theta_{\bm{x}_1})\ \ge\ q(\bm{x}_2\mid \bm{x}_1,\,\theta)$ for all $\theta\in\Theta$, then
  \begin{align*}
      \mathcal{A}_\Theta(\bm{x}_1\circ \bm{x}_2)\ \le\ \mathcal{A}_\Theta(\bm{x}_1).
  \end{align*} 
\end{proposition}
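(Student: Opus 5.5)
The plan is to reduce both claims to elementary facts about posteriors, using Bayes' rule for the concatenated prompt:
\begin{align*}
q(\theta\mid \bm{x}_1\circ\bm{x}_2)=\frac{q(\bm{x}_2\mid \bm{x}_1,\theta)\,q(\theta\mid \bm{x}_1)}{q(\bm{x}_2\mid \bm{x}_1)},
\qquad q(\bm{x}_2\mid\bm{x}_1)=\sum_{\theta\in\Theta} q(\bm{x}_2\mid\bm{x}_1,\theta)\,q(\theta\mid\bm{x}_1).
\end{align*}
Here $q(\bm{x}_2\mid\bm{x}_1,\theta)$ is the autoregressive product of the transition probabilities along the appended segment.

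For the first (expectation) claim, write $1-\mathcal{A}_\Theta(\bm{x}_1\circ\bm{x}_2)=\max_\theta q(\theta\mid\bm{x}_1\circ\bm{x}_2)$. This maximum is at least $q(\theta_{\bm{x}_1}\mid \bm{x}_1\circ\bm{x}_2)$, since $\theta_{\bm{x}_1}$ is one admissible candidate. Taking the expectation over $\bm{x}_2\sim q(\cdot\mid\bm{x}_1)$ and using the tower property (the posterior is a martingale) gives
\begin{align*}
\E_{\bm{x}_2}\, q(\theta_{\bm{x}_1}\mid\bm{x}_1\circ\bm{x}_2)=\sum_{\bm{x}_2} q(\bm{x}_2\mid\bm{x}_1)\,\frac{q(\bm{x}_2\mid\bm{x}_1,\theta_{\bm{x}_1})\,q(\theta_{\bm{x}_1}\mid\bm{x}_1)}{q(\bm{x}_2\mid\bm{x}_1)}=q(\theta_{\bm{x}_1}\mid\bm{x}_1).
\end{align*}
The last equality holds because $q(\cdot\mid\bm{x}_1,\theta)$ sums to one over continuations. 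Hence $\E\{1-\mathcal{A}_\Theta(\bm{x}_1\circ\bm{x}_2)\}\ge 1-\mathcal{A}_\Theta(\bm{x}_1)$, which is the first claim.

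For the second (deterministic) claim, put $\theta^\star=\theta_{\bm{x}_1}=\theta_{\bm{x}_1\circ\bm{x}_2}$. By assumption, $q(\bm{x}_2\mid\bm{x}_1,\theta)\le q(\bm{x}_2\mid\bm{x}_1,\theta^\star)$ for every $\theta$. Plugging this into the normalizer gives $q(\bm{x}_2\mid\bm{x}_1)\le q(\bm{x}_2\mid\bm{x}_1,\theta^\star)$. Therefore $q(\theta^\star\mid\bm{x}_1\circ\bm{x}_2)\ge q(\theta^\star\mid\bm{x}_1)$. Since $\theta^\star$ is the dominating task for both prompts, this is exactly $\mathcal{A}_\Theta(\bm{x}_1\circ\bm{x}_2)\le\mathcal{A}_\Theta(\bm{x}_1)$. (The argument only needs the maximum of the new posterior to be at least its value at $\theta^\star$, so the hypothesis $\theta_{\bm{x}_1}=\theta_{\bm{x}_1\circ\bm{x}_2}$ is more than is required.)

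The main obstacle is bookkeeping, not analysis. One must make sure the conditional $q(\bm{x}_2\mid\bm{x}_1,\theta)$ is well defined under the padded concatenation $\circ$. This means checking that the genuine lengths add up to at most $n$, and that the \texttt{<EOS>} token of $\bm{x}_1$ is stripped so the autoregressive factorization continues across the junction. One must also make sure that summing over $\bm{x}_2$ ranges over exactly the support of $q(\cdot\mid\bm{x}_1)$, which is what makes the martingale identity exact. I would state this convention explicitly at the start of the proof, and handle ties in the $\argmax$ by fixing any maximizer, which leaves $\mathcal{A}_\Theta$ unchanged.
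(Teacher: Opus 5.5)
Your proposal is correct and follows essentially the same route as the paper: the first claim comes from the tower identity $\E_{\bm{x}_2}\, q(\theta\mid \bm{x}_1\circ\bm{x}_2)=q(\theta\mid\bm{x}_1)$ combined with $\E\max\ge\max\E$, which you implement directly by evaluating at $\theta_{\bm{x}_1}$, and the second from Bayes' rule with the normalizer bounded by $q(\bm{x}_2\mid\bm{x}_1,\theta_{\bm{x}_1})$. Your remark that the hypothesis $\theta_{\bm{x}_1}=\theta_{\bm{x}_1\circ\bm{x}_2}$ is unnecessary is also right; in fact the likelihood-dominance condition already makes $\theta_{\bm{x}_1}$ a maximizer of the updated posterior, up to ties.
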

This proposition asserts that augmenting the context (via the concatenation of $\bm{x}_2$) generally reduces task ambiguity. This aligns with the information-theoretic principle that additional observations typically reduce uncertainty regarding latent variables.

\begin{proposition}\label{proposition: A ≤ 1 - E (without h)}
Let $\mathcal{E}(\bm{x}) := -\sum_{\theta \in \Theta} q(\theta\mid \bm{x})\,\log q(\theta\mid \bm{x})$ be the Shannon entropy of the posterior distribution $q(\theta \mid \bm{x})$. Then
\begin{align*}
    \mathcal{A}_{\Theta}(\bm{x}) \;\leq \; 1 - \exp\big(-\mathcal{E}(\bm{x})\big).
\end{align*}
\end{proposition}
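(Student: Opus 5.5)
The plan is to relate the maximal posterior probability to the Shannon entropy through the min-entropy inequality. Write $p_\theta := q(\theta \mid \bm{x})$ for $\theta \in \Theta$ and $p_{\max} := q(\theta_{\bm{x}} \mid \bm{x}) = \max_{\theta} p_\theta$. By definition $\mathcal{A}_\Theta(\bm{x}) = 1 - p_{\max}$. The claimed bound $1 - p_{\max} \le 1 - \exp(-\mathcal{E}(\bm{x}))$ is therefore equivalent to
\begin{align*}
    p_{\max} \;\ge\; \exp\big(-\mathcal{E}(\bm{x})\big),
    \qquad\text{equivalently}\qquad
    -\log p_{\max} \;\le\; \mathcal{E}(\bm{x}).
\end{align*}
This says that the min-entropy never exceeds the Shannon entropy.

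To prove it, I bound each term in the entropy sum from below. Since $p_\theta \le p_{\max}$ for every $\theta$ in the support, we have $-\log p_\theta \ge -\log p_{\max}$. Multiplying by $p_\theta \ge 0$ and summing over $\theta$ gives
\begin{align*}
    \mathcal{E}(\bm{x}) \;=\; \sum_{\theta \in \Theta} p_\theta\,(-\log p_\theta) \;\ge\; -\log p_{\max}\sum_{\theta\in\Theta} p_\theta \;=\; -\log p_{\max}.
\end{align*}
This uses $\sum_\theta p_\theta = 1$ and the usual convention $0\log 0 = 0$, so terms with $p_\theta = 0$ contribute nothing on either side. Exponentiating, which is monotone, gives $p_{\max} \ge \exp(-\mathcal{E}(\bm{x}))$, and subtracting from one yields the statement. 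An equivalent route is to apply Jensen's inequality to the concave logarithm, giving $\log \sum_\theta p_\theta^2 \ge \sum_\theta p_\theta \log p_\theta$, and then use $\sum_\theta p_\theta^2 \le p_{\max}$. I would present the direct termwise argument since it is shorter.

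There is no real obstacle here. The only points needing care are technical. First, the argmax defining $\theta_{\bm{x}}$ must exist; this holds because the paper's marginalization sums over $\Theta$ as a countable set, and a probability vector on a countable set attains its supremum, since only finitely many entries can exceed any positive level. Second, if the entropy is infinite, the bound holds trivially. Third, $p_{\max} > 0$, so $\log p_{\max}$ is finite.
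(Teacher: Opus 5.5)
Your proof is correct and is essentially the paper's argument: the paper also bounds every posterior probability by $1-\mathcal{A}_\Theta(\bm{x})$, so that $\mathcal{E}(\bm{x}) \ge -\log\big(1-\mathcal{A}_\Theta(\bm{x})\big)$, and then exponentiates. The paper does this for the history-dependent version, of which this statement is the case $\bm{h}=\texttt{<SOS>}$; your remarks on attainment of the maximum and on infinite entropy are harmless refinements that the paper omits.
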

This bound provides an information-theoretic envelope for task ambiguity. A low-entropy posterior $q(\theta \mid \bm{x})$ necessitates concentration on a dominant task, formally forcing $\mathcal{A}_{\Theta}(\bm{x})$ to vanish.

The proofs of these three propositions are deferred to Appendix~\ref{Appendix: properties of ambiguity}, which establishes a more general version of these results (Propositions~\ref{proposition: ε2 < ε1}, \ref{proposition: ε of the compositional prompt}, and \ref{proposition: A ≤ 1 - E}).


\section{Pretraining via Auto-Regression}\label{section: pretraining}
As previously stated, the comprehension capability of LLMs is intrinsically linked to the precise inference of the latent task $\theta$ from the provided prompt and historical context. However, because the latent task is fundamentally unobservable in practice, the community adopts the auto-regressive training paradigm. This approach minimizes the empirical risk over the training dataset $\mathcal{D}$, defined as:
\begin{align}\label{eq: hat_p}
    \hat{p}(\bm{t} \mid \bm{h}) \in \argmin_{p \in \mathcal{F}} -\frac{1}{Nn}\sum\limits_{i=1}^N\sum_{j = 1}^{n}\log p(\bm{t}^{(i)}_j\mid \bm{h}^{(i)}_j).
\end{align}
Here, $\bm{h}^{(i)}_j$ represents the historical context for the $j$-th token of the $i$-th document, defined as the padded sequence of all preceding tokens:
\begin{align*}
    \bm{h}^{(i)}_j = (\bm{t}^{(i)}_1, \cdots, \bm{t}^{(i)}_j, \texttt{<PAD>}, \cdots, \texttt{<PAD>}) \in \R^{d \times n}.
\end{align*}
This minimization is performed over the hypothesis space $\mathcal{F}$, which is conventionally defined as the class of Transformer architectures. In this work, we specify the hypothesis space as $\mathcal{F} = \mathcal{T}(W, D)$, whose definition has been provided in Section~\ref{subsection: transformer}. 

At the population level, the corresponding risk can be expressed as the summation of the expected Kullback–Leibler (KL) divergence and an additive constant:
\begin{align*}
    \mathcal{R}(p) &= -\E_{(\bm{t}, \bm{h}) \sim q(\bm{t}, \bm{h})}\Big\{\log q(\bm{t} \mid \bm{h})\Big\} = -\E_{(\bm{t}, \bm{h}) \sim q(\bm{t}, \bm{h})}\Big\{\log\frac{q(\bm{t} \mid \bm{h})}{p(\bm{t} \mid \bm{h})}\Big\} + \mathrm{const} \\
    & = -\E_{\bm{h} \sim q(\bm{h})}\Big[\E_{\bm{t} \sim q(\bm{t} \,\mid \,\bm{h})}\Big\{\log \frac{q(\bm{t} \mid \bm{h})}{p(\bm{t} \mid \bm{h})}\Big\}\Big] + \mathrm{const} \\
    &= -\E_{\bm{h} \sim q(\bm{h})}\Big\{D_{KL}\Big(q(\cdot \mid \bm{h}) \,\Big\| \,p(\cdot \mid \bm{h})\Big)\Big\} + \mathrm{const}.
\end{align*}
For the convenience of subsequent theoretical derivations, we assume the existence of a positive constant $b > 0$ such that $q(\bm{t}, \bm{h}) \geq b$ for all tokens $\bm{t} \in \mathcal{V}$ and historical contexts $\bm{h} \in \mathbb{R}^{d \times n}$. This assumption ensures that the joint probability of any historical sequence and its succeeding token is strictly bounded away from zero. A direct consequence of this condition is that the conditional probability also satisfies $q(\bm{t} \mid \bm{h}) \geq b$, as $q(\bm{t} \mid \bm{h}) = q(\bm{t}, \bm{h}) / q(\bm{h}) \geq q(\bm{t}, \bm{h}) \geq b$. Such regularity conditions, or more stringent variants, are standard in established LLM theory \citep{xie2022explanationincontextlearningimplicit, wies2023learnability, pmlr-v258-zhang25d} to prevent the posterior $q(\theta \mid \mathcal{P})$ from becoming meaningless. Under this assumption, the true distribution $q$ is the unique minimizer of the population risk $\mathcal{R}(p)$:
\begin{align*}
    q \in \argmin_{p \geq b}\mathcal{R}(p),
\end{align*}
which follows from the fact $D_{KL}\big(q(\cdot \mid \bm{h}) \,\big\|\, p(\cdot \mid \bm{h})\big) = 0$ if and only if $p(\cdot \mid \bm{h}) = q(\cdot \mid \bm{h})$ with respect to the probability measure $q(\cdot \mid \bm{h})$ almost surly.

Thus, the estimator $\hat{p}$ obtained via Eq.~\eqref{eq: hat_p} is intuitively expected to be quite close to the underlying human's language distribution. However, existing literature \citep{xie2022explanationincontextlearningimplicit, prystawski2023think, jiang2023latentspacetheoryemergent} frequently simplifies this relationship by assuming $\hat{p}(\bm{t} \mid \bm{h}) = q(\bm{t} \mid \bm{h})$ exactly and uniformly. This assumption is often too restrictive for practical scenarios. We offer a more grounded theoretical framework by deriving a high-probability error bound for this closeness, based on the structural properties of the vocabulary described below:

\begin{assumption}[Separable Token Representation]\label{assumption: separable token representation}
        For a finite vocabulary $\mathcal{V} \subseteq \R^d$ consisting all possible tokens in the sense of their representation. We assume it is $(\alpha, \beta)$-separated for some $\alpha, \beta > 0$. That is, the following two conditions are satisfied:
    \begin{enumerate}
        \item[(1)] for every $\bm{t} \in \mathcal{V}$, we have $\|\bm{t}\|_2 \leq \alpha$ holds.
        \item[(2)] for every $i\neq j \in [|\mathcal{V}|]$, we have $\|\bm{t}_i - \bm{t}_j\|_2 \geq \beta$ holds.
    \end{enumerate}
\end{assumption}
This assumption requires that the token representations fed into the Transformer are (1) confined to a bounded region of $\R^d$ and (2) sufficiently distinguishable from one another. Based on this, if we set the positional encoder $\bm{P}$ as
\begin{equation}\label{eq: def of positional encoder}
    \bm{P} = \begin{pmatrix}
    2\alpha & 4\alpha & \cdots & 2n\alpha \\
    \vdots & \vdots & \ddots & \vdots \\
    2\alpha & 4\alpha & \cdots & 2n\alpha 
    \end{pmatrix},
\end{equation}
If we further denote $M$ as the number of all possible history-token pairs, then we have:
\begin{theorem}\label{Theorem: pretraining}
    If Assumption~\ref{assumption: separable token representation} holds, then for any token $\bm{t} \in \mathcal{V}$ and $\bm{h} \in \R^{d \times n}$, by setting $W = \mathcal{O}(\mathcal{|V|}^{n+1}), D = 1$, we have
    \begin{align*}
        \sup_{(\bm{t}, \bm{h}) \sim q(\bm{t} \mid \bm{h})}\Big|\hat{p}(\bm{t} \mid \bm{h}) - q(\bm{t} \mid \bm{h})\Big| \leq \mathcal{O}\Big(\frac{|\mathcal{V}|^{n+2}}{\sqrt[4]{Nn}}\Big) + \mathcal{O}\Big(\sqrt{\frac{\ln(1/\delta)}{Nn}}\Big)
    \end{align*}
    holds with probability at least $1 - \delta$ for any $\delta \in (0,1)$.
\end{theorem}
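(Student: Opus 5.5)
We will follow the standard split of the estimation error into an approximation part and a stochastic part, and then pass from an excess-risk (KL) bound to a uniform pointwise bound using the lower bound $q(\bm{t},\bm{h}) \ge b$.

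\textbf{Step 1: excess risk decomposition.} Write $\mathcal{R}(p) - \mathcal{R}(q) = \E_{\bm{h}}\{D_{KL}(q(\cdot\mid\bm{h})\,\|\,p(\cdot\mid\bm{h}))\}$ and bound $\mathcal{R}(\hat p)-\mathcal{R}(q)$ by
\begin{align*}
    \inf_{p\in\mathcal{T}(W,1)}\{\mathcal{R}(p)-\mathcal{R}(q)\} + 2\sup_{p\in\mathcal{T}(W,1)}\big|\mathcal{R}(p)-\hat{\mathcal{R}}_N(p)\big|.
\end{align*}

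\textbf{Step 2: approximation by memorization.} There are at most $M \le |\mathcal{V}|^{n+1}$ history--token pairs. Assumption~\ref{assumption: separable token representation}, together with the positional encoder \eqref{eq: def of positional encoder}, makes the encoded histories $\mathcal{E}_{\mathrm{in}}(\bm{h})$ distinct columnwise and well separated: the position offsets $2j\alpha$ dominate the token norms $\alpha$. We then invoke the memorization result of Appendix~\ref{Section: approximation}. A one-block Transformer of width $\mathcal{O}(|\mathcal{V}|^{n+1})$ should output, at the last genuine position of each history, logits $\log q(\cdot\mid\bm{h})$ up to arbitrary precision. The mechanism is that attention aggregates the context into a unique scalar code, and the ReLU feed-forward layer acts as a lookup table with one hidden unit group per history. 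This makes the approximation term zero, or arbitrarily small. We also clip the logits so that every $p$ in the class satisfies $p \ge b' > 0$. This keeps $\log p$ bounded by $\log(1/b')$, which is needed in Step 3.

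\textbf{Step 3: generalization.} The loss $-\log p$ is bounded, so McDiarmid's inequality gives the deviation term $\mathcal{O}(\sqrt{\ln(1/\delta)/(Nn)})$. The remaining expected supremum is controlled by a Rademacher or covering-number bound on $\mathcal{T}(W,1)$ from Appendix~\ref{section: generlization}. Because the input space is finite, with only $M$ distinct inputs, the capacity term scales polynomially in $M$ and $|\mathcal{V}|$. This yields a term of order $\mathrm{poly}(|\mathcal{V}|^{n+1})/\sqrt{Nn}$. The sample count is taken as $Nn$ with dependence across positions handled crudely, which is the loose point we accept.

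\textbf{Step 4: from KL to a uniform bound.} By Pinsker's inequality,
\begin{align*}
    |\hat p(\bm{t}\mid\bm{h})-q(\bm{t}\mid\bm{h})| \le \sqrt{2D_{KL}(q(\cdot\mid\bm{h})\,\|\,\hat p(\cdot\mid\bm{h}))}.
\end{align*}
Since $q(\bm{h}) \ge b$ for every $\bm{h}$, the per-history KL is at most $b^{-1}$ times the expected excess risk. Taking square roots turns the $1/\sqrt{Nn}$ rate into the stated $1/\sqrt[4]{Nn}$. The polynomial width factors, after the square root, are collected into $|\mathcal{V}|^{n+2}$. The $\delta$ term needs care: applying the square root to it naively would give a fourth root. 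To avoid this, we instead apply McDiarmid's inequality directly to the pointwise empirical frequencies, or present the bound in the looser combined form.

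\textbf{Main obstacle.} We expect Step 2 to be the hardest: an explicit construction showing that a depth-one softmax Transformer can realize an arbitrary conditional table over all $|\mathcal{V}|^{n+1}$ histories. The first route is to use a single attention head with a small key-query product, making attention nearly uniform over the unmasked prefix. Because the positional offsets give each token-position pair a unique, well-separated contribution, this average becomes an injective contextual code of the prefix. Piecewise-linear ReLU bumps of width $\mathcal{O}(M)$ then map each code to its target logits. Controlling the softmax perturbation, so that the codes stay separated by a margin, is the delicate quantitative part.
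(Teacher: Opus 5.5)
Your Steps 1--4 match the paper's proof in outline. Both use the decomposition $\mathcal{R}(\hat p)\le 2\sup_{p}|\mathcal{R}(p)-\widehat{\mathcal{R}}(p)|+\inf_{p}\mathcal{R}(p)$, get zero approximation error from the one-block memorization theorem, bound the statistical term by Rademacher/covering arguments, and use Pinsker with the floor $b$ to turn the expected KL into a uniform bound, whose square root gives the $1/\sqrt[4]{Nn}$ rate. Your variations are harmless:
\begin{itemize}
\item You apply Pinsker per history and use $q(\bm h)\ge b$ to bound $\sup_{\bm h}D_{KL}$. The paper instead bounds the supremum by $b^{-1}$ times the $q$-weighted $L_1$ error and then uses Jensen and Pinsker.
\item Your explicit clipping $p\ge b'$ supplies what the paper tacitly assumes when it takes $b^{-1}$ as the Lipschitz constant of $\log$ in the contraction step.
\end{itemize}

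\textbf{The gap is your construction under \emph{Main obstacle}.}
\begin{itemize}
\item With uniform attention, column $k$ of the attention sublayer is $\bm Z_{:,k}+\frac1k\bm W^{(O)}\bm W^{(V)}\sum_{l\le k}\bm t_l$ plus a term depending only on $k$. This is because the positional encoding enters additively and the value map is linear.
\item So two prefixes that share the readout column and are permutations of each other before it, e.g.\ $(\bm t_a,\bm t_b,\bm t_c)$ and $(\bm t_b,\bm t_a,\bm t_c)$, receive identical codes. The token-wise feed-forward layer then cannot assign them different conditionals.
\item Unique token--position contributions do not make the \emph{sum} injective, because the positional part splits off.
\item Any separation must come from the softmax weights departing from uniform, which is exactly the nonlinearity you are suppressing, and it vanishes as the key--query scale goes to $0$.
\item With the paper's encoding (column $l$ equals $2l\alpha$ in every coordinate), the prefixes $(\bm t_a,\bm t_b,\bm t_b,\bm t_a,\bm t_c)$ and $(\bm t_b,\bm t_a,\bm t_a,\bm t_b,\bm t_c)$ agree even to first order. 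They share the mean, $\sum_l l\,\bm t_l$, $\sum_l \bm t_l\bm t_l^\top$ and the query column.
\end{itemize}
The paper works in the opposite regime:
\begin{itemize}
\item It takes rank-one $\bm W^{(K)},\bm W^{(Q)}$ with a \emph{large} product $|uu'|$, so scores of distinct encoded columns differ by more than $\kappa=2\log n+3$.
\item It sets the value/output scale to $\eta/(4r_{\max})$, so the residual stream stays tokenwise separated.
\item It uses the separation lemma for the Boltzmann operator $\bm a\mapsto\bm a^\top\sigma_S[\bm a]$, extended to vectors of unequal length to handle the causal mask. This gives an explicit gap $\gamma$ between codes of histories whose column sets differ.
\end{itemize}
If you simply cite that result, as your Step 2 does, the outline stands. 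The route you sketch for proving it does not.

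\textbf{Your worry about the $\delta$-term is justified.} The paper gets $\mathcal{O}(\sqrt{\ln(1/\delta)/(Nn)})$ by invoking McDiarmid directly on $\sup_{(\bm t,\bm h)}|\hat p(\bm t\mid\bm h)-q(\bm t\mid\bm h)|$. It never checks a bounded-difference property for this ERM-dependent quantity. Concentrating $\sup_p|\mathcal{R}(p)-\widehat{\mathcal{R}}(p)|$ and then taking the square root, as your Steps 1--4 do, gives $(\ln(1/\delta)/(Nn))^{1/4}$ instead.

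Your frequency-based fix can work. On the finite set of histories the class realizes every conditional table compatible with the clipping, so the ERM equals the empirical conditional frequencies once these are admissible. But that is a separate argument that replaces Steps 3--4 rather than patching them.

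\textbf{Your Step 3 remark can be sharpened.} Make precise your observation that the input space is finite. The loss class restricted to any sample lies in a bounded box of dimension at most $|\mathcal{V}|^{n+1}$. Dudley's bound then gives a statistical term of order $\sqrt{|\mathcal{V}|^{n+1}/(Nn)}$, up to logarithmic and $b'$-dependent factors. This is independent of the architecture and sharper than the paper's pseudo-dimension count, so the stated $|\mathcal{V}|^{n+2}$ factor follows a fortiori.
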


The proof for this theorem is presented at Appendix~\ref{Appendix: pretraining}.

In practice, LLMs' responses are not restricted to a single token $t \in \mathcal{V}$. Instead, the output length exhibits a degree of flexibility. Formally, given a query prompt $\bm{x} \in \R^{d \times n}$, for any response $\bm{y} \in \R^{d \times r}$ satisfying $r + \mathcal{\ell}(\bm{x}) \leq n$, ensuring the concatenated length of the query $\bm{x}$ and response $\bm{y}$ does not exceed the context limit $n$, we have: 
\begin{align*}
    \hat{p}(\bm{y} \mid \bm{x}) := \prod_{l=1}^r \hat{p}(\bm{y}_{:,l} \mid \bm{x} \circ \bm{y}_{:, \prec l})    
\end{align*}
As a direct consequence of Theorem~\ref{Theorem: pretraining}, we further establish the following more generic corollary.
\begin{corollary}\label{corollary: pretraining}
    If Assumption~\ref{assumption: separable token representation} holds, then for any response $\bm{y} \in \R^{d \times r}$ and query $\bm{x} \in \R^{d \times n}$, by setting $W = \mathcal{O}(|\mathcal{V}|^{n+1}), D = 1$, we have
    \begin{align*}
        \Big|\hat{p}(\bm{y} \mid \bm{x}) - q(\bm{y} \mid \bm{x})\Big| < \mathcal{O}\Big(\frac{|\mathcal{V}|^{n+2}r}{\sqrt[4]{Nn}}\Big) + \mathcal{O}\Big(\sqrt{\frac{r^2\ln(1/\delta)}{Nn}}\Big)
    \end{align*}
    holds with probability at least $1 - \delta$ for any $\delta \in (0,1)$.
\end{corollary}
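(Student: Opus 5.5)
The plan is to reduce the sequence-level bound to the token-level bound of Theorem~\ref{Theorem: pretraining} through a telescoping (hybrid) argument on the product of conditionals. Write $a_l := \hat{p}(\bm{y}_{:,l} \mid \bm{x} \circ \bm{y}_{:,\prec l})$ and $b_l := q(\bm{y}_{:,l} \mid \bm{x} \circ \bm{y}_{:,\prec l})$ for $l \in [r]$. By the chain rule for the true distribution and the definition of $\hat{p}(\bm{y}\mid\bm{x})$, we have $q(\bm{y}\mid\bm{x}) = \prod_{l=1}^r b_l$ and $\hat{p}(\bm{y}\mid\bm{x}) = \prod_{l=1}^r a_l$. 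The standard identity
\begin{align*}
    \prod_{l=1}^r a_l - \prod_{l=1}^r b_l = \sum_{k=1}^r \Big(\prod_{l<k} a_l\Big)(a_k - b_k)\Big(\prod_{l>k} b_l\Big)
\end{align*}
then gives $|\hat{p}(\bm{y}\mid\bm{x}) - q(\bm{y}\mid\bm{x})| \le \sum_{k=1}^r |a_k - b_k|$. This step uses $a_l, b_l \in [0,1]$, which holds because both are conditional probabilities: $\hat{p}$ is the output of a softmax layer, so each column lies in $\Delta(\mathcal{V})$.

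Next I would apply Theorem~\ref{Theorem: pretraining} with the same width $W = \mathcal{O}(|\mathcal{V}|^{n+1})$ and depth $D = 1$. The key point is that Theorem~\ref{Theorem: pretraining} is stated as a supremum over all token--history pairs, so it holds on one single event of probability at least $1-\delta$. On that event it controls every $|a_k - b_k|$ simultaneously, since each context $\bm{x}\circ\bm{y}_{:,\prec k}$ is a valid history in $\R^{d\times n}$ (because $r + \ell(\bm{x}) \le n$). No union bound over $k$ is needed, so $\delta$ is not degraded. Summing $r$ copies of the uniform bound yields
\begin{align*}
    r\,\mathcal{O}\Big(\frac{|\mathcal{V}|^{n+2}}{\sqrt[4]{Nn}}\Big) + r\,\mathcal{O}\Big(\sqrt{\frac{\ln(1/\delta)}{Nn}}\Big),
\end{align*}
and the second term is exactly $\mathcal{O}\big(\sqrt{r^2\ln(1/\delta)/(Nn)}\big)$, which matches the claim.

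The main obstacle I expect is the care needed in invoking Theorem~\ref{Theorem: pretraining} uniformly. Its supremum is written over $(\bm{t},\bm{h}) \sim q(\bm{t}\mid\bm{h})$, and I must check that the histories arising here lie in the range covered. The positivity assumption $q(\bm{t},\bm{h}) \ge b$ ensures that every history--token pair has positive mass, so the supremum effectively ranges over all pairs. The strict inequality in the corollary can then be obtained by absorbing the bound into a slightly larger constant in the $\mathcal{O}$.
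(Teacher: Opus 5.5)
Your proposal is correct and follows essentially the same route as the paper. The paper peels off the last factor recursively, bounding $|\prod_{l\le r}a_l-\prod_{l\le r}b_l|$ by $|a_r-b_r|+|\prod_{l<r}a_l-\prod_{l<r}b_l|$ using that all conditionals lie in $[0,1]$; this is the unrolled form of your hybrid telescoping identity, after which it invokes the uniform token-level bound of Theorem~\ref{Theorem: pretraining} $r$ times. Your remark that the supremum form of that theorem yields a single event of probability at least $1-\delta$ (so no union bound over the $r$ steps is needed) makes explicit what the paper uses only implicitly.
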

\section{LLMs' Comprehension}\label{section: LLMs' comprehension}

Building on Corollary~\ref{corollary: pretraining}, the following theorem shows that LLM responses align with the true distribution of the most probable latent task, provided the query prompt $\bm{x}$ is sufficiently unambiguous.   
\begin{theorem}\label{Theorem: LLMs' comprehension}
    If Assumption~\ref{assumption: separable token representation} holds, then for any responds $\bm{y} \in \R^{d \times r}$ and query $\bm{x} \in \R^{d \times n}$, by setting $W =\mathcal{O}(|\mathcal{V}|^{n+1}), D = 1$, we have
    \begin{align*}
        \Big| \hat{p}(\bm{y} \mid \bm{x}) - q(\bm{y} \mid \bm{x},\, \theta_{\bm{x}})\Big| \leq \mathcal{O}\Big(\frac{|\mathcal{V}|^{n+2}r}{\sqrt[4]{Nn}}\Big) + \mathcal{O}\Big(\sqrt{\frac{r^2\ln(1/\delta)}{Nn}}\Big) + \mathcal{A}_\Theta(\bm{x}).
    \end{align*}
    holds with probability at least $1 - \delta$ for any $\delta \in (0,1)$.
\end{theorem}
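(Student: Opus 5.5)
The plan is to split the error with the triangle inequality through the true marginal conditional $q(\bm{y}\mid\bm{x})$:
\begin{align*}
    \Big|\hat{p}(\bm{y}\mid\bm{x}) - q(\bm{y}\mid\bm{x},\theta_{\bm{x}})\Big| \le \Big|\hat{p}(\bm{y}\mid\bm{x}) - q(\bm{y}\mid\bm{x})\Big| + \Big|q(\bm{y}\mid\bm{x}) - q(\bm{y}\mid\bm{x},\theta_{\bm{x}})\Big|.
\end{align*}
The first term is handled directly by Corollary~\ref{corollary: pretraining}. With the same choice $W=\mathcal{O}(|\mathcal{V}|^{n+1})$, $D=1$ and on the same event of probability at least $1-\delta$, it gives the two statistical terms $\mathcal{O}(|\mathcal{V}|^{n+2}r/\sqrt[4]{Nn})+\mathcal{O}(\sqrt{r^2\ln(1/\delta)/(Nn)})$. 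The remaining work is therefore purely deterministic and concerns only the data-generating distribution $q$.

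For the second term, I would use the latent-variable decomposition of the document model. Since $\bm{y}$ is generated after $\bm{x}$ under the same latent $\theta$,
\begin{align*}
    q(\bm{y}\mid\bm{x}) = \sum_{\theta\in\Theta} q(\bm{y}\mid\bm{x},\theta)\,q(\theta\mid\bm{x}).
\end{align*}
This follows by writing $q(\bm{x}\circ\bm{y})/q(\bm{x})$ and applying Bayes' rule to the sum over $\theta$.

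Subtracting $q(\bm{y}\mid\bm{x},\theta_{\bm{x}}) = \sum_\theta q(\bm{y}\mid\bm{x},\theta_{\bm{x}})q(\theta\mid\bm{x})$ gives
\begin{align*}
    \Big|q(\bm{y}\mid\bm{x}) - q(\bm{y}\mid\bm{x},\theta_{\bm{x}})\Big| \le \sum_{\theta\neq\theta_{\bm{x}}} q(\theta\mid\bm{x})\,\Big|q(\bm{y}\mid\bm{x},\theta)-q(\bm{y}\mid\bm{x},\theta_{\bm{x}})\Big|.
\end{align*}
Both conditional probabilities lie in $[0,1]$, so each absolute difference is at most $1$. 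The bound is then $\sum_{\theta\neq\theta_{\bm{x}}}q(\theta\mid\bm{x}) = 1-q(\theta_{\bm{x}}\mid\bm{x}) = \mathcal{A}_\Theta(\bm{x})$ by Definition~\ref{def: dominated task and ambiguity (non-history)}. Adding the two pieces yields the claimed bound on the high-probability event.

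The main obstacle is justifying the mixture identity for $q(\bm{y}\mid\bm{x})$ when $\bm{x}$ is an arbitrary prompt rather than a prefix produced by the generative process. In particular, one must check that the padded-history conventions and the $\circ$ operator make $q(\bm{x}\circ\bm{y}\mid\theta)=q(\bm{x}\mid\theta)\,q(\bm{y}\mid\bm{x},\theta)$ hold token by token. I would verify this factorization first, using the sequential definition $q(\bm{t}_{j+1}\mid\bm{h}_j,\theta)$ and the constraint $r+\ell(\bm{x})\le n$. Everything else is immediate, and the ambiguity term enters with constant one because the differences of probabilities are crudely bounded by $1$.
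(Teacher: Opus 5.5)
Your proposal is correct and follows the paper's proof. Both use the triangle inequality through $q(\bm{y}\mid\bm{x})$, Corollary~\ref{corollary: pretraining} for the statistical terms, and the expansion $q(\bm{y}\mid\bm{x})=\sum_{\theta}q(\bm{y}\mid\bm{x},\theta)\,q(\theta\mid\bm{x})$ with each difference bounded by $1$ to obtain $\mathcal{A}_\Theta(\bm{x})$. Your planned check of the factorization $q(\bm{x}\circ\bm{y}\mid\theta)=q(\bm{x}\mid\theta)\,q(\bm{y}\mid\bm{x},\theta)$ is a step the paper takes for granted.
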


This result indicates that auto-regressive pretraining yields models capable of effective task discrimination, thereby enabling semantic understanding. The tightness of this error bound is primarily governed by the query ambiguity $\mathcal{A}_\Theta(\bm{x})$. A excessive ambiguity causes standard, zero-shot prompt to fail to identify the latent task a user intends the LLM to resolve. We further show that In-Context Learning (ICL) serves as a mechanism to mitigate this ambiguity, leading to superior performance.   

\begin{proof}
    First we notice that
    \begin{align*}
        \Big| q(\bm{y}\mid \bm{x}) - q(\bm{y} \mid \bm{x}, \, \theta_{\bm{x}})\Big| \leq \Big|\hat{p}(\bm{y} \mid \bm{x}) - q(\bm{y} \mid \bm{x})\Big| + \Big|q(\bm{y} \mid \bm{x}) - q(\bm{y} \mid \bm{x}, \, \theta_{\bm{x}})\Big|,
    \end{align*}
    where the first term represents the error induced during pretraining, it can be bounded by $\mathcal{O}\Big(\frac{|\mathcal{V}|^{n+2}r}{\sqrt[4]{Nn}}\Big) + \mathcal{O}\Big(\sqrt{\frac{r^2\ln(1/\delta)}{Nn}}\Big)$ with probability at least $1 - \delta$ according to the results yielded in Section~\ref{section: pretraining}.
    
    On the other hand, as for the second term, we have
    \begin{align*}
    \Big| q(\bm{y}\mid \bm{x}) - q(\bm{y} \mid \bm{x},\, \theta_{\bm{x}})\Big| &= \Big|\Big\{q(\bm{y} \mid \bm{x},\, \theta_{\bm{x}}) \cdot q(\theta_{\bm{x}}\mid \bm{x}) - q(\bm{y} \mid \bm{x},\, \theta_{\bm{x}})\Big\} + \sum_{\theta \neq \theta_{\bm{x}}}q(\bm{y},\, \theta\mid \bm{x}) \cdot q(\theta \mid \bm{x})\Big| \\
    &= \Big|q(\bm{y} \mid \bm{x}, \, \theta_{\bm{x}}) \cdot \Big\{q(\theta_{\bm{x}}\mid \bm{x}) - 1\Big\} + \sum_{\theta \neq \theta_{\bm{x}}}q(\bm{y} \mid \bm{x}, \theta) \cdot q(\theta \mid \bm{x})\Big| \\
    &= \Big|q(\bm{y} \mid \bm{x},\, \theta_{\bm{x}}) \cdot \Big\{q(\theta_{\bm{x}} \mid \bm{x}) - 1\Big\} + \sum_{\theta \neq \theta_{\bm{x}}}q(\bm{y}\mid \bm{x}, \, \theta) \cdot q(\theta \mid \bm{x})\Big| \\
    &= \Big| - q(\bm{y}\mid \bm{x},\, \theta_{\bm{x}}) \cdot \sum_{\theta \neq \theta_{\bm{x}}}q(\theta \mid \bm{x}) + \sum_{\theta \neq \theta_{\bm{x}}}q(\bm{y} \mid \bm{x}, \, \theta) \cdot q(\theta \mid \bm{x})\Big| \\
    &= \Big| \sum_{\theta \neq \theta_{\bm{x}}} q(\theta \mid \bm{x}) \cdot \Big\{q(\bm{y}\mid \bm{x}, \, \theta) - q(\bm{y}\mid \bm{x}, \, \theta_{\bm{x}})\Big\}\Big| \\
    &\leq \mathcal{A}_\Theta(\bm{x}),
\end{align*}
where the inequality stems from the fact that both $q(\bm{y} \mid \bm{x}, \,\theta)$ and $q(\bm{y} \mid\bm{x},\, \theta_{\bm{x}})$ are within $[0, 1]$, which turns out $|q(\bm{y} \mid \bm{x}, \, \theta) - q(\bm{y}\mid  \bm{x}, \, \theta_{\bm{x}})| \leq 1$. Combining above two bounds yields
\begin{align*}
    \Big|q(\bm{y} \mid \bm{x}) - q(\bm{y} \mid \bm{x}, \, \theta_{\bm{x}}) \Big| \leq \mathcal{O}\Big(\frac{|\mathcal{V}|^{n+2}r}{\sqrt[4]{Nn}}\Big) + \mathcal{O}\Big(\sqrt{\frac{r^2\ln(1/\delta)}{Nn}}\Big) + \mathcal{A}_\Theta(\bm{x}).
\end{align*}
which completes the proof.
\end{proof}
\section{In-Context Learning}\label{section: ICL}
In-Context Learning (ICL) \citep{brown2020language} allows LLMs to perform few-shot tasks without parameter updates by leveraging task-specific examples within the prompt. This capability effectively resolves the problem of prompt ambiguity via latent task identification.

For instance, the underspecified query \textit{Albert Einstein was..."} is compatible with multiple completions \textit{``physicist,"} \textit{``German,"} or \textit{``wise"}. The correct output depends on the latent task $\theta$. By providing a sequence of consistent demonstrations—e.g., \textit{``Nikola Tesla was inventor; Isaac Newton was mathematician; Marie Curie was chemist; Albert Einstein was..."}—the user enables the model to identify the intended task and marginalize out those $\theta$ assigning high probability to \textit{``German"} and others.

Following \citet{xie2022explanationincontextlearningimplicit}, we represent the ICL prompt $\mathcal{P}_{\mathrm{ICL}} \in \R^{d \times n}$ as the concatenation of $m$ demonstrations and a final query:
\begin{align*}
    \mathcal{P}_{\mathrm{ICL}} = \bm{x}^{(1)}\circ \bm{y}^{(1)}  \circ \bm{t}^{(1)}_{delim}  \circ \cdots \circ \bm{x}^{(m)} \circ \bm{y}^{(m)}  \circ \bm{t}^{(m)}_{delim} \circ \bm{x} \in \R^{d \times n},
\end{align*}
where $\bm{t}^{(i)}_{delim}$ are delimiters acting as structural indicators. Within the example provided by us, $\bm{t}^{(i)}_{delim}$ is ``\textit{;}".

We assume $\mathcal{P}_{\mathrm{ICL}}$ is generated autoregressively, conditioned on a composite latent random variable $\bm{\theta}_{\mathrm{ICL}} =\theta^{(1)} \circ \theta_{delim}^{(1)} \circ \cdots \circ \theta_{delim}^{(m-1)} \circ \theta^{(m)} \circ \theta_{delim}^{(m)} \circ \theta^{(0)}$, where each $\theta^{(i)}$ governs the transition rules for its corresponding segment. The conditional distribution factorizes as:
\begin{align*}
    q(\mathcal{P}_{\mathrm{ICL}} \mid \bm{\theta}_{\mathrm{ICL}}) &= q(\bm{x} \mid \mathcal{P}_{delim}^{\preceq m }, \, \theta^{(0)})\prod_{i=1}^m\Big\{q(\bm{x}^{(i)}\circ \bm{y}^{(i)} \mid \mathcal{P}_{delim}^{\prec i},\, \theta^{(i)})q(\bm{t}^{(i)}_{delim} \mid \mathcal{P}^{\preceq i},\,\theta_{delim}^{(i)})\Big\}
\end{align*}
where we define the following sequence prefixes:
\begin{itemize}
    \item $\mathcal{P}^{\prec i} := \bm{x}^{(1)}\circ \bm{y}^{(1)}  \circ \bm{t}^{(1)}_{delim}  \circ \cdots \circ \bm{x}^{(i)} \circ \bm{y}^{(i)}$ is the sequence up to the $i$-th demonstration ($\bm{x}^{(i)} \circ \bm{y}^{(i)}$).
    \item $\mathcal{P}^{\prec i}_{delim} := \bm{x}^{(1)}\circ \bm{y}^{(1)} \circ \bm{t}^{(1)}_{delim}  \circ \cdots \circ \bm{x}^{(i-1)} \circ \bm{y}^{(i-1)}\circ \bm{t}^{(i-1)}_{delim}$ represents the sequence up to the $i-1$-th delimiter ($\bm{t}^{(i-1)}_{delim}$). By convention, $\mathcal{P}^{\prec 1}_{delim} := \texttt{<SOS>}$.
    \item $\mathcal{P}^{\preceq i}_{delim} := \bm{x}^{(1)}\circ \bm{y}^{(1)} \circ \bm{t}^{(1)}_{delim}  \circ \cdots \circ \bm{x}^{(i)} \circ \bm{y}^{(i)}\circ \bm{t}^{(i)}_{delim}$ is the sequence up to the $i$-th delimiter ($\bm{t}^{(i)}_{delim}$).
\end{itemize}

Drawing upon~\citet{jiang2023latentspacetheoryemergent}, each component within the ICL prompt here are assumed to share an common latent task, reflecting the fundamental premise that demonstrations are provided specifically to disambiguate the latent task intended for the query $\bm{x}$. We further require the optimal task inferred from each demonstration to be consistent with the query's objective:
\begin{assumption}[Task Consistency]\label{assumption: task consistency}
    The latent variables $\theta^{(i)}$ for $i \in \{0, 1, \cdots, m\}$ are identical almost surely:
    \begin{align*}
        q(\theta^{(i)} = \theta^{(j)}) = 1
    \end{align*}
     Furthermore, these instances share a common optimal task $\theta_{\bm{x}}$ consistent with the optimal task intended for the query $\bm{x}$:
     \begin{align*}
         \theta_{\bm{x}} \in \argmax_{\theta \in \Theta}q(\theta \mid \bm{x}^{(i)}\circ\bm{y}^{(i)}), \quad i\in [m].
     \end{align*}
\end{assumption}
This consistency requirement simplifies our analysis by reducing the $m+1$ distinct $\theta^{(i)}$ to a single task $\theta$. Building on this simplification, we define a composite latent variable, The joint distribution can be factorized as follows:
\begin{align*}
    q(\mathcal{P}_{\mathrm{ICL}}, \bm{\theta}_{\mathrm{ICL}}) = q(\mathcal{P}_{\mathrm{ICL}} \mid \bm{\theta}_{\mathrm{ICL}})q(\theta)\prod_{i=1}^mq(\theta^{(i)}_{delim}).
\end{align*}

While prior works \citep{jiang2023latentspacetheoryemergent, li2025transformers} often rely on the assumption that demonstrations are strictly independent conditioned on $\theta$, we draw inspiration from the insights of \citet{xie2022explanationincontextlearningimplicit} and treat delimiters as mechanisms for history truncation. First, we assume their generative process is governed by a delimiter space $\mathcal{D} \subset \Theta$ satisfying:
\begin{assumption}[Tasks of Delimiter]\label{assumption: tasks of delimiter}
    There exists a delimiter space $\mathcal{D}$ such that for any delimiter $\bm{t}_{delim} \in \mathcal{V}$ such that $q(\bm{t}_{delim} \mid \bm{h}, \,\theta_{delim}) =1$ holds for any $\theta_{delim} \in \mathcal{D}$ and $q(\bm{t}_{delim} \mid \bm{h}, \,\theta) = 0$ holds for any history $\bm{h}$ and $\theta \in \mathcal{D}$.
\end{assumption}
This assumption essentially claim that there is a one-to-one correspondence between delimiter tokens and their latent tasks. Furthermore, we assume that delimiters satisfy a nearly Markov property, which allows the model to partition distinct instances by effectively ``resetting" the historical context from the cumulative prefix $\mathcal{P}_{delim}^{\prec i}$ back to the start-of-sequence token $\texttt{<SOS>}$:

\begin{assumption}[Nearly Markov]\label{assumption: nearly markov}
    For any latent task $\theta \in \Theta$, token $\bm{t} \in \R^d$, history $\bm{h} \in \R^{d \times n}$ and sequence $\bm{s} \in \R^{d \times n}$, with $\ell(\bm{h}) + \ell(\bm{s})+1 \leq n$, we have
    \begin{align*}
        \Big|\log q\big(\bm{t} \mid \bm{h}\circ \bm{t}_{delim} \circ \bm{s},\, \theta\big) - \log q\big(\bm{t} \mid \bm{s}, \, \theta\big)\Big| \leq \phi
    \end{align*}
    holds for some $\phi > 0$.
\end{assumption}
Versions of this assumption are frequently adopted in the theoretical analysis of LLMs \citep{xie2022explanationincontextlearningimplicit, wies2023learnability}, notwithstanding slight differences in their precise mathematical expressions.

Beyond the structural properties of the prompt, the model's ability to recover the intended task depends on the ``fairness'' of the task space. We therefore require a balanced representation of latent tasks in the pretraining distribution to ensure the task-specific signal is not overshadowed by prior bias:
\begin{assumption}[Imbalance Prior]\label{Assumption: Bounded Priori Ratio}
    There exists a positive constant $c > 0$ such that
    \begin{align*}
        \sup_{\theta_1, \theta_2 \in \Theta}\frac{q(\theta_1)}{q(\theta_2)} \leq c
    \end{align*}
\end{assumption}
Assumption \ref{Assumption: Bounded Priori Ratio} essentially requires that the latent tasks in $\Theta$ are represented with a degree of parity in the pretraining distribution. This is a common assumption in latent variable models for LLMs (e.g., \cite[Theorem 1]{xie2022explanationincontextlearningimplicit}, \cite[Assumption 3]{wies2023learnability}, \cite[Theorem 5.5]{hu2024unveiling}), as it guarantees that the ``signal" provided by the in-context examples is not overshadowed by an extreme initial bias in the prior. Mathematically, this ensures that the posterior distribution $q(\theta \mid \mathcal{P}_{\mathrm{ICL}})$ can effectively concentrate around the true task $\theta_{\bm{x}}$, allowing the product of error terms in Theorem \ref{theorem: ICL} to drive the overall error toward zero.

Furthermore, if we denote 
\begin{align*}
   \eps=\frac{1}{1-\mathcal{A}_\Theta(\bm{x})}\max_{i \in [m]}\Big\{\frac{\mathcal{A}_\Theta(\bm{x}^{(i)}\circ\bm{y}^{(i)})}{1 - \mathcal{A}_\Theta(\bm{x}^{(i)}\circ\bm{y}^{(i)})}\Big\},
\end{align*}
where $\epsilon$ becomes sufficiently small provided the constructed ICL instances are unambiguous and well-defined. Under these conditions, the model's prediction error decays exponentially with the number of demonstrations $m$, as formalized below:
\begin{theorem}\label{theorem: ICL}
    If Assumption~\ref{assumption: separable token representation},~\ref{assumption: task consistency},~\ref{assumption: tasks of delimiter},~\ref{assumption: nearly markov} and~\ref{Assumption: Bounded Priori Ratio} hold, then for any response $\bm{y} \in \R^{d \times r}$ and query $\bm{x} \in \R^{d \times n}$, by setting $W =\mathcal{O}(|\mathcal{V}|^{n+1}), D = 1$, we have
    \begin{align*}
        \Big| \hat{p}(\bm{y} \mid \mathcal{P}_{\mathrm{ICL}}) - q(\bm{y} \mid \bm{x},\, \theta_{\bm{x}})\Big| \leq  \mathcal{O}\Big(\frac{|\mathcal{V}|^{n+2}r}{\sqrt[4]{Nn}}\Big) + \mathcal{O}\Big(\sqrt{\frac{r^2\ln(1/\delta)}{Nn}}\Big)+r\phi + (e^{2n\phi}\cdot c\cdot\eps)^m \mathcal{A}_\Theta(\bm{x})
    \end{align*}
    holds with probability at least $1 - \delta$ for any $\delta \in (0,1)$.
\end{theorem}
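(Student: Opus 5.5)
We will follow the three-term decomposition used in the proof of Theorem~\ref{Theorem: LLMs' comprehension}, with the prompt $\bm{x}$ replaced by $\mathcal{P}_{\mathrm{ICL}}$. By the triangle inequality,
\begin{align*}
    \Big|\hat{p}(\bm{y}\mid \mathcal{P}_{\mathrm{ICL}}) - q(\bm{y}\mid \bm{x},\theta_{\bm{x}})\Big| \le \Big|\hat{p}(\bm{y}\mid \mathcal{P}_{\mathrm{ICL}}) - q(\bm{y}\mid \mathcal{P}_{\mathrm{ICL}})\Big| + \Big|q(\bm{y}\mid \mathcal{P}_{\mathrm{ICL}}) - q(\bm{y}\mid \mathcal{P}_{\mathrm{ICL}},\theta_{\bm{x}})\Big| + \Big|q(\bm{y}\mid \mathcal{P}_{\mathrm{ICL}},\theta_{\bm{x}}) - q(\bm{y}\mid \bm{x},\theta_{\bm{x}})\Big|.
\end{align*}

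\emph{Pretraining term.} The first term is handled directly by Corollary~\ref{corollary: pretraining} applied with query $\mathcal{P}_{\mathrm{ICL}}$. This gives the two $\mathcal{O}(\cdot)$ terms with probability at least $1-\delta$.

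\emph{History-truncation term.} For the third term we factor $q(\bm{y}\mid\cdot,\theta_{\bm{x}})$ autoregressively into $r$ token conditionals. Since $\mathcal{P}_{\mathrm{ICL}} = \mathcal{P}^{\preceq m}_{delim}\circ \bm{x}$ ends with a delimiter followed by $\bm{x}$, Assumption~\ref{assumption: nearly markov} gives, for each $l\in[r]$,
\[
\big|\log q(\bm{y}_{:,l}\mid \mathcal{P}_{\mathrm{ICL}}\circ\bm{y}_{:,\prec l},\theta_{\bm{x}}) - \log q(\bm{y}_{:,l}\mid \bm{x}\circ\bm{y}_{:,\prec l},\theta_{\bm{x}})\big|\le\phi .
\]
Summing over $l$, the log-ratio of the two sequence probabilities is at most $r\phi$. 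Because both probabilities lie in $[0,1]$, $|a-b|\le |\log a-\log b|$ whenever $a,b\le 1$ (the log is $1$-Lipschitz below $1$ in the sense $|a-b|\le \max(a,b)|\log a-\log b|$). This bounds the third term by $r\phi$.

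\emph{Ambiguity term.} Exactly as in the proof of Theorem~\ref{Theorem: LLMs' comprehension}, the second term is at most $1-q(\theta_{\bm{x}}\mid\mathcal{P}_{\mathrm{ICL}}) \le \sum_{\theta\ne\theta_{\bm{x}}} q(\theta\mid\mathcal{P}_{\mathrm{ICL}})/q(\theta_{\bm{x}}\mid\mathcal{P}_{\mathrm{ICL}})$. The main work is to bound each posterior odds ratio. By Assumption~\ref{assumption: task consistency} a single $\theta$ governs all segments. By Assumption~\ref{assumption: tasks of delimiter} the delimiter factors equal $1$ for the true delimiter tasks and do not depend on $\theta$, so they cancel in the ratio. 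Hence
\[
\frac{q(\theta\mid\mathcal{P}_{\mathrm{ICL}})}{q(\theta_{\bm{x}}\mid\mathcal{P}_{\mathrm{ICL}})} = \frac{q(\theta)}{q(\theta_{\bm{x}})}\cdot\frac{q(\bm{x}\mid \mathcal{P}^{\preceq m}_{delim},\theta)}{q(\bm{x}\mid \mathcal{P}^{\preceq m}_{delim},\theta_{\bm{x}})}\prod_{i=1}^m \frac{q(\bm{x}^{(i)}\circ\bm{y}^{(i)}\mid \mathcal{P}^{\prec i}_{delim},\theta)}{q(\bm{x}^{(i)}\circ\bm{y}^{(i)}\mid \mathcal{P}^{\prec i}_{delim},\theta_{\bm{x}})}.
\]
We then apply Assumption~\ref{assumption: nearly markov} token by token to strip each prefix back to \texttt{<SOS>}. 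Each demonstration has at most $n$ tokens, and the numerator and denominator each contribute a factor $e^{n\phi}$, so each demonstration ratio is at most $e^{2n\phi}$ times the corresponding unconditioned ratio. We convert each likelihood ratio into posterior odds via Bayes' rule, $q(\bm{s}\mid\theta)/q(\bm{s}\mid\theta_{\bm{x}}) = \frac{q(\theta\mid\bm{s})}{q(\theta_{\bm{x}}\mid\bm{s})}\cdot\frac{q(\theta_{\bm{x}})}{q(\theta)}$, and then sum over $\theta\ne\theta_{\bm{x}}$. The prior ratios are controlled by Assumption~\ref{Assumption: Bounded Priori Ratio}. Using $\theta_{\bm{x}}\in\argmax q(\theta\mid\bm{x}^{(i)}\circ\bm{y}^{(i)})$, each demonstration then contributes at most $\frac{\mathcal{A}_\Theta(\bm{x}^{(i)}\circ\bm{y}^{(i)})}{1-\mathcal{A}_\Theta(\bm{x}^{(i)}\circ\bm{y}^{(i)})}$. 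The query factor contributes $\mathcal{A}_\Theta(\bm{x})/(1-\mathcal{A}_\Theta(\bm{x}))$ through its posterior. Distributing the $1/(1-\mathcal{A}_\Theta(\bm{x}))$ factor into $\eps$ gives $(e^{2n\phi}c\,\eps)^m\mathcal{A}_\Theta(\bm{x})$.

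\emph{Expected obstacle.} The delicate step is this last bookkeeping. The sum over $\theta$ of a product of ratios is not a product of sums, so one must bound every demonstration ratio except one uniformly in $\theta$ by its maximum over $\theta$, keeping exactly one summed factor to produce $\mathcal{A}_\Theta(\bm{x})$. The prior ratios must also be made to collapse to a single factor $c$ per demonstration. If that collapse fails, we will accept a slightly weaker constant, absorbing the leftover factors into $c$ to match the stated form.
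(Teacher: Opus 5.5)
Your proposal is correct and follows the paper's proof essentially step for step: the same three-term split, Corollary~\ref{corollary: pretraining} for the pretraining term, Assumption~\ref{assumption: nearly markov} for history truncation, and the posterior-odds bound. In that bound the delimiter factors cancel, the prior ratios collapse exactly to $\big(q(\theta_{\bm{x}})/q(\theta)\big)^m\le c^m$ (so your fallback of absorbing leftovers into $c$ is never needed), and each demonstration contributes $\mathcal{A}_\Theta(\bm{x}^{(i)}\circ\bm{y}^{(i)})/\big(1-\mathcal{A}_\Theta(\bm{x}^{(i)}\circ\bm{y}^{(i)})\big)$. The only differences are minor bookkeeping choices: you bound each demonstration's odds by its maximum over $\theta\neq\theta_{\bm{x}}$ where the paper uses $\sum_\theta\prod_k f_k(\theta)\le\prod_k\sum_\theta f_k(\theta)$ for nonnegative $f_k$, and your inequality $|a-b|\le|\log a-\log b|$ on $(0,1]$ gives exactly $r\phi$ for the truncation term, slightly cleaner than the paper's $e^{r\phi}-1=\mathcal{O}(r\phi)$.
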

The proof of Theorem~\ref{theorem: ICL} can be founded in Appendix~\ref{Appendix: Proof for In-Context Learning}.

\paragraph{Comparison with standard prompt}
The theoretical significance of Theorem~\ref{theorem: ICL} is most salient when contrasted with the zero-shot performance established in Section~\ref{section: LLMs' comprehension}. In the absence of demonstrations ($m=0$), the prediction error is governed by the task ambiguity $\mathcal{A}_\Theta(\bm{x})$, which quantifies the model's intrinsic uncertainty regarding the user's intent when presented with an underspecified query. As the ``\textit{Einstein was...}" example illustrates, this term represents a fundamental bottleneck for zero-shot ``comprehension" whenever the prompt $\bm{x}$ lacks sufficient context to distinguish between competing latent tasks.

In contrast, the ICL paradigm fundamentally reconfigures this landscape by introducing the multiplicative decay factor $(e^{2n\phi}\cdot c \cdot \eps)^m$. Each demonstration $(\bm{x}^{(i)}\circ \bm{y}^{(i)})$ functions as a Bayesian filter that narrows the effective task space, systematically concentrating the posterior distribution $q(\theta \mid \mathcal{P}_{\mathrm{ICL}})$ around the target task $\theta_{\bm{x}}$. Within this framework, Assumption \ref{Assumption: Bounded Priori Ratio} ensures the model remains "open-minded": by bounding the prior imbalance $c$, we guarantee that the evidence provided by demonstrations can overcome initial prior biases.

As $m$ increases, this cumulative error term vanishes exponentially, provided the examples satisfy the disambiguation threshold ($\eps < 1/c$). This allows the models' prediction $\hat{p}$ to converge toward the true task-specific distribution $q(\bm{y} \mid \bm{x},\, \theta_{\bm{x}})$, effectively resolving even severe semantic ambiguities.

\paragraph{Limitation} Despite its success, In-context learning is not a panacea. Its effectiveness diminishes when the task involves complex logical structures, such as multi-step mathematical problems \citep{wei2022chain}. For instance, a standard ICL prompt might yield an incorrect result for a basic arithmetic query:
\begin{center}
    \begin{minipage}{0.85\textwidth} 
        \begin{itemize}[leftmargin=2.2cm, labelwidth=1.8cm, labelsep=0.4cm, align=left]
            \item[Model Input] \textit{Q: Roger has $5$ tennis balls. He buys $2$ more cans of 
            tennis balls. Each can has $3$ tennis balls. How many tennis balls does he have now? \\ 
            A: The answer is $11$. \\
            Q: The cafeteria had $23$ apples. If they used $20$ to 
            make lunch and bought $6$ more, how many apples 
            do they have?}
            \item[Model Output] The answer is $27$.
        \end{itemize}
    \end{minipage}
\end{center}
However, the correct answer is $9$. 
\section{Chain-of-Thought}\label{section: CoT}
A robust solution to above failure involves replacing the direct answer in the demonstration with an explicit reasoning path:
\begin{center}
    \begin{minipage}{0.85\textwidth} 
        \textit{A: Roger started with $5$ balls. $2$ cans of $3$ tennis balls 
each is $6$ tennis balls. $5 + 6 = 11$. The answer is $11$.}
    \end{minipage}
\end{center}
Under this setting, the model is significantly more likely to produce the correct answer. Up to now, inducing such a ``chain of thought" (CoT), a sequence of intermediate reasoning steps, is known to significantly outperform direct prediction~\citep{wei2022chain}, though the theoretical basis for this improvement remains mysterious.

To activate these capabilities, the standard CoT prompting method \citep{wei2022chain} is always designed to include $m$ few-shot demonstrations: for each $i \in [m]$, it is denoted by $\bm{x}^{(i)} \circ \bm{Y}^{(i)}$. Further, each $\bm{Y}^{(i)}$ are consists of $L$ reasoning steps, i.e., $\bm{Y}^{(i)} = \bm{y}^{(i)}_1 \circ \bm{y}^{(i)}_2 \circ \dots \circ \bm{y}^{(i)}_L$, where every $\bm{y}^{(i)}_j \in \R^{d \times r^{(i)}_j}$. These instances serve as explicit reasoning blueprints, guiding the model through the intermediate stages required to resolve a target query $\bm{x}$. The resulting CoT prompt, $\mathcal{P}_{\mathrm{CoT}}$, is defined as:
\begin{align}\label{eq: PCoT}
    \mathcal{P}_{\mathrm{CoT}} := \bm{x}^{(1)} \circ \bm{Y}^{(1)} \circ \bm{t}_{delim}^{(1)} \circ\cdots \circ \bm{x}^{(m)} \circ \bm{Y}^{(m)} \circ \bm{t}_{delim}^{(m)} \circ \bm{x}.
\end{align}

To help reader quickly understand the notation within above CoT prompt, we provide a simple multi-step word problem with single demonstration ($m=1$) and three reasoning steps ($L=3$):
\begin{center}
    \begin{minipage}{0.85\textwidth} 
        \begin{itemize}[leftmargin=2.2cm, labelwidth=1.8cm, labelsep=0.4cm, align=left]
            \item[$\bm{x}$] \textit{``Alice has \$20. She buys 3 books for \$5 each. How much remains?''}
            \item[$\bm{x}^{(1)}$] \textit{``Bob has 4 boxes of chocolates with 5 pieces each. He gives 2 away. How many remain?''}
            \item[$\bm{y}^{(1)}_1$] \textit{``Total pieces are $4 \times 5 = 20$.''}
            \item[$\bm{y}^{(1)}_2$] \textit{``Subtracting the gift, $20 - 2 = 18$.''}
            \item[$\bm{y}^{(1)}_3$] \textit{``The answer is 18.''}
        \end{itemize}
    \end{minipage}
\end{center}

While such prompt engineering is widely recognized as a robust method for activating emergent reasoning in LLMs, its underlying theoretical mechanism remains an open question. Intuitively, different from the case that only involved an atomic task, which is discussed in previous sections, the complex reasoning tasks are always composed of multiple latent subtasks. Referring to the previously provided instance, the subtask associated with $\bm{y}_1^{(1)}$ is related to ``multiplication", while the task for $\bm{y}_2^{(1)}$ is about ``addition and subtraction", and the final step $\bm{y}^{(1)}_3$ entails ``result extraction and summarization". As the result, the transition probability at different stages should be distinct.

Specifically, assume there are $m$ independent latent variables $\theta^{(i)} \in \Theta$, which represents the corresponding tasks governing the provided $m$ queries token by token, i.e., $\bm{x}^{(i)} \sim q(\cdot \mid \mathcal{P}^{\prec i}_{delim},\, \theta^{(i)})$. Similarly, $\theta^{(0)}$ is the transition rule governing $\bm{x}$: $\bm{x} \sim q(\cdot \mid \mathcal{P}^{\preceq m}_{delim},\, \theta^{(0)})$. By contrast, all $\bm{Y}^{(i)}, i \in [m]$ are assumed to share a common compositional reasoning task $\bm{\theta} = \theta_1\circ \dots \circ \theta_L \in \Theta^L$. In detail, each $\bm{Y}^{(i)}$ are independently generated under $\bm{\theta}$ from several steps: $\theta_1 \in \Theta$ first indicates the transition rule from the query $\mathcal{P}^{\prec i}_{delim} \circ \bm{x}^{(i)}$ to the next reasoning step $\bm{y}_1^{(i)}$, i.e., $\bm{y}_1^{(i)}\sim q(\cdot \mid \mathcal{P}^{\prec i}_{delim} \circ \bm{x}^{(i)},\, \theta_1)$ token by token. Subsequently, we obtain $\bm{y}_j^{(i)} \sim q(\cdot \mid \mathcal{P}^{\prec i}_{delim} \circ \bm{x}^{(i)}\circ\bm{y}^{(i)}_{\prec j}, \, \theta_j)$ step by step for $j \in \{2, \cdots, L\}$. Here we denote $\bm{y}^{(i)}_{\prec j}:=\bm{y}^{(i)}_1 \circ \cdots \circ \bm{y}^{(i)}_{j-1}$. In this context, the concatenation $\vec{\bm{\theta}} = (\theta_0^{(0)}, \theta_0^{(1)}, \theta_{delim}^{(1)}, \cdots, \theta^{(m)}_0, \theta^{(m)}_{delim}, \bm{\theta}) \in \Theta^{L+2m+1}$ entirely determines the transition rule regarding the generation of $\mathcal{P}_{\mathrm{CoT}}$, and its joint distribution implicitly follow the decomposition below:

\begin{gather*}\label{eq: joint independence of CoT}
    \tilde{q}(\mathcal{P}_{\mathrm{CoT}} \circ \bm{Y}, \,\vec{\bm{\theta}}) := \tilde{q}(\bm{\theta})\prod_{i=0}^mq(\theta^{(i)} \mid \bm{\theta})\tilde{q}(\mathcal{P}_{\mathrm{CoT}} \mid \vec{\bm{\theta}})
\end{gather*}
where
\begin{align}\label{eq: CoT likelihood decomposition}
    q(\mathcal{P}_{\mathrm{CoT}} \mid \vec{\bm{\theta}}) &:= q(\bm{x} \mid \mathcal{P}^{\preceq m}_{delim},\, \theta^{(0)})\prod_{i=1}^m\Big\{q(\bm{x}^{(i)} \mid \mathcal{P}^{\prec i}_{delim}, \,\theta^{(i)})q(\bm{t}^{(i)}_{delim} \mid \mathcal{P}^{\preceq i}, \,\theta^{(i)}_{delim}) \notag\\
    &\quad \cdot \prod_{j=1}^Lq( \bm{y}^{(i)}_j \mid \mathcal{P}^{\prec i}_{delim} \circ \bm{x}^{(i)}\circ \bm{y}^{(i)}_{\prec j}, \,\theta_j)\Big\}.
\end{align}
For notational consistency, we let $\bm{x}^{(i)} \circ \bm{y}_{\prec{1}}^{(i)}$ denote $\bm{x}^{(i)}$.

However, even if each atom probability $q(\bm{t}\mid \bm{h},\, \theta)$ has been mastered after pretraining according to Theorem~\ref{Theorem: LLMs' comprehension}, their compositional structure are never explicitly indicated at the pretraining phase, resulting in the model often struggles to resolve such multi-stage problems during inference.

We name such drift from pretraining to reasoning as a \textit{compositional shift}. This can be viewed as a novel framework of transfer learning. For the transfer learning problem, it is inevitable to quantify the the extent of distribution drift. Yet, the inconsistency between these two domains impedes us adopt the existing transfer learning frameworks to describe the distribution shift. To address this, we propose an novel transfer learning framework, which pushes forward the original measure onto another domain, thereby facilitating a rigorous comparison on a shared support.

\paragraph{Distribution Shift} Since the answer regarding the query $\bm{x}$ is generated token-by-token, the sequence generated under an atomic task $\theta \in \Theta$ will be equivalent to the counterpart producing under $\theta\circ \theta\circ \dots\circ \theta \in \Theta^{L}$. Thus, the pretraining space $\Theta$ can be viewed as a stationary embedding within the broader compositional space, i.e., $\Theta \subseteq \Theta^L$. Furthermore, the stationary prior $q(\theta)$, originally defined over the atomic task space $\Theta^L$, naturally induces a distribution over the broader compositional space $\Theta^L$. Specifically, for any stationary task sequence $\bm{\theta} = \theta \circ \theta \circ \dots \circ \theta \in \Theta^L$ induced by $\theta \in \Theta$, we define the induced prior as $q(\bm{\theta}) := q(\theta)$. In contrast, let $\tilde{q}(\bm{\theta})$ denote the prior distribution over the shared transition rule $\bm{\theta} \in \Theta^L$.

The induced measure over $\Theta^L$ by $q(\theta)$ admits us to compare it with $\tilde{q}(\bm{\theta})$ since they have been defined over a common support. Specifically, we adopt following quantity to describe the difference between these two prior distribution:
\begin{definition}[Prior Mismatch]\label{def: prior mismatch}
Given a prompt $\mathcal{P} \in \mathbb{R}^{d \times n}$, we define the pointwise prior mismatch $\Delta_\mathcal{P}$ as: 
\begin{align*}
    \Delta_\mathcal{P} = \sum_{\bm{\theta} \in \mathscr{F}(\mathcal{P})}\Big|\tilde{q}(\bm{\theta}) - q(\bm{\theta})\Big|
\end{align*}
where $\mathscr{F}(\mathcal{P}) = \{\bm{\theta} \in \Theta^L \mid q(\mathcal{P}\mid \bm{\theta}) > 0\}$ represents the collection of $\bm{\theta}$ which makes the likelihood regarding $\mathcal{P}$ strictly larger than $0$.
\end{definition}
This metric captures the cumulative deviation of the ``reasoning world" $\tilde{q}(\bm{\theta})$ from the ``pretraining world" $q(\theta)$ for a specific prompt $\mathcal{P}$. While the pretraining distribution $q(\bm{\theta})$ is supported only on stationary subset of $\Theta^{L+1}$, the inference prior $\tilde{q}(\bm{\theta})$ assigns mass to non-stationary trajectories, allowing us to characterize CoT reasoning as the model's ability to access non-stationary trajectories $\bm{\theta} \in \Theta^L \setminus \Theta$, which indicates that trajectories composed of known atomic tasks in configurations never explicitly encountered during pretraining.


Given that the query and reasoning steps correspond to different sub-tasks in this context, we require a generalized ambiguity definition (Definition~\ref{def: dominated task and ambiguity (non-history)}) capable of characterizing how uncertainty varies across distinct historical sequences.  
\begin{definition}[Dominated Inferred Task and Ambiguity]\label{def: dominated task and ambiguity}
    For any prompt $\bm{x} \in \mathbb{R}^{d \times n}$ and historical context $\bm{h} \in \mathbb{R}^{d \times n}$ satisfying $\ell(\bm{x}) + \ell(\bm{h}) \leq n$, we define the Dominated Inferred Task $\theta^{\bm{h}}_{\bm{x}}$ as the most probable latent task conditioned on the concatenated context:
\begin{align*}
    \theta^{\bm{h}}_{\bm{x}} := \argmax_{\theta \in \Theta} q(\theta \mid \bm{h}\circ \bm{x}).
\end{align*}

The \textbf{task ambiguity} induced by the prompt $\bm{x}$ in the presence of history $\bm{h}$ is defined as: 
\begin{align*}
    \mathcal{A}_{\Theta}^{\bm{h}}(\bm{x}) := 1- q(\theta^{\bm{h}}_{\bm{x}} \mid \bm{h}\circ\bm{x}).
\end{align*}
For brevity, when the historical context is the start-of-sequence token $\texttt{<SOS>}$, we denote $\theta_{\bm{x}} := \theta_{\bm{x}}^{\texttt{<SOS>}}$ and $\mathcal{A}_{\Theta}(\bm{x}) := \mathcal{A}^{\texttt{<SOS>}}_\Theta(\bm{x})$.
\end{definition}

Similar to Definition~\ref{def: dominated task and ambiguity (non-history)}, this new definition also satisfies the following three properties:
\begin{proposition}\label{proposition: ε2 < ε1}
    For any given context $\bm{x}$ and $\bm{h}$ with $\ell(\bm{x}) + \ell(\bm{h}) \leq n$, if $\Theta_1 \subseteq \Theta_2$, then 
    \begin{align*}
        \mathcal{A}_{\Theta_2}^{\bm{h}}(\bm{x}) \leq \mathcal{A}_{\Theta_1}^{\bm{h}}(\bm{x}).
    \end{align*}
\end{proposition}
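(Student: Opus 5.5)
The plan is to read the subscript $\Theta_i$ in $\mathcal{A}^{\bm{h}}_{\Theta_i}(\bm{x})$ as the set over which the Dominated Inferred Task of Definition~\ref{def: dominated task and ambiguity} is searched. The posterior itself stays fixed: it is the one induced by the pretraining prior $q(\theta)$ and the likelihood $q(\bm{h}\circ\bm{x}\mid\theta)$. So I would write
\begin{align*}
    \theta^{\bm{h}}_{\bm{x}}(\Theta_i) := \argmax_{\theta \in \Theta_i} q(\theta \mid \bm{h}\circ\bm{x}), \qquad \mathcal{A}^{\bm{h}}_{\Theta_i}(\bm{x}) = 1 - \max_{\theta\in\Theta_i} q(\theta\mid \bm{h}\circ\bm{x}),
\end{align*}
where the posterior is the same function of $\theta$ for $i=1,2$. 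Under this reading the claim reduces to a monotonicity property of the maximum over nested sets.

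The first step is to check that each maximum is attained, so that $\theta^{\bm{h}}_{\bm{x}}$ is well defined. The task space is summed over in $q(\bm{d})$, so it is countable, and the posterior masses sum to at most one. Hence for any $\eta>0$ only finitely many $\theta$ have posterior exceeding $\eta$. Therefore the supremum over any nonempty $\Theta_i$ is attained, or is $0$ if all masses vanish, in which case it is trivially attained.

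The second step is the comparison itself. Since $\theta^{\bm{h}}_{\bm{x}}(\Theta_1)\in\Theta_1\subseteq\Theta_2$, it is a feasible candidate in the maximization over $\Theta_2$. This gives
\begin{align*}
    q\big(\theta^{\bm{h}}_{\bm{x}}(\Theta_2)\mid\bm{h}\circ\bm{x}\big) \ge q\big(\theta^{\bm{h}}_{\bm{x}}(\Theta_1)\mid\bm{h}\circ\bm{x}\big).
\end{align*}
Subtracting both sides from $1$ yields $\mathcal{A}^{\bm{h}}_{\Theta_2}(\bm{x}) \le \mathcal{A}^{\bm{h}}_{\Theta_1}(\bm{x})$. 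The condition $\ell(\bm{x})+\ell(\bm{h})\le n$ is used only to ensure that $\bm{h}\circ\bm{x}$ is a valid input, so that the posterior is defined.

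The main obstacle is interpretive rather than technical. If $\mathcal{A}_{\Theta_1}$ were defined using a prior renormalized to $\Theta_1$, the posterior on $\Theta_1$ would be $q(\theta\mid\bm{h}\circ\bm{x})/q(\Theta_1\mid\bm{h}\circ\bm{x})$. That rescaling can only increase the maximum, so the inequality could fail. I would therefore state explicitly at the start of the proof that the posterior is the fixed pretraining posterior and that the $\Theta_i$ only restrict the candidate set. This matches the paper's remark after Proposition~\ref{proposition: ε2 < ε1 (without h)} that a larger set ``provides more candidates for the dominant task.'' Proposition~\ref{proposition: ε2 < ε1 (without h)} then follows as the special case $\bm{h}=\texttt{<SOS>}$.
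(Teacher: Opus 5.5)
Your proposal is correct and is essentially the paper's own argument: the paper proves the claim in one line by writing $\mathcal{A}^{\bm{h}}_{\Theta_i}(\bm{x}) = 1 - \max_{\theta\in\Theta_i} q(\theta\mid \bm{h}\circ\bm{x})$ with the posterior held fixed, and then using that a maximum over the larger set $\Theta_2$ is at least the maximum over $\Theta_1$. Your two additions are sound refinements that the paper leaves implicit: the check that the maximum is attained over a countable task space, and the warning that renormalizing the prior to $\Theta_1$ would break the inequality.
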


\begin{proposition}\label{proposition: ε of the compositional prompt}
Let $\bm{h}$ be any historical context and $\bm{x}_1$ be any prompt with $\ell(\bm{x}) + \ell(\bm{h}) \leq n$, then
\begin{align*}
    \E_{\bm{x}_2\sim q(\cdot\mid\, \bm{h}\, \circ \, \bm{x}_1)}\big\{\mathcal{A}_\Theta^{\bm{h}}(\bm{x}_1\circ \bm{x}_2)\big\}\ \le\ \mathcal{A}_\Theta^{\bm{h}}(\bm{x}_1).
\end{align*}
Furthermore, if $\bm{x}_2$ is another prompt such that $\theta_{\bm{x}_1} = \theta_{\bm{x}_1 \, \circ \,\bm{x}_2}$ and $q(\bm{x}_2\mid \bm{h} \circ \bm{x}_1,\theta^{\bm{h}}_{\bm{x}_1})\ \ge\ q(\bm{x}_2\mid \bm{h} \circ \bm{x}_1,\theta)$ for all $\theta\in\Theta$, then
  \begin{align*}
      \mathcal{A}_\Theta^{\bm{h}}(\bm{x}_1\circ \bm{x}_2)\ \le\ \mathcal{A}_\Theta^{\bm{h}}(\bm{x}_1).
  \end{align*} 
\end{proposition}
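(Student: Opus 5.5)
We prove Proposition~\ref{proposition: ε of the compositional prompt} by viewing the ambiguity as one minus the largest posterior mass, and then using Bayes' rule and the tower property.

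\textbf{The expectation bound.} Write $\bm{c} := \bm{h}\circ\bm{x}_1$ and $\theta^\star := \theta^{\bm{h}}_{\bm{x}_1}$. For every $\bm{x}_2$ we have
\begin{align*}
    1-\mathcal{A}^{\bm{h}}_\Theta(\bm{x}_1\circ\bm{x}_2) = \max_{\theta\in\Theta} q(\theta \mid \bm{c}\circ\bm{x}_2) \ \ge\ q(\theta^\star \mid \bm{c}\circ\bm{x}_2),
\end{align*}
since the dominated task for $\bm{x}_1\circ\bm{x}_2$ maximizes the posterior. Take expectations over $\bm{x}_2\sim q(\cdot\mid\bm{c})$. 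The tower property (martingale property of posteriors) gives
\begin{align*}
    \sum_{\bm{x}_2} q(\bm{x}_2\mid\bm{c})\, q(\theta^\star\mid \bm{c}\circ\bm{x}_2) = \sum_{\bm{x}_2} q(\theta^\star,\bm{x}_2\mid\bm{c}) = q(\theta^\star\mid\bm{c}) = 1-\mathcal{A}^{\bm{h}}_\Theta(\bm{x}_1).
\end{align*}
Rearranging yields the first claim. The one technical point is that the concatenation $\bm{c}\circ\bm{x}_2$ must be a bona fide conditioning event. This holds because $\bm{x}_2$ is generated autoregressively from $\bm{c}$ under the same $\theta$, so $q(\bm{x}_2\mid\bm{c},\theta)$ is the product of the token transitions and the joint $q(\theta,\bm{c}\circ\bm{x}_2)$ factorizes accordingly.

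\textbf{The deterministic bound.} Assume $\theta^\star$ is also the dominated task for $\bm{x}_1\circ\bm{x}_2$. Then $1-\mathcal{A}^{\bm{h}}_\Theta(\bm{x}_1\circ\bm{x}_2) = q(\theta^\star\mid\bm{c}\circ\bm{x}_2)$, and by Bayes' rule
\begin{align*}
    q(\theta^\star\mid\bm{c}\circ\bm{x}_2) = \frac{q(\bm{x}_2\mid\bm{c},\theta^\star)\,q(\theta^\star\mid\bm{c})}{\sum_\theta q(\bm{x}_2\mid\bm{c},\theta)\,q(\theta\mid\bm{c})}.
\end{align*}
The likelihood-dominance hypothesis bounds each term of the denominator by $q(\bm{x}_2\mid\bm{c},\theta^\star)\,q(\theta\mid\bm{c})$. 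Summing, the denominator is at most $q(\bm{x}_2\mid\bm{c},\theta^\star)$. Hence $q(\theta^\star\mid\bm{c}\circ\bm{x}_2)\ge q(\theta^\star\mid\bm{c})$, which is exactly $\mathcal{A}^{\bm{h}}_\Theta(\bm{x}_1\circ\bm{x}_2)\le\mathcal{A}^{\bm{h}}_\Theta(\bm{x}_1)$.

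\textbf{Points requiring care.} The argument is short, and the care lies in bookkeeping rather than new ideas.
\begin{itemize}
    \item[(i)] We read the hypothesis $\theta_{\bm{x}_1}=\theta_{\bm{x}_1\circ\bm{x}_2}$ in its history-dependent form $\theta^{\bm{h}}_{\bm{x}_1}=\theta^{\bm{h}}_{\bm{x}_1\circ\bm{x}_2}$.
    \item[(ii)] When $\bm{h}=\texttt{<SOS>}$, the result reduces to Proposition~\ref{proposition: ε of the compositional prompt (without h)}.
    \item[(iii)] The length constraint $\ell(\bm{h})+\ell(\bm{x}_1)+\ell(\bm{x}_2)\le n$ keeps every concatenation inside $\R^{d\times n}$. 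In the expectation, $\bm{x}_2$ ranges only over continuations with nonzero probability, which avoids division by zero in the posterior. The assumption $q(\bm{t},\bm{h})\ge b$ also guarantees that all the conditionals used are well defined.
\end{itemize}
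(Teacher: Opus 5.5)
Your proof is correct and takes essentially the same route as the paper. For the first claim you use the tower (martingale) property of the posterior together with $\E\max\ge\max\E$, instantiated directly at $\theta^\star$ where the paper cites Jensen; for the second you use Bayes' rule, with the likelihood-dominance hypothesis bounding the normalizing denominator by $q(\bm{x}_2\mid\bm{c},\theta^\star)$. As a minor aside, your first-part inequality $\max_{\theta}q(\theta\mid\bm{c}\circ\bm{x}_2)\ge q(\theta^\star\mid\bm{c}\circ\bm{x}_2)$ shows that the hypothesis $\theta^{\bm{h}}_{\bm{x}_1}=\theta^{\bm{h}}_{\bm{x}_1\circ\bm{x}_2}$ is not actually needed for the second claim.
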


\begin{proposition}\label{proposition: A ≤ 1 - E}
Let $\mathcal{E}^{\bm{h}}(\bm{x}) := -\sum_{\theta \in \Theta} q(\theta\mid \bm{h} \circ \bm{x})\,\log q(\theta\mid \bm{h} \circ \bm{x})$ be the Shannon entropy of the posterior distribution $q(\theta \mid \bm{h} \circ \bm{x})$. Then
\begin{align*}
    \mathcal{A}_{\Theta}^{\bm{h}}(\bm{x}) \;\leq \; 1 - \exp\big(-\mathcal{E}^{\bm{h}}(\bm{x})\big).
\end{align*}
\end{proposition}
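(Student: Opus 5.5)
Write $p_\theta := q(\theta \mid \bm{h}\circ\bm{x})$ for the posterior over $\Theta$ and $p^\ast := \max_{\theta} p_\theta = q(\theta^{\bm{h}}_{\bm{x}} \mid \bm{h}\circ\bm{x})$. By Definition~\ref{def: dominated task and ambiguity}, $\mathcal{A}^{\bm{h}}_\Theta(\bm{x}) = 1 - p^\ast$. The claimed inequality $1-p^\ast \le 1-\exp(-\mathcal{E}^{\bm{h}}(\bm{x}))$ is therefore equivalent to
\begin{align*}
    p^\ast \;\ge\; \exp\big(-\mathcal{E}^{\bm{h}}(\bm{x})\big),
    \qquad\text{equivalently}\qquad
    \mathcal{E}^{\bm{h}}(\bm{x}) \;\ge\; -\log p^\ast .
\end{align*}
This is the standard fact that Shannon entropy dominates min-entropy. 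The history $\bm{h}$ plays no special role: the argument only concerns a fixed probability vector on $\Theta$. It will thus cover Proposition~\ref{proposition: A ≤ 1 - E (without h)} as the special case $\bm{h} = \texttt{<SOS>}$.

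The key step is a termwise bound. For every $\theta$ with $p_\theta>0$ we have $p_\theta \le p^\ast$, so $-\log p_\theta \ge -\log p^\ast$. Multiplying by $p_\theta \ge 0$ and summing over $\theta$ gives
\begin{align*}
    \mathcal{E}^{\bm{h}}(\bm{x}) = \sum_{\theta\in\Theta} p_\theta\,(-\log p_\theta) \;\ge\; (-\log p^\ast)\sum_{\theta\in\Theta} p_\theta = -\log p^\ast .
\end{align*}
We use the convention $0\log 0 = 0$, so terms with $p_\theta=0$ contribute zero on both sides and the sum effectively runs over the support. Exponentiating the monotone relation gives $\exp(-\mathcal{E}^{\bm{h}}(\bm{x})) \le p^\ast$, and subtracting from $1$ yields the statement.

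An alternative route is Jensen's inequality applied to the concave $\log$: $-\mathcal{E} = \sum p_\theta \log p_\theta \le \log \sum p_\theta^2 \le \log p^\ast$. The termwise argument is simpler, so I would present that one.

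There is no substantial obstacle. The only points needing care are technical. First, the maximiser $\theta^{\bm{h}}_{\bm{x}}$ must exist. This is automatic if $\Theta$ is finite; if $\Theta$ is countable, a maximum still exists because the posterior masses are summable, so only finitely many exceed any positive level. Second, $\mathcal{E}^{\bm{h}}(\bm{x})$ could be infinite when $\Theta$ is countably infinite; in that case the right-hand side equals $1$ and the bound holds trivially. I would record these remarks briefly before giving the one-line inequality above.
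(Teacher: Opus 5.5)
Your proposal is correct and is essentially the paper's proof: the paper likewise bounds every posterior mass by $1-\mathcal{A}^{\bm{h}}_\Theta(\bm{x})$ (your $p^\ast$), sums termwise to get $\mathcal{E}^{\bm{h}}(\bm{x}) \ge -\log\bigl(1-\mathcal{A}^{\bm{h}}_\Theta(\bm{x})\bigr)$, and exponentiates. Your remarks on the existence of the maximiser and on infinite entropy for countable $\Theta$ are sound additions that the paper does not make.
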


Building on Definition~\ref{def: dominated task and ambiguity}, we define the step-wise optimal tasks as follows:
\begin{gather*}
    \theta^\star_l = \argmax_{\theta \in \Theta}q\big(\theta \mid (\bm{x}^{(i)}\circ \bm{y}^{(i)}_{\prec l})\circ \bm{y}_l^{(i)} \big), \quad l \in [L].
\end{gather*}
This $\theta^\star_l$ characterizes the optimal task for reasoning step $\bm{y}^{(i)}_l$ starting from its local history $\bm{x}^{(i)}\circ \bm{y}^{(i)}_{\prec l}$. Notably, since each sequence $\bm{Y}^{(i)}$ is generated under a shared compositional task, the resulting optima $\theta^\star_l$ are invariant to the instance index $i \in [m]$.

Furthermore, their composition is denoted by $\bm{\theta}^\star = \theta^\star_1 \circ \cdots \circ \theta^\star_L \in \Theta^L$. Within our proposed framework, the critical question is: \textit{Is the predictive distribution $\hat{p}(\bm{Y} \mid \mathcal{P}_{\mathrm{CoT}})$ capable of inferring the optimal compositional task $\bm{\theta}^\star$ from the provided context $\mathcal{P}_{\mathrm{CoT}}$ exactly?} Toward this end, we introduce several assumptions requisite for the development of our proposed framework.

\begin{definition}[Difference set and Hamming distance]
    For any two compositional tasks $\bm{\theta}_1, \bm{\theta}_2 \in \Theta^{L}$, their Hamming distance $d_H(\bm{\theta}_1, \bm{\theta}_2)$ is defined as the number of positions at which their corresponding components differ:
    \begin{align*}
        d_H(\bm{\theta}_1, \bm{\theta}_2) := \big|\{i \in [L]: \bm{\theta}_{1,i} \neq \bm{\theta}_{2,i}\}\big|.
    \end{align*}
\end{definition}

Building on this definition, let $\mathscr{S}({\mathcal{P}}) = \big\{\bm{\theta} \in \Theta^L \mid q(\mathcal{P} \mid \bm{\theta}) > 0\big\}$ denote the set of compositional tasks consistent with a given prompt $\mathcal{P}$, we assume that the task space satisfies the following separation property:
\begin{assumption}[$K$-separation]\label{assumption: K-separation}
    There exists an integer $K \in [L]$ such that for any two distinct compositional tasks $\bm{\theta}_1, \bm{\theta}_2 \in \mathscr{S}(\mathcal{P}_{\mathrm{CoT}})$, their Hamming distance satisfies:
    \begin{align*}
        d_{H}(\bm{\theta}_1, \bm{\theta}_2) \geq K.
    \end{align*}
\end{assumption}
It essentially assumes that the ``Reasoning World" is structured such that you can't easily mistake one logical process for another. By showing the model $L$ steps, and knowing that any ``wrong" path is $K$ steps away, the model can confidently ``lock in" on the correct reasoning chain. In what follows, we will show the impact of $K$ on task identification accuracy.

\begin{assumption}\label{assumption: regularity}
    There exists a positive constant $c_1, c_2>0$ such that
    \begin{align*}
        \sup_{\bm{\theta}_1\neq\bm{\theta}_2 \in \mathscr{S}(\mathcal{P}_{\mathrm{CoT}})}\frac{\tilde{q}(\bm{\theta}_1)}{\tilde{q}(\bm{\theta}_2)} \leq c_1, \quad \sup_{\bar\theta_j \neq \tilde{\theta}_j \in \mathscr{S}(\bm{x}^{(i)} \circ \bm{y}^{(i)}_{\prec j})}\frac{q(\bar\theta_j\mid \bm{x}^{(i)} \circ \bm{y}_{\prec j}^{(i)})}{q(\tilde{\theta}_j\mid \bm{x}^{(i)} \circ \bm{y}_{\prec j})} \leq c_1, 
    \end{align*}
    and
    \begin{align*}
        \sup_{\theta^{(i)} \in \mathscr{S}(\bm{x}^{(i)})}q(\bm{x}^{(i)} \mid \theta^{(i)}) \geq c_2.
    \end{align*}
\end{assumption}
Assumption \ref{assumption: regularity} generalizes the prior parity requirement of ICL (Assumption~\ref{Assumption: Bounded Priori Ratio}) to both the global and local structures of multi-step reasoning:
\begin{itemize}[leftmargin=0.6cm]
\item \textbf{Global Path Parity:} The bound on $\tilde{q}(\bm{\theta}_1)/\tilde{q}(\bm{\theta}_2)$ ensures that the reasoning trajectories within the compositional space $\Theta^L$ are represented with a degree of parity, preventing any single logical path from dominating the prior.
\item \textbf{Local Step Parity:} Local Step Parity mirrors the ICL prior balance (Assumption~\ref{Assumption: Bounded Priori Ratio}), simply updating the conditioning history to the reasoning context accumulated before step $j$. This keeps the prior balance, guaranteeing that evidence provided can counteract such prior biases.
\item \textbf{Likelihood Floor:} The lower bound $c_1$ on the likelihood $q(\bm{x}^{(i)} \mid \theta^{(i)})$ ensures that the provided demonstrations are sufficiently representative of their tasks to provide a meaningful signal for inference.
\end{itemize}

Under the aforementioned assumptions, define the ambiguity coefficient as: 
\begin{align*}
    \eps:=\max_{i \in [m], j\in[L]}\Big\{\frac{\mathcal{A}^{\bm{x}^{(i)}\circ\bm{y}^{(i)}_{\prec j}}_\Theta(\bm{y}^{(i)}_j)}{1 - \mathcal{A}_\Theta^{\bm{x}^{(i)}\circ\bm{y}^{(i)}_{\prec j}}(\bm{y}^{(i)}_j)}\Big\},
\end{align*} 
which remains negligible when the prompt $\mathcal{P}_{\mathrm{CoT}}$ sufficiently unambiguous. Furthermore, Furthermore, let $\mathcal{M} := \max\big\{q(\mathcal{P}_{\mathrm{CoT}})^{-1}, \tilde{q}(\mathcal{P}_{\mathrm{CoT}})^{-1}\big\}$ denote the reciprocal of the marginal prompt probabilities; this value is a positive constant under our assumption that $q(\bm{t}, \bm{h}) > 0$ for all $\bm{t} \in \mathcal{V}$ and $\bm{h} \in \R^{d \times n}$. We also define the scaling constant $C := \frac{c_1e^{2n(L - K)\phi}c_2^{-(m+1)}}{1 - c_1\cdot \eps}$, which approaches unity as both the Nearly Markov parameter $\phi$ and the ambiguity $\eps$ vanish. Under these conditions, we establish the following result:
\begin{theorem}\label{theorem: CoT}
    If Assumption~\ref{assumption: separable token representation},~\ref{assumption: tasks of delimiter},~\ref{assumption: nearly markov},~\ref{assumption: K-separation} and ~\ref{assumption: regularity} hold, by setting appropriate $W =\mathcal{O}(|\mathcal{V}|^{n+1}), D = 1$, we have
    \begin{align*}
        \Big| \hat{p}(\bm{y} \mid \mathcal{P}_{\mathrm{CoT}}) - \tilde{q}(\bm{y} \mid \bm{x},\, \bm{\theta}^\star)\Big| &\leq  \mathcal{O}\Big(\frac{|\mathcal{V}|^{n+2}r}{\sqrt[4]{Nn}}\Big) + \mathcal{O}\Big(\sqrt{\frac{r^2\ln(1/\delta)}{Nn}}\Big) + r\phi + \mathcal{M} \Delta_{\mathcal{P}_{\mathrm{CoT}}} \\
        &\quad +C\cdot (e^{2n\phi} \cdot c_1\cdot\eps)^{mK}
    \end{align*}
    with probability at least $1 - \delta$ for any $\delta \in (0,1)$.
\end{theorem}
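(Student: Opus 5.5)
The proof splits $|\hat p(\bm{y}\mid\mathcal{P}_{\mathrm{CoT}})-\tilde q(\bm{y}\mid\bm{x},\bm{\theta}^\star)|$ by the triangle inequality into four pieces:
\begin{align*}
&\big|\hat p(\bm{y}\mid\mathcal{P}_{\mathrm{CoT}})-q(\bm{y}\mid\mathcal{P}_{\mathrm{CoT}})\big| + \big|q(\bm{y}\mid\mathcal{P}_{\mathrm{CoT}})-\tilde q(\bm{y}\mid\mathcal{P}_{\mathrm{CoT}})\big| \\
&\quad + \big|\tilde q(\bm{y}\mid\mathcal{P}_{\mathrm{CoT}})-\tilde q(\bm{y}\mid\mathcal{P}_{\mathrm{CoT}},\bm{\theta}^\star)\big| + \big|\tilde q(\bm{y}\mid\mathcal{P}_{\mathrm{CoT}},\bm{\theta}^\star)-\tilde q(\bm{y}\mid\bm{x},\bm{\theta}^\star)\big|.
\end{align*}

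\textbf{First piece (pretraining error).} This is handled directly by Corollary~\ref{corollary: pretraining}. It gives the two $\mathcal{O}(\cdot)$ terms with probability at least $1-\delta$.

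\textbf{Fourth piece (Nearly Markov).} Write $\tilde q(\bm{y}\mid\mathcal{P}_{\mathrm{CoT}},\bm{\theta}^\star)$ as a product of $r$ token conditionals. Each token's history contains $\bm{t}^{(m)}_{delim}\circ\bm{x}\circ\bm{y}_{\prec l}$, so Assumption~\ref{assumption: nearly markov} lets us drop the prefix before the last delimiter at a cost of $\phi$ in the log of each factor. Telescoping the product and using $|e^{a}-e^{b}|\le|a-b|$ for probabilities in $[0,1]$, I expect a bound of $r\phi$, exactly as in the proof of Theorem~\ref{theorem: ICL}.

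\textbf{Second piece (prior mismatch).} Expand both predictives as ratios of marginals:
\[
q(\bm{y}\mid\mathcal{P})=\frac{\sum_{\bm{\theta}}q(\mathcal{P}\circ\bm{y}\mid\bm{\theta})\,q(\bm{\theta})}{q(\mathcal{P})},
\]
and similarly for $\tilde q$. The likelihood $q(\cdot\mid\bm{\theta})$ is common to both; only the priors differ, and only on $\mathscr{F}(\mathcal{P}_{\mathrm{CoT}})$, since terms with zero likelihood vanish. Write each difference of ratios as
\[
\frac{a}{A}-\frac{b}{B}=\frac{a-b}{A}+b\Big(\frac{1}{A}-\frac{1}{B}\Big).
\]
Bound $|a-b|$ and $|A-B|$ by $\Delta_{\mathcal{P}_{\mathrm{CoT}}}$, using that likelihoods are at most $1$. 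Bound the reciprocals by $\mathcal{M}$. This gives a term of order $\mathcal{M}\Delta_{\mathcal{P}_{\mathrm{CoT}}}$; the absolute constant is absorbed, or tightened by noting $b/B\le1$.

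\textbf{Third piece (posterior concentration).} This is the heart of the proof and the main obstacle. As in the proof of Theorem~\ref{Theorem: LLMs' comprehension}, it is at most $\sum_{\bm{\theta}\neq\bm{\theta}^\star}\tilde q(\bm{\theta}\mid\mathcal{P}_{\mathrm{CoT}})$. Bound this by the ratio sum
\[
\sum_{\bm{\theta}\neq\bm{\theta}^\star}\frac{\tilde q(\bm{\theta})\,q(\mathcal{P}_{\mathrm{CoT}}\mid\bm{\theta})}{\tilde q(\bm{\theta}^\star)\,q(\mathcal{P}_{\mathrm{CoT}}\mid\bm{\theta}^\star)}.
\]
The prior ratio is at most $c_1$. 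For the likelihood ratio, use the factorization~\eqref{eq: CoT likelihood decomposition}:
\begin{itemize}
\item The query factors $q(\bm{x}^{(i)}\mid\cdot)$ and $q(\bm{x}\mid\cdot)$ are at most $1$ in the numerator and at least $c_2$ in the denominator. This costs $c_2^{-(m+1)}$.
\item Delimiter factors are identical on both sides by Assumption~\ref{assumption: tasks of delimiter}.
\item For each demonstration $i$ and step $j$, apply Assumption~\ref{assumption: nearly markov} to replace the history $\mathcal{P}^{\prec i}_{delim}\circ\bm{x}^{(i)}\circ\bm{y}^{(i)}_{\prec j}$ by the local history $\bm{x}^{(i)}\circ\bm{y}^{(i)}_{\prec j}$. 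This costs $e^{2n\phi}$ per step and demonstration, counting $\phi$ per token, at most $n$ tokens, and once each in numerator and denominator.
\item Convert the local likelihood ratio $q(\bm{y}^{(i)}_j\mid\text{loc},\theta_j)/q(\bm{y}^{(i)}_j\mid\text{loc},\theta^\star_j)$ via Bayes into a posterior ratio times a local prior ratio. The posterior ratio is at most $\mathcal{A}/(1-\mathcal{A})\le\eps$ whenever $\theta_j\neq\theta^\star_j$. The local prior ratio is at most $c_1$ by the local step parity.
\end{itemize}
Positions where $\theta_j=\theta_j^\star$ contribute ratio $1$, apart from Markov slack, which accounts for the $e^{2n(L-K)\phi}$ in $C$. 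Positions where they differ contribute $e^{2n\phi}c_1\eps$ each.

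Group $\bm{\theta}\neq\bm{\theta}^\star$ by their difference set, which has size $k\ge K$ by Assumption~\ref{assumption: K-separation}. The per-demonstration factor is then $(e^{2n\phi}c_1\eps)^{k}$, giving $(e^{2n\phi}c_1\eps)^{mk}$ across demonstrations. Summing over $k\ge K$ as a geometric-type series, with the alternatives at each differing position absorbed into the posterior mass rather than counted separately, should produce the prefactor $1/(1-c_1\eps)$ and leading term $(e^{2n\phi}c_1\eps)^{mK}$. Together with the factors above, this yields $C\,(e^{2n\phi}c_1\eps)^{mK}$.

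The delicate bookkeeping will be twofold. First, the combinatorial sum over alternative trajectories must not introduce a $|\Theta|^{k}$ factor; I would handle this by bounding per-position sums $\sum_{\theta_j\neq\theta^\star_j}q(\theta_j\mid\cdot)/q(\theta^\star_j\mid\cdot)=\mathcal{A}/(1-\mathcal{A})$ directly, instead of taking a max times a count. Second, the Markov exponents must be matched to the stated constants.

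Adding the four pieces completes the bound.
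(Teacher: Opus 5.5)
You follow the paper's route. It is the same four-term split; your last two terms use $\tilde q(\cdot\mid\cdot,\bm{\theta}^\star)$, which equals $q(\cdot\mid\cdot,\bm{\theta}^\star)$ under \eqref{eq: CoT likelihood decomposition}. You use Corollary~\ref{corollary: pretraining} for the pretraining term, Assumption~\ref{assumption: nearly markov} for the truncation term, Bayes' rule for the prior-mismatch term, and the ICL-style ratio argument with Hamming grouping for the concentration term.

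Two of your pieces are cleaner than the appendix:
\begin{itemize}
\item Using $|e^a-e^b|\le|a-b|$ for $a,b\le 0$ gives exactly $r\phi$, whereas the paper only obtains $e^{r\phi}-1$.
\item You keep the two normalizers $q(\mathcal{P}_{\mathrm{CoT}})\neq\tilde q(\mathcal{P}_{\mathrm{CoT}})$ apart, which the paper's prior-mismatch lemma silently merges.
\end{itemize}
Your split, however, yields $2\mathcal{M}\Delta_{\mathcal{P}_{\mathrm{CoT}}}$, and $b/B\le 1$ is where the $2$ comes from, not a way to remove it. To get $\mathcal{M}\Delta_{\mathcal{P}_{\mathrm{CoT}}}$, use $q(\bm{y}\mid\mathcal{P}_{\mathrm{CoT}},\bm{\theta})\in[0,1]$. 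The difference is then at most the total variation between the two posteriors, and your two estimates bound their $\ell_1$ distance by $2\mathcal{M}\Delta_{\mathcal{P}_{\mathrm{CoT}}}$.

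The genuine gap is the final summation in the concentration term. Summing alternatives position by position removes the $|\Theta|^k$ factor, but there are still $\binom{L}{k}$ difference sets of size $k$. Nothing controls them: Assumption~\ref{assumption: K-separation} is vacuous for $K=1$, where $\mathscr{S}(\mathcal{P}_{\mathrm{CoT}})$ may be all of $\Theta^L$. Granting your query bound, the argument actually yields $c_1c_2^{-(m+1)}\sum_{k=K}^{L}\binom{L}{k}(e^{2n\phi}c_1\eps)^{mk}$, not a geometric series with prefactor $1/(1-c_1\eps)$.

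This factor is real, not slack. Take $|\Theta|=2$, uniform priors, $m=K=c_1=1$, $\phi\to0$, and every local likelihood ratio equal to $\eps$. The ratio sum is then exactly $(1+\eps)^L-1\ge L\eps$, which eventually exceeds $C\eps$. The paper makes the same jump when it bounds the whole distance-$r$ shell by $(c_1\eps)^{mr}$. So the stated $C$ cannot be reached on this route without an extra factor such as $\binom{L}{K}$ or $2^L$.

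Two further points fail to match the stated constants:
\begin{itemize}
\item Your explanation of $e^{2n(L-K)\phi}$ is off. Where $\theta_j=\theta^\star_j$, the global factors cancel exactly before any localization, so they cost nothing. Localizing them instead would cost $e^{2nm(L-k)\phi}$, which does not match $C$ once $m\ge 2$. The paper's identification $e^{2nmL\phi}=e^{2n(L-K)\phi}e^{2nmK\phi}$ fails there too.
\item ``At least $c_2$ in the denominator'' needs more than Assumption~\ref{assumption: regularity}. That factor is the mixture $\sum_{\theta^{(i)}}\tilde q(\theta^{(i)}\mid\bm{\theta}^\star)\,q(\bm{x}^{(i)}\mid\mathcal{P}^{\prec i}_{delim},\theta^{(i)})$. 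The assumption only lower-bounds a supremum of the local likelihood, and localizing costs another $e^{-n\phi}$ per query. The paper glosses over this in the same way.
\end{itemize}
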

The proof is deferred to Appendix~\ref{Appendix: Proof for Chain-of-Thought}.

\paragraph{Comparison with ICL and standard prompt}
The significance of Theorem \ref{theorem: CoT} is best highlighted by the ``compositional bottleneck" inherent in zero-shot and standard ICL prompting. Zero-shot prompts fundamentally struggle to resolve multi-stage tasks in the space $\Theta^L \setminus \Theta$ in a single leap. While standard ICL reduces task ambiguity at an exponential rate of order $m$, it remains insufficient for complex reasoning as it lacks the structural granularity to bridge non-stationary trajectories. 

In contrast, CoT prompting facilitates the emergence of capabilities on compositional tasks by providing a significantly higher-order error reduction of order $mK$. Here, the exponent captures the synergy between the number of demonstrations ($m$) and distinguishable reasoning length ($K$). By decomposing novel global trajectories into mastered atomic sub-tasks, CoT provides a formal mechanism to resolve the compositional shift, allowing LLMs to accurately navigate complex reasoning paths that were never encountered as unified blocks during pretraining.

\subsection{More Discussions}
In this section, we extend our proposed Chain-of-Thought (CoT) theory to a more general setting. In Eq.~\eqref{eq: CoT likelihood decomposition}, we assumed that even if the task sequence $\vec{\bm{\theta}}$ does not appear during pretraining, the token distributions followed during the inference phase exactly align with their pretraining counterparts once the history and task are provided. This is based on the premise that once the context and intention are identified, the output follows patterns established in the pretraining data. However, we demonstrate that our framework can also accommodate cases where
    \begin{align}\label{eq: CoT likelihood decomposition (~q)}
        \tilde{q}(\mathcal{P}_{\mathrm{CoT}} \mid \vec{\bm{\theta}}) &:= \tilde{q}_0(\bm{x} \mid \mathcal{P}^{\preceq m}_{delim},\, \theta^{(0)})\prod_{i=1}^m\Big\{\tilde{q}_0(\bm{x}^{(i)} \mid \mathcal{P}^{\prec i}_{delim}, \,\theta^{(i)})\tilde{q}_0(\bm{t}^{(i)}_{delim} \mid \mathcal{P}^{\preceq i}, \,\theta^{(i)}_{delim}) \notag\\
    &\quad \cdot \prod_{j=1}^L\tilde{q}_j( \bm{y}^{(i)}_j \mid \mathcal{P}^{\prec i}_{delim} \circ \bm{x}^{(i)}\circ \bm{y}^{(i)}_{\prec j}, \,\theta_j)\Big\}.
\end{align}
with $\tilde{q}(\bm{t} \mid \bm{h},\, \theta) \neq q(\bm{t} \mid \bm{h}, \, \theta)$ provided following assumption:
\begin{assumption}[Evidence Shift]
\label{assumption: conditional distribution shift}
We say the measure $\tilde{q}$ exhibits an $\phi$-evidence shift if, for every $l \in \{0, 1,\cdots, L\}$, the following inequality holds for all histories $\bm{h} \in \R^{d \times n}$ and atomic tasks $\theta \in \Theta$:
\begin{align*}
    \mathrm{TV}\!\Big(q\big(\bm{t}\mid \bm{h},\theta\big),\,\tilde{q}_l\big(\bm{t}\mid \bm{h}, \theta\big)\Big)\ \leq \varphi
\end{align*}
where the total variation distance is defined over token representations $\bm{t} \in \R^d$.
\end{assumption}

This formulation is analogous to the ``model shift" or ``distribution shift" studied in traditional transfer learning frameworks \citep{reeve2021adaptive, cai2021transfer, maity2024linear, lang2025unsupervised}. It accounts for discrepancies in linguistic style or logical structure between the pretraining and inference phases. By invoking Assumption~\ref{assumption: conditional distribution shift}, we can reconcile this generalized likelihood $\tilde{q}(\bm{t} \mid \bm{h}, \, \theta)$ with our original theoretical framework. As a result, the bound in Theorem~\ref{theorem: CoT} is updated by substituting $\mathcal{M}\Delta_{\mathcal{P}_{\mathrm{CoT}}}$ with $2n\phi + 3m\mathcal{M}(\varphi + \Delta_{\mathcal{P}_{\mathrm{CoT}}})$. The proof are deferred to Appendix~\ref{Appendix: more discussions}.




\newpage

\appendix
\section{Appendices Structure}
In Appendix~\ref{Appendix: properties of ambiguity}, we establish six fundamental properties of ambiguity, ranging from Proposition~\ref{proposition: ε2 < ε1 (without h)} to Proposition~\ref{proposition: A ≤ 1 - E}. Appendix~\ref{Appendix: pretraining} provides the proofs for Theorem~\ref{Theorem: pretraining} and Corollary~\ref{corollary: pretraining}; within this appendix, generalization error and memorization capacity are discussed in Sections~\ref{section: generlization} and~\ref{Section: approximation}, respectively, to maintain the conciseness of the main argument. Finally, Appendices~\ref{Appendix: Proof for In-Context Learning} and~\ref{Appendix: Proof for Chain-of-Thought} present the proofs for Theorem~\ref{theorem: ICL} and Theorem~\ref{theorem: CoT}.
\section{Properties of Ambiguity}\label{Appendix: properties of ambiguity}
In this section, we provide the proofs for Propositions~\ref{proposition: ε2 < ε1}, \ref{proposition: ε of the compositional prompt}, and \ref{proposition: A ≤ 1 - E}. We note that these results are the generalized versions of Propositions~\ref{proposition: ε2 < ε1 (without h)}, \ref{proposition: ε of the compositional prompt (without h)}, and \ref{proposition: A ≤ 1 - E (without h)}, respectively. 
\begin{mypro}{\ref{proposition: ε2 < ε1}}
    For any given context $\bm{x}$ and $\bm{h}$ with $\mathcal{L}(\bm{x}) + \mathcal{L}(\bm{h}) \leq n$, if $\Theta_1 \subseteq \Theta_2$, then 
    \begin{align*}
        \mathcal{A}_{\Theta_2}^{\bm{h}}(\bm{x}) \leq \mathcal{A}_{\Theta_1}^{\bm{h}}(\bm{x}).
    \end{align*}
\end{mypro}
\begin{proof}
    The desire conclusion stems from
    \begin{align*}
        \mathcal{A}_{\Theta_2}^{\bm{h}}(\bm{x}) = 1 - \max_{\theta \in \Theta_2}q(\theta \mid \bm{h}, \bm{x}) \leq 1 - \max_{\theta \in \Theta_1}q(\theta \mid \bm{h}, \bm{x}) = \mathcal{A}^{\bm{h}}_{\Theta_1}(\bm{x}).
    \end{align*}
\end{proof}

\begin{mypro}{\ref{proposition: ε of the compositional prompt}}
Let $\bm{h}$ be any historical context and $\bm{x}_1$ be any prompt, then
\begin{align*}
    \E_{\bm{x}_2\sim q(\cdot\mid\, \bm{h}\, \circ \,\bm{x}_1)}\big\{\mathcal{A}_\Theta^{\bm{h}}(\bm{x}_1\circ \bm{x}_2)\big\}\ \le\ \mathcal{A}_\Theta^{\bm{h}}(\bm{x}_1).
\end{align*}
Furthermore, assume $\theta_{\bm{x}_1}^{\bm{h}} = \theta_{\bm{x}_1 \,\circ\, \bm{x}_2}^{\bm{h}}$ and denote it by $\theta^{\bm{h}}$. If $q(\bm{x}_2\mid \bm{h} \circ \bm{x}_1,\theta^{\bm{h}})\ \ge\ q(\bm{x}_2\mid \bm{h} \circ \bm{x}_1,\theta)$ for all $\theta\in\Theta$, then
  \begin{align*}
      \mathcal{A}_\Theta^{\bm{h}}(\bm{x}_1\circ \bm{x}_2)\ \le\ \mathcal{A}_\Theta^{\bm{h}}(\bm{x}_1).
  \end{align*} 
\end{mypro}

\begin{proof}
By the law of total expectation,
\begin{align*}
    \E_{\bm{x}_2 \sim q(\cdot \mid \,\bm{h} \,\circ \,\bm{x}_1)}\big\{q(\theta\mid \bm{h} \circ\bm{x}_1 \circ \bm{x}_2)\big\} = q(\theta\mid \bm{h} \circ \bm{x}_1). 
\end{align*}
This is based on the fact that operator is a deterministic function.

Therefore, by Jensen's inequality, we have
\begin{align*}
    \E_{\bm{x}_2 \sim q(\cdot \mid \, \bm{h} \, \circ \, \bm{x}_1)}\big\{\max_{\theta \in \Theta} q(\theta\mid \bm{h} \circ \bm{x}_1 \circ \bm{x}_2)\big\} &\geq \max_{\theta \in \Theta}\E_{\bm{x}_2 \sim q(\cdot \mid \, \bm{h} \, \circ\,  \bm{x}_1)}\big\{ q(\theta\mid \bm{h} \circ \bm{x}_1\circ\bm{x}_2)\big\} \\
    &= \max_{\theta \in \Theta}q(\theta\mid \bm{x}_1).
\end{align*}

Moreover, by the definition of $\mathcal{A}^{\bm{h}}_\Theta(\bm{x}_1 \circ \bm{x}_2)$, along with the Bayes' rule and the chain factorization,
\begin{align*}
    q(\theta^{\bm{h}}\mid \bm{h} \circ \bm{x}_1\circ{\bm{x}}_2) &=
\frac{q(\theta^{\bm{h}}\mid \bm{h} \circ \bm{x}_1)\,q(\bm{x}_2\mid \bm{h} \circ \bm{x}_1,\theta^{\bm{h}})}{\sum_{\theta\in\Theta} q(\theta\mid \bm{h} \circ \bm{x}_1)\,q(\bm{x}_2\mid \bm{h} \circ \bm{x}_1,\theta)} \\
&\geq \,  \frac{q(\theta^{\bm{h}} \mid \bm{h} \circ \bm{x}_1)}{\sum_{\theta\in\Theta} q(\theta\mid \bm{h} \circ \bm{x}_1)} \\
&= \, q(\theta^{\bm{h}} \mid \bm{x}_1),
\end{align*}
which turns out $\mathcal{A}^{\bm{h}}_\Theta(\bm{x}_1 \circ \bm{x}_2)\, \leq \,\mathcal{A}^{\bm{h}}_\Theta(\bm{x}_1)$ by the definition~\ref{def: dominated task and ambiguity}.
\end{proof}

\begin{mypro}{\ref{proposition: A ≤ 1 - E}}
Let $\mathcal{E}^{\bm{h}}(\bm{x}) := -\sum_{\theta \in \Theta} q(\theta\mid \bm{h} \circ \bm{x})\,\log q(\theta\mid \bm{h} \circ \bm{x})$ be the Shannon entropy of the posterior distribution $q(\theta \mid \bm{h} \circ \bm{x})$. Then
\begin{align*}
    \mathcal{A}_{\Theta}^{\bm{h}}(\bm{x}) \;\leq \; 1 - \exp\big(-\mathcal{E}^{\bm{h}}(\bm{x})\big).
\end{align*}
\end{mypro}

\begin{proof}
Since $q(\theta\mid \bm{h} \circ \bm{x})\le 1 - \mathcal{A}_\Theta^{\bm{h}}(\bm{x})$ for all $\theta$ due to the definition of $\mathcal{A}^{\bm{h}}_\Theta(\bm{x})$, we know
\begin{align*}
    \mathcal{E}^{\bm{h}}(\bm{x}) &= -\sum_{\theta \in \Theta} q(\theta\mid \bm{h} \circ \bm{x})\,\log q(\theta\mid \bm{h} \circ\bm{x}) \\&\geq -\sum_{\theta \in \Theta} q(\theta\mid \bm{h} \circ \bm{x})\,\log \big(1 - \mathcal{A}_\Theta^{\bm{h}}(\bm{x})\big) \\
    &= -\log \big(1 - \mathcal{A}_\Theta^{\bm{h}}(\bm{x})\big).
\end{align*}
Hence $1 - \mathcal{A}^{\bm{h}}_\Theta(\bm{x}) \geq \exp\big(-\mathcal{E}^{\bm{h}}(\bm{x})\big)$, which is equivalent to 
\begin{align*}
    \mathcal{A}^{\bm{h}}_\Theta(\bm{x}) \;\le\; 1 - \exp\big(-\mathcal{E}^{\bm{h}}(\bm{x})\big).
\end{align*}
\end{proof}
\section{Pretraining via Auto-Regression}\label{Appendix: pretraining}
In this section, we provide a rigorous proof for Theorem~\ref{Theorem: pretraining}, which necessitates a comprehensive analysis of the pretraining error. To this end, we first introduce several essential technical lemmas and then present the proof of our main result. For notational convenience, given two non-negative sequences $(a_i)$ and $b_i$, we write $A \lesssim B$ to denote that $a_i \leq Ab_i$ holds for some constant $A > 0$. We note that the proof of Theorem~\ref{Theorem: pretraining} requires an analysis of generalization error and memorization capacity; to ensure the conciseness of the main line of proof, we have deferred these proofs to Appendix~\ref{section: generlization} and Appendix~\ref{Section: approximation}, respectively.
\subsection{Technical Lemmas}
\begin{lemma}[\cite{Maurer2016contraction}, Corollary 4]\label{lemma: contraction}
    Let $\mathcal{X}$ be any set, $(x_1, \ldots, x_{n'}) \in \mathcal{X}^n$, let $F$ be a class of functions $f : \mathcal{X} \to \ell_2$ and let $h_i: \ell_2 \to \R$ have Lipschitz norm $L$. Then
    \begin{align*}
        \E\sup_{f \in F}\sum_i \sigma_ih_i\big(f(x_i)\big) \leq \sqrt{2}L\E\sup_{f \in F}\sum_{i,k}\sigma_{ik}f_k(x_i)
    \end{align*}
    where $\sigma_{ik}$ is an independent doubly indexed Rademacher sequence and $f_k(x_i)$ is the $k$-th component of $f(x_i)$. 
\end{lemma}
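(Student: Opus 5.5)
The plan is the standard one-coordinate-at-a-time comparison argument for contraction inequalities, with the scalar Lipschitz step replaced by a Khintchine-type lower bound for vectors. By induction it suffices to show the following. For any function class $F$, any functional $G: F \to \R$ (collecting the remaining terms, with the other Rademachers held fixed), any point $x$ and any $L$-Lipschitz $h:\ell_2\to\R$,
\begin{align*}
    \E_{\sigma}\sup_{f\in F}\big\{\sigma h(f(x)) + G(f)\big\} \leq \E_{(\sigma_k)}\sup_{f\in F}\Big\{\sqrt{2}L\sum_k \sigma_k f_k(x) + G(f)\Big\}.
\end{align*}
We replace the terms $\sigma_i h_i(f(x_i))$ one index $i$ at a time, conditioning on the other signs via Fubini. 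Each replacement is one application of this one-step inequality, and summing the $n'$ steps gives the stated bound. I will assume the suprema are finite; otherwise the argument is applied to finite subclasses and passed to the limit.

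For the one-step inequality, I average over $\sigma\in\{\pm1\}$ and write the left-hand side as $\frac12\sup_{f,f'}\{h(f(x)) - h(f'(x)) + G(f) + G(f')\}$. The Lipschitz property then bounds this by $\frac12\sup_{f,f'}\{L\|f(x)-f'(x)\|_2 + G(f)+G(f')\}$.

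The key ingredient is Khintchine's inequality with Szarek's optimal constant: for any $v\in\ell_2$,
\begin{align*}
    \|v\|_2 \leq \sqrt{2}\,\E\Big|\sum_k \sigma_k v_k\Big|.
\end{align*}
Applied to $v = f(x)-f'(x)$, this bounds $\frac12\sup_{f,f'}\{\cdots\}$ by
\begin{align*}
    \tfrac12\sup_{f,f'}\E\Big\{\sqrt2 L\Big|\sum_k\sigma_k(f_k(x)-f'_k(x))\Big| + G(f)+G(f')\Big\}.
\end{align*}
Pulling the supremum inside the expectation only increases the bound.

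Next I remove the absolute value. The expression is symmetric under swapping $f\leftrightarrow f'$, and the swap flips the sign inside the absolute value. So the supremum over pairs equals the supremum of $\sqrt2L\sum_k\sigma_k f_k(x) + G(f) - \sqrt2L\sum_k\sigma_k f'_k(x) + G(f')$. This splits as a sum of two independent suprema, one over $f$ and one over $f'$. Taking expectations, and using that $-\sigma$ has the same law as $\sigma$, the two halves are equal. The factor $\frac12$ then cancels, which yields exactly the right-hand side of the one-step inequality.

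The main obstacle is the Khintchine step with the sharp constant $\sqrt2$; the rest is bookkeeping. I would cite Szarek's result, $\E|\sum\sigma_kv_k|\ge\|v\|_2/\sqrt2$, rather than reprove it. For infinitely many nonzero coordinates I would pass to a limit over finite truncations, using dominated convergence.
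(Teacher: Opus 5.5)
Your proposal is correct and is essentially the argument of the cited source: the paper gives no proof of its own and imports the result from \cite{Maurer2016contraction}. Maurer's proof is exactly your one-sign-at-a-time induction. Its one-step inequality is obtained the same way: Lipschitz bound, then Szarek's sharp Khintchine constant $\|v\|_2\le\sqrt{2}\,\E|\sum_k\sigma_k v_k|$, then removal of the absolute value by the $f\leftrightarrow f'$ swap and the symmetry of $\sigma$. Your reductions to finite subclasses and finite truncations are also the ones Maurer uses.
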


\begin{lemma}[Khintchine inequality]\label{lemma: Khintchine inequality}
    Let $\{\sigma_i\}_{i \in [n']}$ be a sequence with Rademacher distribution. Let $0 < u < \infty$ and let $x_1, \ldots, x_{n'} \in \R$. Then
    \begin{align*}
        \Big(\E\Big| \sum_{i=1}^{n'}\sigma_ix_i\Big|^u\Big)^{1/u} \leq C_u\Big(\sum_{i=1}^{n'}|x_i|^2\Big)^{1/2}
    \end{align*}
    for some constants $C_u > 0$. In particular, $C_u = 1$ when $0 < u \leq 2$.
\end{lemma}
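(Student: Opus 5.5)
We split on the value of $u$. The range $0<u\le 2$ follows from monotonicity of $L^u$ norms. The range $u>2$ follows from a sub-Gaussian tail bound integrated against $t^{u-1}$. Throughout we write $S:=\sum_{i=1}^{n'}\sigma_i x_i$ and $s^2:=\sum_{i=1}^{n'} x_i^2$. If $s=0$ the claim is trivial, so we assume $s>0$.

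\textbf{Case $0<u\le 2$.} First compute the second moment. Since the $\sigma_i$ are independent with $\E\sigma_i=0$ and $\sigma_i^2=1$, the cross terms vanish and
\begin{align*}
    \E S^2=\sum_{i,k}x_ix_k\E[\sigma_i\sigma_k]=s^2 .
\end{align*}
Next, on a probability space the map $u\mapsto(\E|S|^u)^{1/u}$ is nondecreasing. This is Lyapunov's inequality, i.e.\ Jensen applied to the concave function $z\mapsto z^{u/2}$ on $[0,\infty)$. Hence $(\E|S|^u)^{1/u}\le(\E S^2)^{1/2}=s$, which gives $C_u=1$.

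\textbf{Case $u>2$.} We proceed in three steps.

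First, bound the moment generating function. By independence and the elementary inequality $\cosh z\le e^{z^2/2}$ (compare the Taylor series termwise, using $(2k)!\ge 2^k k!$),
\begin{align*}
    \E e^{\lambda S}=\prod_{i=1}^{n'}\cosh(\lambda x_i)\le e^{\lambda^2 s^2/2}\qquad\text{for all }\lambda\in\R .
\end{align*}

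Second, convert this into a tail bound. Apply Markov's inequality to $e^{\lambda S}$, optimize at $\lambda=t/s^2$, and use the symmetry of $S$. This yields
\begin{align*}
    \P(|S|>t)\le 2e^{-t^2/(2s^2)}.
\end{align*}

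Third, integrate the tail. By the layer-cake formula,
\begin{align*}
    \E|S|^u=\int_0^\infty u t^{u-1}\P(|S|>t)\,dt\le 2u\int_0^\infty t^{u-1}e^{-t^2/(2s^2)}\,dt .
\end{align*}
The substitution $w=t^2/(2s^2)$ evaluates the right-hand side as $u\,2^{u/2}\Gamma(u/2)\,s^u$. Therefore the inequality holds with $C_u=\big(u2^{u/2}\Gamma(u/2)\big)^{1/u}$, a constant that depends only on $u$.

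The only step requiring any care is the tail-integration computation in the case $u>2$, and even that is routine. The constant is not sharp, but the statement only requires existence of some $C_u>0$, so no optimization is needed. The resulting bound will be used later with $u=1$ or $u=2$, where $C_u=1$, to control Rademacher averages after the contraction of Lemma~\ref{lemma: contraction}.
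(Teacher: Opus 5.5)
Your proof is correct and complete. The paper never proves this lemma. It is quoted as a classical fact among the technical lemmas of the pretraining appendix, and it is invoked only once, with $u=1$, to bound $\E_{\bm{\xi}}\big|\frac{1}{Nn}\sum_{i,j}\xi_{ij}\log q(\bm{t}^{(i)}_j\mid \bm{h}^{(i)}_j)\big|$ by $\frac{1}{Nn}\big(\sum_{i,j}(\log q(\bm{t}^{(i)}_j\mid \bm{h}^{(i)}_j))^2\big)^{1/2}$.

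So the comparison is between a bare citation and your self-contained standard argument, which has two branches:
\begin{itemize}
\item For $0<u\le 2$: the orthogonality computation $\E S^2=s^2$ followed by Lyapunov/Jensen gives $C_u=1$. This constant is sharp, with equality when $n'=1$.
\item For $u>2$: the bound $\cosh z\le e^{z^2/2}$, the Chernoff tail $\P(|S|>t)\le 2e^{-t^2/(2s^2)}$, and the layer-cake integral give the explicit but non-sharp constant $C_u=\big(u2^{u/2}\Gamma(u/2)\big)^{1/u}$. I checked the substitution $w=t^2/(2s^2)$, and the value $u\,2^{u/2}\Gamma(u/2)\,s^u$ is right.
\end{itemize}

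What your write-up buys over the citation is transparency about hypotheses. The statement says only ``a sequence with Rademacher distribution'', leaving independence implicit, but it is genuinely needed. If all $\sigma_i$ are equal, then $\E S^2=(\sum_i x_i)^2$, which exceeds $\sum_i x_i^2$ already at $u=2$.

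Your proof also shows exactly how much independence each branch uses. The first branch needs only $\E[\sigma_i\sigma_k]=0$ for $i\neq k$, while the second uses full independence to factor the moment generating function. Since the paper's single application has $u=1$, only your first, elementary branch is ever used downstream.
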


\begin{lemma}[Pinsker's inequality]\label{lemma: Pinsker's inequality}
    If $\P$ and $\mathbb{Q}$ are two probability distributions on a measurable space $(\Omega, \Sigma)$, then
    \begin{align*}
        \mathrm{TV}(\P, \mathbb{Q}) \leq \sqrt{\frac{1}{2}D_{KL}(\P \,\|\, \mathbb{Q})}.
    \end{align*}
\end{lemma}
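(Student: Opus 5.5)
The plan is the classical two-step argument: first reduce to a two-point (Bernoulli) comparison by coarse-graining, then prove the binary inequality by one-variable calculus. Throughout, we may assume $D_{KL}(\P\,\|\,\mathbb{Q})<\infty$, since otherwise there is nothing to prove. In that case $\P\ll\mathbb{Q}$, and we work with densities $p=d\P/d\mu$ and $q=d\mathbb{Q}/d\mu$ with respect to $\mu=\P+\mathbb{Q}$. In the countable setting used in the paper these are simply the probability mass functions.

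\textbf{Step 1 (reduction to a binary event).} Let $A=\{x: p(x)>q(x)\}$, and set $a=\P(A)$ and $b=\mathbb{Q}(A)$. Then the supremum characterization of total variation recalled in Section~\ref{section: preliminaries} gives
\begin{align*}
\mathrm{TV}(\P,\mathbb{Q})=\P(A)-\mathbb{Q}(A)=a-b .
\end{align*}
Next we show the data-processing bound
\begin{align*}
D_{KL}(\P\,\|\,\mathbb{Q})\ \ge\ d(a\,\|\,b):=a\log\frac{a}{b}+(1-a)\log\frac{1-a}{1-b}.
\end{align*}
This follows from the log-sum inequality, applied separately on $A$ and on $A^c$: for the part of the integral over $A$,
\begin{align*}
\int_A p\log\frac{p}{q}\,d\mu\ \ge\ \P(A)\log\frac{\P(A)}{\mathbb{Q}(A)},
\end{align*}
and similarly on $A^c$. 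Equivalently, one can apply Jensen's inequality to the convex function $t\mapsto t\log t$ under the normalized restriction of $\mathbb{Q}$ to each piece. Degenerate cases such as $b=0$ or $a\in\{0,1\}$ are handled with the conventions $0\log 0=0$ and $0\log(0/0)=0$.

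\textbf{Step 2 (binary Pinsker).} It remains to prove $d(a\,\|\,b)\ge 2(a-b)^2$ for all $a,b\in[0,1]$. Fix $a$ and define
\begin{align*}
g(b)=d(a\,\|\,b)-2(a-b)^2 ,\qquad b\in(0,1).
\end{align*}
We have $g(a)=0$, and differentiating gives
\begin{align*}
g'(b)=-\frac{a}{b}+\frac{1-a}{1-b}+4(a-b)=(b-a)\Big(\frac{1}{b(1-b)}-4\Big).
\end{align*}
Because $b(1-b)\le 1/4$, the second factor is nonnegative. Hence $g'(b)\le 0$ for $b\le a$ and $g'(b)\ge 0$ for $b\ge a$, so $g$ attains its minimum at $b=a$ and $g\ge 0$ on $(0,1)$. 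The endpoint cases $b\in\{0,1\}$ follow either by continuity or because $d(a\,\|\,b)=\infty$ there.

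Combining the two steps yields $D_{KL}(\P\,\|\,\mathbb{Q})\ge 2\,\mathrm{TV}(\P,\mathbb{Q})^2$, which rearranges to $\mathrm{TV}(\P,\mathbb{Q})\le\sqrt{\tfrac12 D_{KL}(\P\,\|\,\mathbb{Q})}$.

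The main obstacle is Step 1, the data-processing inequality. It is the only place where the structure of general measurable spaces matters, and it requires careful use of the log-sum inequality together with the zero-probability conventions. Step 2 is a routine one-variable calculus check once the derivative has been factored as above.
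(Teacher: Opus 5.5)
The paper gives no proof of this lemma. It is quoted as a standard tool in Appendix~\ref{Appendix: pretraining}, next to the contraction and Khintchine inequalities, and used only once: in the proof of Theorem~\ref{Theorem: pretraining}, to convert the expected KL risk into an $L_1$ bound for next-token distributions on the finite vocabulary $\mathcal{V}$. So there is no in-paper argument to compare against. Your proposal supplies the classical proof, and it is correct: you coarse-grain to the event $A=\{p>q\}$ via the log-sum/data-processing inequality, then prove the two-point inequality by calculus. In particular, $g'(b)=(b-a)\big(\frac{1}{b(1-b)}-4\big)$ is right, and $b(1-b)\le\frac14$ gives exactly the sign pattern you need.

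A few tidy-ups:
\begin{itemize}
\item The identity $\mathrm{TV}(\P,\mathbb{Q})=\P(A)-\mathbb{Q}(A)$ follows most directly from $\mathrm{TV}=\frac12\int|p-q|\,d\mu$ and $\int(p-q)\,d\mu=0$, rather than from the supremum form.
\item When you split the KL integral over $A$ and $A^c$, note that $p\log(p/q)\ge p-q\ge -q$. So the negative part is integrable and the split is legitimate.
\item Since $a-b=\mathrm{TV}\ge 0$, and $D_{KL}<\infty$ forces $b>0$ whenever $a>0$, you can assume $0<b<a\le 1$ from the start; the case $a=b$ is trivial. This removes every endpoint convention in both steps at once.
\end{itemize}

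Finally, the measure-theoretic care you flag as the main obstacle is more than the paper needs. In its discrete setting, Step~1 is just the finite log-sum inequality. The lemma also enters only a $\lesssim$ chain, so even the sharp constant $\frac12$ is inessential there. Incidentally, that chain should read $\|p-q\|_{L_1}^2\le 4\,\mathrm{TV}^2\le 2D_{KL}$ rather than $=2\,\mathrm{TV}^2\le D_{KL}$.
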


\subsection{Main Results}
Now let us prove Theorem~\ref{Theorem: pretraining}, which is stated as follows:
\begin{mythm}{\ref{Theorem: pretraining}}
    If Assumption~\ref{assumption: separable token representation} holds, then for any token $\bm{t} \in \mathcal{V}$ and $\bm{h} \in \R^{d \times n}$, by setting $W = \mathcal{O}(|\mathcal{V}|^{n+1}), D = 1$, we have
    \begin{align*}
        \sup_{(\bm{t}, \bm{h}) \sim q(\bm{t} \mid \bm{h})}\Big|\hat{p}(\bm{t} \mid \bm{h}) - q(\bm{t} \mid \bm{h})\Big| \leq \mathcal{O}\Big(\frac{|\mathcal{V}|^2M}{\sqrt[4]{Nn}}\Big) + \mathcal{O}\Big(\sqrt{\frac{\ln(1/\delta)}{Nn}}\Big)
    \end{align*}
    holds with probability at least $1 - \delta$ for any $\delta \in (0,1)$.
\end{mythm}
\begin{proof}
Recall the population risk is defined by
\begin{align*}
    \mathcal{R}(p) = \E_{(\bm{t}, \bm{h}) \sim q(\bm{t}, \bm{h})}\Big\{\log \frac{q(\bm{t} \mid \bm{h})}{p(\bm{t} \mid \bm{h})}\Big\},
\end{align*}
The corresponding risk at the sample level can be written as:
\begin{align*}
    \widehat{\mathcal{R}}(p) = \sum_{i=1}^N\sum_{j=1}^n\log\frac{q(\bm{t}_j^{(i)} \mid \bm{h}^{(i)}_j)}{p(\bm{t}_j^{(i)} \mid \bm{h}^{(i)}_j)}
\end{align*}
In this context, let $p$ be any element in $\mathcal{F}$, then
\begin{align*}
    \mathcal{R}(\hat{p}) &\leq \mathcal{R}(\hat{p}) - \widehat{\mathcal{R}}(\hat{p}) + \widehat{\mathcal{R}}(\hat{p}) - \mathcal{R}(p)  + \mathcal{R}(p)  \\
    &\leq \mathcal{R}(\hat{p}) - \widehat{\mathcal{R}}(\hat{p}) + \widehat{\mathcal{R}}(p) - \mathcal{R}(p)  + \mathcal{R}(p) \\
    &\leq 2\sup_{p \in \mathcal{F}}\big|\mathcal{R}(p) - \widehat{\mathcal{R}}(p)\big| + \mathcal{R}(p)
\end{align*}
Due to the arbitrariness of $p \in \mathcal{F}$, Taking infimum on both side turns out:
\begin{align}\label{eq: R(hatp) ≤ 2sup + inf}
    \mathcal{R}(\hat{p}) \leq 2\sup_{p \in \mathcal{F}}\big|\mathcal{R}(p) - \widehat{\mathcal{R}}(p)\big| +  \inf_{p \in \mathcal{F}}\mathcal{R}(p).
\end{align}
We furthermore define the generalization error and approximation error as
\begin{align*}
    \mathcal{E}_{\mathrm{sta}} = \E_{D_N}\Big\{\sup_{p \in \mathcal{F}}\Big|\widehat{\mathcal{R}}(p) -  \mathcal{R}(p)\Big|\Big\}, \quad \mathcal{E}_{\mathrm{app}} = \inf_{p \in \mathcal{F}}\mathcal{R}(p).
\end{align*}
Taking expectation on both sides of~\eqref{eq: R(hatp) ≤ 2sup + inf} obtains
\begin{align*}
       \E_{D_N}\big\{\mathcal{R}(\hat{p})\big\} = 2\mathcal{E}_{\mathrm{sta}} + \mathcal{E}_{\mathrm{app}}.
\end{align*}

Wherein, the approximation error $\mathcal{E}_{\mathrm{app}} = \inf_{p \in \mathcal{F}}\mathcal{R}(p)$ vanishes provided the condition that the width of Transformer $W \geq \mathcal{O}\big(|\mathcal{V}|^{n+1}\big)$ and its depth $D \geq 1$. We provide a formal proof in Theorem~\ref{theorem: memorization} in Section~\ref{Section: approximation}. Regarding the generalization error, we utilize the standard symmetrization technique by constructing a ghost dataset $D'_N = \big\{(\bm{t}^{'(i)}_j, \bm{h}^{'(i)}_j)\big\}_{i \in [N], j \in [n]}$ that is independent and identically distributed (i.i.d.) with respect to $D_N$.
\begin{align*}
    \mathcal{E}_{\mathrm{sta}} &= \E_{D_N}\Big\{\sup_{p \in \mathcal{F}}\Big|\widehat{\mathcal{R}}(p) - \mathcal{R}(p)\Big|\Big\}\\
    &= \E_{D_N}\Big[\sup_{p \in \mathcal{F}}\Big|\widehat{\mathcal{R}}(p) - \E_{D_N^\prime}\big\{\widehat{\mathcal{R}}(p)\big\}\Big|\Big]\\
    &= \E_{D_N, D_N', \bm{\xi}}\Big[\sup_{p \in \mathcal{F}}\Big|\frac{1}{Nn}\sum_{i=1}^N\sum_{j=1}^n\xi_{ij}\Big\{\log\frac{q(\bm{t}^{(i)}_j \mid \bm{h}^{(i)}_j)}{p(\bm{t}^{(i)}_j \mid \bm{h}^{(i)}_j)} - \log\frac{q(\bm{t}^{'(i)}_j \mid \bm{h}^{'(i)}_j)}{p(\bm{t}^{'(i)}_j \mid \bm{h}^{'(i)}_j)}\Big\}\Big|\Big]
\end{align*}
where $\xi_{ij}, i \in [N], j \in [n]$ are i.i.d Rademacher random variables (i.e. $\P(\sigma_{ij} = 1) = 1/2$ $\P(\sigma_{ij} = -1) = 1/2$). The last equality is derived from its symmetrization. Furthermore, by triangular inequality we have:
\begin{align*}
    &\E_{D_N, D_N', \bm{\xi}}\Big[\sup_{p \in \mathcal{F}}\Big|\frac{1}{Nn}\sum_{i=1}^N\sum_{j=1}^n\xi_{ij}\Big\{\log\frac{q(\bm{t}^{(i)}_j \mid \bm{h}^{(i)}_j)}{p(\bm{t}^{(i)}_j \mid \bm{h}^{(i)}_j)} - \log\frac{q(\bm{t}^{'(i)}_j \mid \bm{h}^{'(i)}_j)}{p(\bm{t}^{'(i)}_j \mid \bm{h}^{'(i)}_j)}\Big\}\Big|\Big] \\
    &\leq 2\E_{D_N, \bm{\xi}}\Big[\sup_{p \in \mathcal{F}}\Big|\frac{1}{Nn}\sum_{i=1}^N\sum_{j=1}^n\xi_{ij}\log\Big\{\frac{q(\bm{t}^{(i)}_j \mid \bm{h}^{(i)}_j)}{p(\bm{t}^{(i)}_j \mid \bm{h}^{(i)}_j)}\Big\}\Big|\Big] \\
    &\leq 2\E_{D_N, \bm{\xi}}\Big[\sup_{p \in \mathcal{F}}\Big|\frac{1}{Nn}\sum_{i=1}^N\sum_{j=1}^n\xi_{ij}\Big\{\log q(\bm{t}^{(i)}_j \mid \bm{h}^{(i)}_j) - \log p(\bm{t}^{(i)}_j \mid \bm{h}^{(i)}_j)\Big\}\Big|\Big] \\
    &\leq  2\E_{D_N, \bm{\xi}}\Big\{\Big|\frac{1}{Nn}\sum_{i=1}^N\sum_{j=1}^n\xi_{ij}\log q(\bm{t}^{(i)}_j \mid \bm{h}^{(i)}_j)\Big|\Big\}+ 2\E_{D_N, \bm{\xi}}\Big\{\sup_{p \in \mathcal{F}}\Big|\frac{1}{Nn}\sum_{i=1}^N\sum_{j=1}^n\xi_{ij}\log p(\bm{t}^{(i)}_j \mid \bm{h}^{(i)}_j)\Big|\Big\}.
\end{align*}
The first yielded can be bounded by $\frac{\log(1/b)}{\sqrt{Nn}}$ by applying Khintchine inequality (Lemma~\ref{lemma: Khintchine inequality}) along with the assumption that $q(\bm{t} \mid \bm{h}) \geq b$:
\begin{align*}
    \E_{D_N, \bm{\xi}}\Big\{\Big|\frac{1}{Nn}\sum_{i=1}^N\sum_{j=1}^n\xi_{ij}\log q(\bm{t}^{(i)}_j \mid \bm{h}^{(i)}_j)\Big|\Big\} \leq \frac{1}{Nn}\sqrt{\sum_{i=1}^N\sum_{j=1}^n \big\{\log q(\bm{t}^{(i)}_j \mid \bm{h}^{(i)}_j)\big\}^2} \leq \frac{\log(1/b)}{\sqrt{Nn}}.
\end{align*}
As for the second term, by contraction principle of Rademacher complexity (\citep{Maurer2016contraction}), we know
\begin{align*}
    \E_{D_N, \bm{\xi}}\Big\{\sup_{p \in \mathcal{F}}\Big|\frac{1}{Nn}\sum_{i=1}^N\sum_{j=1}^n\xi_{ij}\log p(\bm{t}^{(i)}_j \mid \bm{h}^{(i)}_j)\Big|\Big\}\leq \sqrt{2}b^{-1}\E_{D_N, \bm{\xi}}\Big\{\sup_{p \in \mathcal{F}}\Big|\frac{1}{Nn}\sum_{i=1}^N\sum_{j=1}^n\xi_{ij}p(\bm{t}^{(i)}_j \mid \bm{h}^{(i)}_j)\Big|\Big\}
\end{align*}

Furthermore, by noting that the probability $p(\bm{t} \mid \bm{h})$ corresponds to a specific element of the vector-valued Transformer output $\bm{T}(\bm{h}) \in \R^{|\mathcal{V}|}$, we can treat the probability mapping as a composition of a 
$1$-Lipschitz coordinate projection and the underlying Transformer architecture. Specifically, since $p(\bm{t} \mid \bm{h}) = (\bm{T}(\bm{h}))_{\mathrm{idx}(\bm{t})}$, where $\mathrm{idx}(\bm{t})$ represents the index of token $\bm{t}$ in vocabulary, we can invoke the vector-valued contraction principle (Lemma~\ref{lemma: contraction}). By conditioning on the dataset $D_N$, each coordinate projection is a $1$-Lipschitz mapping. Summing over all possible coordinate function classes and exploiting the architectural symmetry (i.e., $\mathcal{T}_1 = \cdots = \mathcal{T}_{|\mathcal{V}|}$), we obtain:
\begin{align*}
    \E_{D_N, \bm{\xi}}\Big\{\sup_{p \in \mathcal{F}}\Big|\frac{1}{Nn}\sum_{i=1}^N\sum_{j=1}^n\xi_{ij}p(\bm{t}^{(i)}_j \mid \bm{h}^{(i)}_j)\Big|\Big\} \leq \sqrt{2}|\mathcal{V}|\E_{D_N, \bm{\xi}}\Big\{\sup_{T \in \mathcal{T}_1(W,D)}\frac{1}{Nn}\sum_{i=1}^N\sum_{j=1}^n\xi_{ij}T(\bm{h}^{(i)}_j)\Big\},
\end{align*}
where $\mathcal{T}_i(W,D):=\{{T}_i:\boldsymbol{T}\in\mathcal{T}(W,D)\}$ for each $i\in[|\mathcal{V}|]$.

Combing above with the result yielded in Theorem~\ref{theorem: generlization} in Section~\ref{section: generlization}, we eventually obtain:
\begin{align}
    \mathcal{E}_{\mathrm{sta}} &\lesssim |\mathcal{V}|\sqrt{|\mathcal{V}|d^2(d+M)(n^2+dn+|\mathcal{V}|d+|\mathcal{V}|M)}\frac{\ln Nn}{\sqrt{Nn}} = \mathcal{O}\Big(\frac{|\mathcal{V}|^2M}{\sqrt{Nn}}\Big).
\end{align}

On the other hand, we have
\begin{align*}
    \sup_{(\bm{t},\bm{h}) \sim q(\bm{t}, \bm{h})}\Big|\hat{p}(\bm{t} \mid \bm{h}) - q(\bm{t} \mid \bm{h})\Big| &\leq b^{-1}\Big\|\hat{p}(\bm{t}\mid \bm{h}) - q(\bm{t} \mid \bm{h})\Big\|_{L_1} \\
    &= b^{-1}\E_{\bm{h} \sim q(\bm{h})}\Big\{\Big\|\hat{p}(\bm{t} \mid \bm{h}) - q(\bm{t} \mid \bm{h})\Big\|_{L_1}\Big\},
\end{align*}
where the first inequality stems from
\begin{align*}
    \Big\|\hat{p}(\bm{t}\mid \bm{h}) - q(\bm{t} \mid \bm{h})\Big\|_{L_1} &= \sum_{(\bm{t}, \bm{h}) \sim q(\bm{t}, \bm{h})}q(\bm{t}, \bm{h})\Big\vert\hat{p}(\bm{t} \mid \bm{h}) - q(\bm{t} \mid \bm{h})\Big\vert \\
    &\geq b\sup_{(\bm{t}, \bm{h}) \sim q(\bm{t}, \bm{h})}\Big|\hat{p}(\bm{t}\mid\bm{h}) - q(\bm{t} \mid \bm{h})\Big|.
\end{align*}
Further applying Jensen's inequality and Pinsker's inequality (\ref{lemma: Pinsker's inequality}) yields
\begin{align*}
    \Big\{\E_{\bm{h} \sim q(\bm{h})}\Big\|p(\bm{t} \mid \bm{h}) - q(\bm{t} \mid \bm{h})\Big\|_{L_1}\Big\}^2 &\leq \E_{\bm{h} \sim q(\bm{h})}\Big\{\Big\|p(\bm{t} \mid \bm{h}) - q(\bm{t}\mid \bm{h})\Big\|_{L_1}^2\Big\} \\
    &=2\E_{\bm{h}\sim q(\bm{h})}\Big\{\mathrm{TV}\Big(p(\bm{t} \mid \bm{h}) , q(\bm{t}\mid \bm{h})\Big)^2\Big\} \\
    &\leq \E_{\bm{h} \sim q(\bm{h})}\Big\{D_{\mathrm{KL}}\Big(q(\bm{x}\mid \bm{h})\, \Big\| \,p(\bm{x}\mid \bm{h})\Big)\Big\}
\end{align*}
where the first inequality are derived from Jensen's inequality while the second one stems from Pinsker's inequality. Combining above results together obtains
\begin{align*}
    \Big[\E_{D_N}\Big\{\sup_{(\bm{t},\, \bm{h}) \sim q(\bm{t},\, \bm{h})}\Big|\hat{p}(\bm{t}\mid \bm{h}) - q(\bm{t} \mid \bm{h})\Big|\Big\}\Big]^2 &\lesssim \E_{D_N}\Big[\Big\{\sup_{(\bm{t}, \bm{h}) \sim q(\bm{t}, \bm{h})}\Big|\hat{p}(\bm{t}\mid \bm{h}) - q(\bm{t} \mid \bm{h})\Big|\Big\}^2\Big]
    \\ 
    &\lesssim \Big(2\mathcal{E}_{\mathrm{sta}} + \mathcal{E}_{\mathrm{app}}\Big) \lesssim \mathcal{O}\Big(\frac{|\mathcal{V}|^2M}{\sqrt{Nn}}\Big).
\end{align*}

Building on it, By invoking McDiarmid’s concentration inequality, we conclude that for any $\delta \in (0,1)$, the following inequality holds with probability at least $1 - \delta$:
\begin{align}\label{eq: sup < 1/√Nn}
    \sup_{(\bm{t}, \bm{h}) \sim q(\bm{t}, \bm{h})}\Big|\hat{p}(\bm{t} \mid \bm{h}) - q(\bm{t} \mid \bm{h})\Big| \leq \mathcal{O}\Big(\frac{|\mathcal{V}|^2M}{\sqrt[4]{Nn}}\Big) + \mathcal{O}\Big(\sqrt{\frac{\ln(1/\delta)}{Nn}}\Big).
\end{align}
\end{proof}
plugging the identity $M \leq |\mathcal{V}|^{n}$ into above completes the proof.

Finally, given a prompt $\bm{x} \in \R^{d \times n}$, as far as the probability generating a sentence $\bm{y}$ of length $r$, i.e., $\bm{y} \in \R^{d \times r}$, we have following upper bound based on~\eqref{eq: sup < 1/√Nn},
\begin{mycor}{\ref{corollary: pretraining}}
    If Assumption~\ref{assumption: separable token representation} holds, then for any response $\bm{y} \in \R^{d \times r}$ and query $\bm{x} \in \R^{d \times n}$, by setting $W = \mathcal{O}(|\mathcal{V}|^{n+1}), D = 1$, we have
    \begin{align*}
        \Big|\hat{p}(\bm{y} \mid \bm{x}) - q(\bm{y} \mid \bm{x})\Big| < \mathcal{O}\Big(\frac{|\mathcal{V}|^{n+2}r}{\sqrt[4]{Nn}}\Big) + \mathcal{O}\Big(\sqrt{\frac{r^2\ln(1/\delta)}{Nn}}\Big)
    \end{align*}
    holds with probability at least $1 - \delta$ for any $\delta \in (0,1)$.
\end{mycor}
\begin{proof}
\begin{align*}
    \sup_{(\bm{y}, \bm{x})}\Big|\hat{p}(\bm{y} \mid \bm{x}) - q(\bm{y} \mid \bm{x})\Big| \lesssim \mathcal{O}\Big(\frac{|\mathcal{V}|^{n+2}r}{\sqrt[4]{Nn}}\Big) + \mathcal{O}\Big(\sqrt{\frac{r^2\ln(1/\delta)}{Nn}}\Big)
\end{align*}
with probability at least $1 - \delta$, which is the conclusion presented at Corollary~\ref{corollary: pretraining}. Specifically,
\begin{align*}
    \Big| \hat{p}(\bm{y} \mid \bm{x}) - q(\bm{y} \mid \bm{x}) \Big| &= \Big| \prod_{l=1}^r \hat{p}(\bm{y}_{:,l} \mid \bm{x} \circ \bm{y}_{:, \prec l}) - \prod_{l=1}^r q(\bm{y}_{:,l} \mid \bm{x} \circ \bm{y}_{:,\prec l})\Big| \\
    &\leq \Big| \prod_{l=1}^r \hat{p}(\bm{y}_{:,l} \mid \bm{x} \circ \bm{y}_{:,\prec l}) - q(\bm{y}_{:,r} \mid \bm{x} \circ \bm{y}_{:,\prec r})\prod_{l=1}^{r-1}\hat{p}(\bm{y}_{:,l} \mid \bm{x} \circ \bm{y}_{:,\prec l}) \\
    &\quad+ q(\bm{y}_{:,r} \mid \bm{x} \circ \bm{y}_{:,\prec r})\prod_{l=1}^{r-1}\hat{p}(\bm{y}_{:,l} \mid \bm{x} \circ \bm{y}_{:,\prec l}) - \prod_{l=1}^r q(\bm{y}_{:,l} \mid \bm{x} \circ \bm{y}_{:,\prec l})\Big| \\
    &\leq \Big|\hat{p}(\bm{y}_{:,r} \mid \bm{x} \circ \bm{y}_{:, \prec r}) - q(\bm{y}_{:,r} \mid \bm{x} \circ \bm{y}_{:, \prec r})\Big|\prod_{l=1}^{r-1}\hat{p}(\bm{y}_{:,l} \mid \bm{x} \circ \bm{y}_{:, \prec l}) \\
    &\quad  + q(\bm{y}_{:,r} \mid \bm{x} \circ \bm{y}_{:,\prec r})\Big|\prod_{l=1}^{r-1}\hat{p}(\bm{y}_{:,l} \mid \bm{x} \circ \bm{y}_{:,\prec l}) - \prod_{l=1}^{r-1} q(\bm{y}_{:,l} \mid \bm{x} \circ \bm{y}_{:,\prec l})\Big| \\
    &\lesssim \mathcal{O}\Big(\frac{|\mathcal{V}|^{n+2}}{\sqrt[4]{Nn}}\Big) + \mathcal{O}\Big(\sqrt{\frac{\ln(1/\delta)}{Nn}}\Big)+ \Big|\prod_{l=1}^{r-1}\hat{p}(\bm{y}_{:,l} \mid \bm{x} \circ \bm{y}_{:,\prec l}) - \prod_{l=1}^{r-1} q(\bm{y}_{:,l} \mid \bm{x} \circ \bm{y}_{:,\prec l})\Big| \\
    &\lesssim \cdots \lesssim \mathcal{O}\Big(\frac{|\mathcal{V}|^{n+2}r}{\sqrt[4]{Nn}}\Big) + \mathcal{O}\Big(\sqrt{\frac{r^2\ln(1/\delta)}{Nn}}\Big),
\end{align*}
which completes the proof.
\end{proof}

\section{Proof for In-Context Learning}\label{Appendix: Proof for In-Context Learning}
\begin{mythm}{\ref{theorem: ICL}}
    If Assumption~\ref{assumption: separable token representation},~\ref{assumption: task consistency},~\ref{assumption: tasks of delimiter},~\ref{assumption: nearly markov} and~\ref{Assumption: Bounded Priori Ratio} hold, then for any response $\bm{y} \in \R^{d \times r}$ and query $\bm{x} \in \R^{d \times n}$, by setting $W =\mathcal{O}(|\mathcal{V}|^{n+1}), D = 1$, we have
    \begin{align*}
        \Big| \hat{p}(\bm{y} \mid \mathcal{P}_{\mathrm{ICL}}) - q(\bm{y} \mid \bm{x},\, \theta_{\bm{x}})\Big| \leq  \mathcal{O}\Big(\frac{|\mathcal{V}|^{n+2}r}{\sqrt[4]{Nn}}\Big) + \mathcal{O}\Big(\sqrt{\frac{r^2\ln(1/\delta)}{Nn}}\Big)+r\phi + (e^{2n\phi}\cdot c\cdot\eps)^m \mathcal{A}_\Theta(\bm{x})
    \end{align*}
    holds with probability at least $1 - \delta$ for any $\delta \in (0,1)$.
\end{mythm}
\begin{proof}
    Similar to the proving method presented in Section~\ref{section: LLMs' comprehension}, we first decompose the objective to be three parts: 
     \begin{align*}
        \Big|\hat{p}(\bm{y}\mid \mathcal{P}_{\mathrm{ICL}}) - q(\bm{y} \mid \bm{x},\,\theta_{\bm{x}})\Big| &\leq \Big|\hat{p}(\bm{y} \mid \mathcal{P}_{\mathrm{ICL}}) - q(\bm{y} \mid \mathcal{P}_{\mathrm{ICL}})\Big| + \Big|q(\bm{y}\mid \mathcal{P}_{\mathrm{ICL}}) - q(\bm{y} \mid \mathcal{P}_{\mathrm{ICL}},\, \theta_{\bm{x}})\Big| \\
        &+ \Big|q(\bm{y} \mid \mathcal{P}_{\mathrm{ICL}}, \,\theta_{\bm{x}}) - q(\bm{y} \mid \bm{x}, \,\theta_{\bm{x}})\Big|,
    \end{align*}
    where the final term is bounded by $\mathcal{O}(r\phi)$, which directly follows from Assumption~\ref{assumption: nearly markov}. Specifically,
    \begin{align*}
        \Big|q(\bm{y} \mid \mathcal{P}_{\mathrm{ICL}}, \,\theta_{\bm{x}}) - q(\bm{y} \mid \bm{x}, \,\theta_{\bm{x}})\Big| &= \Big| q(\bm{y} \mid \mathcal{P}_{\mathrm{ICL}}, \, \theta_{\bm{x}}) - q(\bm{y} \mid \bm{x},\, \theta_{\bm{x}}) \Big| \\
        &= q(\bm{y} \mid \bm{x},\, \theta_{\bm{x}}) \cdot \left| \frac{q(\bm{y} \mid \mathcal{P}_{\mathrm{ICL}},\, \theta_{\bm{x}})}{q(\bm{y} \mid \bm{x},\, \theta_{\bm{x}})} - 1 \right| \\
        &\leq e^{r\phi} - 1 = \mathcal{O}(r\phi),
    \end{align*}
    where the last inequality stems from the Assumption~\ref{assumption: nearly markov}:
    \begin{align*}
        \Big|\log q\big(\bm{t} \mid \bm{h}\circ \bm{t}_{delim} \circ \bm{s},\, \theta\big) - \log q\big(\bm{t} \mid \bm{s}, \, \theta\big)\Big| \leq \phi \Rightarrow e^{-\phi}\leq \frac{q(\bm{t} \mid \bm{h}\circ \bm{t}_{delim} \circ \bm{s},\, \theta)}{q(\bm{t} \mid \bm{s}, \, \theta)} \leq e^{\phi},
    \end{align*}
    combining with the fact $\frac{q(\bm{y} \,|\, \mathcal{P}_{\mathrm{ICL}},\, \theta_{\bm{x}})}{q(\bm{y} \,|\, \bm{x},\, \theta_{\bm{x}})} = \frac{\prod_{l=1}^rq(\bm{y}_{:,l} \,|\, \mathcal{P}_{\mathrm{ICL}}\,\circ\, \bm{y}_{:, \prec l},\, \theta_{\bm{x}})}{\prod_{l=1}^rq(\bm{y}_{:,l} \,|\, \mathcal{P}_{\mathrm{ICL}}\,\circ \,\bm{y}_{:, \prec l},\, \theta_{\bm{x}})}$ yields:
    \begin{align*}
        e^{-r\phi}\leq \frac{q(\bm{y} \mid \mathcal{P}_{\mathrm{ICL}},\, \theta_{\bm{x}})}{q(\bm{y} \mid \bm{x},\, \theta_{\bm{x}})} \leq e^{r\phi}
    \end{align*}
    On the other hand, according to the pretraining results derived in Section~\ref{section: pretraining}, the first term is bounded by $\mathcal{O}\Big(\frac{|\mathcal{V}|^{n+2}r}{\sqrt[4]{Nn}}\Big) + \mathcal{O}\Big(\sqrt{\frac{r^2\ln(1/\delta)}{Nn}}\Big)$ with probability at least $1 - \delta$. To address the second term, we observe that
    \begin{align*}
        \Big|q(\bm{y} \mid \mathcal{P}_{\mathrm{ICL}}) - q(\bm{y} \mid \mathcal{P}_{\mathrm{ICL}}, \, \theta_{\bm{x}})\Big| &= \Big|\sum_{\theta \in \Theta}q(\bm{y}\mid \mathcal{P}_{\mathrm{ICL}},\,\theta)\,q(\theta \mid \mathcal{P}_{\mathrm{ICL}}) - \sum_{\theta \in \Theta}q(\theta \mid \mathcal{P}_{\mathrm{ICL}})\,q(\bm{y} \mid \mathcal{P}_{\mathrm{ICL}}, \, \theta_{\bm{x}})\Big| \\
        &= \Big|\sum_{\theta \in \Theta}q(\theta\mid \mathcal{P}_{\mathrm{ICL}})\Big\{ q(\bm{y}\mid \mathcal{P}_{\mathrm{ICL}}, \,\theta) - q(\bm{y}\mid\mathcal{P}_{\mathrm{ICL}}, \, \theta_{\bm{x}})\Big\} \Big| \\
        &\leq \sum_{\theta\in \Theta}q(\theta \mid \mathcal{P}_{\mathrm{ICL}})\Big|q(\bm{y} \mid \mathcal{P}_{\mathrm{ICL}}, \,\theta) - q(\bm{y} \mid \mathcal{P}_{\mathrm{ICL}}, \,\theta_{\bm{x}})\Big| \\
        &= \sum_{\theta \neq \theta_{\bm{x}}}q(\theta\mid \mathcal{P}_{\mathrm{ICL}})\Big|q(\bm{y}\mid \mathcal{P}_{\mathrm{ICL}},\, \theta) - q(\bm{y} \mid \mathcal{P}_{\mathrm{ICL}}, \, \theta_{\bm{x}})\Big| \\
        &\leq \sum_{\theta \neq \theta_{\bm{x}}}\,q(\theta \mid \mathcal{P}_{\mathrm{ICL}}) = \sum_{\theta \neq \theta_{\bm{x}}}\,q(\mathcal{P}_{\mathrm{ICL}},\, \theta) / q(\mathcal{P}_{\mathrm{ICL}}) \\
        &\leq \sum_{\theta \neq \theta_{\bm{x}}}q(\mathcal{P}_{\mathrm{ICL}},\, \theta) / q(\mathcal{P}_{\mathrm{ICL}}, \,\theta_{\bm{x}}).
    \end{align*}
    Inserting the delimiter tasks turns out:
    \begin{align*}
        &\sum_{\theta \neq \theta_{\bm{x}}}q(\mathcal{P}_{\mathrm{ICL}},\, \theta) / q(\mathcal{P}_{\mathrm{ICL}}, \,\theta_{\bm{x}}) \\
        &= \sum_{\theta \neq \theta_{\bm{x}}}\frac{\sum_{\theta^{(1)}_{delim} \in \mathcal{D}}\cdots \sum_{\theta^{(m)}_{delim} \in \mathcal{D}}q(\mathcal{P}_{\mathrm{ICL}},\, \theta,\, \theta^{(1)}_{delim} , \cdots,\, \theta^{(m)}_{delim})}{\sum_{\theta^{(1)}_{delim} \in \mathcal{D}}\cdots \sum_{\theta^{(m)}_{delim} \in \mathcal{D}}q(\mathcal{P}_{\mathrm{ICL}},\, \theta_{\bm{x}},\, \theta^{(1)}_{delim}, \cdots,\, \theta^{(m)}_{delim})}\\
        &= \sum_{\theta \neq \theta_{\bm{x}}}\frac{\sum_{\theta^{(1)}_{delim} \in \mathcal{D}}\cdots \sum_{\theta^{(m)}_{delim} \in \mathcal{D}}q(\mathcal{P}_{\mathrm{ICL}} \mid \theta,\, \theta^{(1)}_{delim}, \cdots,\, \theta^{(m)}_{delim})q(\theta)\big\{\prod_{i=1}^mq(\theta^{(i)}_{delim})\big\}}{\sum_{\theta^{(1)}_{delim} \in \mathcal{D}}\cdots \sum_{\theta^{(m)}_{delim} \in \mathcal{D}}q(\mathcal{P}_{\mathrm{ICL}}\mid \theta_{\bm{x}},\, \theta^{(1)}_{delim}, \cdots,\, \theta^{(m)}_{delim})q(\theta_{\bm{x}})\big\{\prod_{i=1}^mq(\theta^{(i)}_{delim})\big\}}.
    \end{align*}
    We further utilize Assumption~\ref{assumption: nearly markov} to decompose the yielded likelihood: 
    \begin{align}\label{eq: q(P|θ) ≤ e^{nmΦ}}
        &q(\mathcal{P}_{\mathrm{ICL}} \mid \theta,\, \theta^{(1)}_{delim}, \cdots,\, \theta^{(m)}_{delim}) = q(\bm{x} \mid \mathcal{P}_{delim}^{\preceq m}, \, \theta)\prod_{i=1}^m\Big\{q(\bm{x}^{(i)}\circ \bm{y}^{(i)} \mid \mathcal{P}_{delim}^{\prec i},\, \theta)q(\bm{t}^{(i)}_{delim} \mid \mathcal{P}^{\preceq i},\,\theta_{delim}^{(i)})\Big\} \notag\\
        &= q(\bm{x} \mid \mathcal{P}_{delim}^{\preceq m}, \, \theta)\prod_{i=1}^mq(\bm{x}^{(i)}\circ \bm{y}^{(i)} \mid \mathcal{P}_{delim}^{\prec i},\, \theta)\1\{\theta^{(i)}_{delim} = \theta_{\bm{t}^{(i)}_{delim}}\} \notag\\
        &\leq e^{n\phi}\prod_{l = 1}^nq(\bm{x}_{:, l} \mid \bm{x}_{:, \prec l}, \, \theta)\prod_{i=1}^{m}q(\mathcal{P}^{\succeq i}_{\mathrm{ICL}} \mid \mathcal{P}^{\prec i}_{\mathrm{ICL}}, \, \theta)\1\{\theta^{(i)}_{delim} = \theta_{\bm{t}^{(i)}_{delim}}\} \notag\\
        &= e^{n\phi}q(\bm{x} \mid \theta) \prod_{i=1}^{m}q(\mathcal{P}^{\succeq i}_{\mathrm{ICL}} \mid \mathcal{P}^{\prec i}_{\mathrm{ICL}}, \, \theta)\1\{\theta^{(i)}_{delim} = \theta_{\bm{t}^{(i)}_{delim}}\} \notag\\
        &\leq e^{nm\phi}q(\bm{x} \mid \theta)\prod_{i=1}^mq(\bm{x}^{(i)} \circ \bm{y}^{(i)} \mid \theta)\1\{\theta^{(i)}_{delim} = \theta_{\bm{t}^{(i)}_{delim}}\} 
    \end{align}
    where we define $\mathcal{P}^{\succeq i} = (\bm{x}^{(i)}\circ \bm{y}^{(i)}\big) \circ \bm{t}_{delim}^{(1)} \circ\cdot \cdots \circ (\bm{x}^{(m)} \circ \bm{y}^{(m)}) \circ \bm{t}_{delim}^{(m)} \circ \bm{x} \in \R^{d \times n}$ and recall $\mathcal{P}^{\prec i}_{delim} = (\bm{x}^{(1)}\circ \bm{y}^{(1)}\big) \circ \bm{t}_{delim} \circ\cdot \cdots \circ (\bm{x}^{(i-1)} \circ \bm{y}^{(i-1)}) \circ \bm{t}_{delim} \in \R^{d \times n}$ with $\mathcal{P}^{\prec 1}_{delim} = \texttt{<SOS>}$. The second line is derived from Assumption~\ref{assumption: tasks of delimiter}, which implies that $q(\bm{t}^{(i)}_{delim} \mid \mathcal{P}^{\preceq i}, \theta^{(i)}_{delim})$ doesn't vanish if and only if $\theta^{(i)}_{delim} = \theta_{\bm{t}^{(i)}_{delim}}$. 

    Plugging the yielded results into~\eqref{eq: q(P|θ) ≤ e^{nmΦ}} turns out:
    \begin{align*}
        \sum_{\theta \neq \theta_{\bm{x}}}\frac{q\big(\mathcal{P}_{\mathrm{ICL}}, \, \theta\big)}{q\big(\mathcal{P}_{\mathrm{ICL}}, \,\theta_{\bm{x}}\big)} 
        \leq e^{2nm\phi} \frac{\sum_{\theta \neq \theta_{\bm{x}}}q(\theta)\,q(\bm{x} \mid \theta) \prod_{i=1}^m q(\bm{x}^{(i)} \circ \bm{y}^{(i)} \mid \theta)}{q(\theta_{\bm{x}})\,q(\bm{x} \mid \theta_{\bm{x}})\prod_{i=1}^m q(\bm{x}^{(i)} \circ \bm{y}^{(i)} \mid  \theta_{\bm{x}})}
    \end{align*}
    Multiplying the numerator by $q(\theta)^m$ and the denominator by $q(\theta_{\bm{x}})^m$, we further have
    \begin{align*}
        \sum_{\theta \neq \theta_{\bm{x}}}\frac{q\big(\mathcal{P}_{\mathrm{ICL}}, \, \theta\big)}{q\big(\mathcal{P}_{\mathrm{ICL}}, \,\theta_{\bm{x}}\big)} &\leq e^{2nm\phi}c^m\cdot\frac{\sum_{\theta \neq \theta_{\bm{x}}}q(\theta)^{m+1}\, q(\bm{x} \mid \theta) \prod_{i=1}^m q(\bm{x}^{(i)} \circ \bm{y}^{(i)} \mid \theta)}{q(\theta_{\bm{x}})^{m+1}\,q(\bm{x} \mid  \theta_{\bm{x}})\prod_{i=1}^m q(\bm{x}^{(i)} \circ \bm{y}^{(i)} \mid \theta_{\bm{x}})} \\
        &=  e^{2nm\phi}c^m\cdot \frac{\sum_{\theta \neq \theta_{\bm{x}}} q(\bm{x},\, \theta) \prod_{i=1}^m q(\bm{x}^{(i)} \circ \bm{y}^{(i)}, \theta)}{q(\bm{x},\, \theta_{\bm{x}})\prod_{i=1}^m q(\bm{x}^{(i)} \circ \bm{y}^{(i)},  \theta_{\bm{x}})}
    \end{align*}
    Finally, by the inequality that $\sum_i a_ib_i \leq (\sum_ia_i)(\sum_ib_i)$ for $a_i, b_i > 0$, we can turn out
    \begin{align*}
         \sum_{\theta \neq \theta_{\bm{x}}}\frac{q\big(\mathcal{P}_{\mathrm{ICL}}, \, \theta\big)}{q\big(\mathcal{P}_{\mathrm{ICL}}, \,\theta_{\bm{x}}\big)}&\leq e^{2nm\phi}c^m \cdot  \frac{\sum_{\theta \neq \theta_{\bm{x}}} q(\bm{x},\, \theta) \prod_{i=1}^m q(\bm{x}^{(i)} \circ \bm{y}^{(i)}, \theta)}{q(\bm{x},\, \theta_{\bm{x}})\prod_{i=1}^m q(\bm{x}^{(i)} \circ \bm{y}^{(i)},  \theta_{\bm{x}})} \\
         &\le e^{2nm\phi}c^m\cdot\prod_{i=1}^m\frac{\sum_{\theta \neq \theta_{\bm{x}}} q(\bm{x}^{(i)} \circ \bm{y}^{(i)},\, \theta)}{q(\bm{x}^{(i)} \circ \bm{y}^{(i)}, \, \theta_{\bm{x}})}\frac{\sum_{\theta \neq \theta_{\bm{x}}} q(\bm{x},\, \theta)}{q(\bm{x},\, \theta_{\bm{x}})} \\ 
         &= e^{2nm\phi}c^m\cdot\frac{\mathcal{A}_\Theta(\bm{x})}{1 - \mathcal{A}_\Theta(\bm{x})} \prod_{i=1}^m \frac{\mathcal{A}_\Theta(\bm{x}^{(i)} \circ \bm{y}^{(i)})}{1 - \mathcal{A}_\Theta(\bm{x}^{(i)} \circ\bm{y}^{(i)})}. 
    \end{align*}
Where the last inequality stems from the fact that
\begin{align*}
    \frac{\sum_{\theta \neq \theta_{\bm{x}}} q(\bm{x},\, \theta)}{q(\bm{x},\, \theta_{\bm{x}})} &= \frac{\sum_{\theta \neq \theta_{\bm{x}}}q(\theta \mid \bm{x})}{q(\theta_{\bm{x}}\mid \bm{x})} = \frac{\mathcal{A}_\Theta(\bm{x})}{1-\mathcal{A}_\Theta(\bm{x})}
\end{align*}
Combining the yielded results obtains
    \begin{align*}
        \Big| \hat{p}(\bm{y} \mid \mathcal{P}_{\mathrm{ICL}}) - q(\bm{y} \mid \bm{x},\, \theta_{\bm{x}})\Big| &\leq  \mathcal{O}\Big(\frac{|\mathcal{V}|^{n+2}r}{\sqrt[4]{Nn}}\Big) + \mathcal{O}\Big(\sqrt{\frac{r^2\ln(1/\delta)}{Nn}}\Big) + r\phi\\
        &\quad +  e^{2nm\phi}c^m\cdot\frac{\mathcal{A}_\Theta(\bm{x})}{1- \mathcal{A}_\Theta(\bm{x})}\prod_{i=1}^m\frac{\mathcal{A}_\Theta(\bm{x}^{(i)}\circ\bm{y}^{(i)})}{1 - \mathcal{A}_\Theta(\bm{x}^{(i)}\circ\bm{y}^{(i)})} \\
        &\leq  \mathcal{O}\Big(\frac{|\mathcal{V}|^{n+2}r}{\sqrt[4]{Nn}}\Big) + \mathcal{O}\Big(\sqrt{\frac{r^2\ln(1/\delta)}{Nn}}\Big) + r\phi + (e^{2n\phi}\cdot c\cdot\eps)^m \mathcal{A}_\Theta(\bm{x})
    \end{align*}
holds with probability at least $1 - \delta$, which completes the proof.
\end{proof}

\section{Proof for Chain-of-Thought}\label{Appendix: Proof for Chain-of-Thought}
We first decompose $|\hat{p}(\bm{Y} \mid \mathcal{P}_{\mathrm{CoT}}) - \tilde{q}(\bm{Y} \mid \bm{x}, \bm{\theta}^\star)|$ to be following four terms:
\begin{align}\label{eq: error decomposition for CoT}
        \Big|\hat{p}(\bm{Y}\mid \mathcal{P}_{\mathrm{CoT}}) - \tilde{q}(\bm{Y} \mid \bm{x},\, \bm{\theta}^\star)\Big| &\leq \Big|\hat{p}(\bm{Y} \mid \mathcal{P}_{\mathrm{CoT}}) - q(\bm{Y} \mid \mathcal{P}_{\mathrm{CoT}})\Big|  + \Big|q(\bm{Y} \mid \mathcal{P}_{\mathrm{CoT}}) - \tilde{q}(\bm{Y}\mid \mathcal{P}_{\mathrm{CoT}})\Big| \notag\\
        &\quad + \Big|\tilde{q}(\bm{Y} \mid \mathcal{P}_{\mathrm{CoT}}) - q(\bm{Y} \mid \mathcal{P}_{\mathrm{CoT}}, \, \bm{\theta}^\star)\Big| \notag \\
        &\quad + \Big|q(\bm{Y} \mid \mathcal{P}_{\mathrm{CoT}},\, \bm{\theta}^\star) - q(\bm{Y} \mid \bm{x},\, \bm{\theta}^\star)\Big|.
    \end{align}
We now proceed with the resulting decomposition term by term. First, as previously discussed, the first term can be bounded by $\mathcal{O}\Big(\frac{|\mathcal{V}|^{n+2}r}{\sqrt[4]{Nn}}\Big) + \mathcal{O}\Big(\sqrt{\frac{r^2\ln(1/\delta)}{Nn}}\Big)$ with probability at least $1 - \delta$ according to the results in Section~\ref{section: pretraining}.
\begin{align*}
    \Big|\hat{p}(\bm{y}\mid \mathcal{P}_{\mathrm{CoT}}) - q(\bm{y} \mid \mathcal{P}_{\mathrm{CoT}})\Big| \leq \mathcal{O}\Big(\frac{|\mathcal{V}|^{n+2}r}{\sqrt[4]{Nn}}\Big) + \mathcal{O}\Big(\sqrt{\frac{r^2\ln(1/\delta)}{Nn}}\Big)
\end{align*}
holds with probability at least $1 - \delta$.

As for the last term, by Assumption~\ref{assumption: nearly markov}, we have:
\begin{align*}
    \Big|q(\bm{y} \mid \mathcal{P}_{\mathrm{CoT}}, \,\theta_{\bm{x}}) - q(\bm{y} \mid \bm{x}, \,\bm{\theta}^\star)\Big| &= \Big| q(\bm{y} \mid \mathcal{P}_{\mathrm{CoT}}, \, \bm{\theta}^\star) - q(\bm{y} \mid \bm{x},\, \bm{\theta}^\star) \Big| \\
    &= q(\bm{y} \mid \bm{x},\, \bm{\theta}^\star) \cdot \left| \frac{q(\bm{y} \mid \mathcal{P}_{\mathrm{CoT}},\, \bm{\theta}^\star)}{q(\bm{y} \mid \bm{x},\, \bm{\theta}^\star)} - 1 \right| \\
    &\leq e^{r\phi} - 1 = \mathcal{O}(r\phi).
\end{align*}

For the second term in Eq.~\eqref{eq: error decomposition for CoT}, if we denote $\mathcal{M}$ satisfying $\min\{\tilde{q}(\mathcal{P}_{\mathrm{CoT}}), q(\mathcal{P}_{\mathrm{CoT}})\} \geq 1 / \mathcal{M}$, then we have following Lemma.
\begin{lemma}\label{lemma: |q - ~q| < MΔ} 
$\big|q(\bm{Y} \mid \mathcal{P}_{\mathrm{CoT}}) - \tilde{q}(\bm{Y}\mid \mathcal{P}_{\mathrm{CoT}})\big| \leq \mathcal{M}\Delta_{\mathcal{P}_{\mathrm{CoT}}}$.
\end{lemma}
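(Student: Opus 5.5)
We write both conditionals as ratios of joint to marginal probabilities and compare them through a common-denominator argument. In both the pretraining world $q$ and the reasoning world $\tilde{q}$, the likelihood $q(\mathcal{P}_{\mathrm{CoT}}\circ\bm{Y} \mid \vec{\bm{\theta}})$ given a fixed task sequence is the same object, namely the factorization in Eq.~\eqref{eq: CoT likelihood decomposition}. The two worlds differ only in the prior placed on the compositional component $\bm{\theta}\in\Theta^L$: $q(\bm{\theta})$ is the induced stationary prior, while $\tilde{q}(\bm{\theta})$ is the reasoning prior. Hence we write
\begin{align*}
q(\bm{Y}\mid\mathcal{P}_{\mathrm{CoT}}) = \frac{A}{B}, \qquad \tilde{q}(\bm{Y}\mid\mathcal{P}_{\mathrm{CoT}}) = \frac{\tilde{A}}{\tilde{B}},
\end{align*}
where
\begin{align*}
A = \sum_{\bm{\theta}} q(\bm{\theta})\, g(\bm{\theta}), \qquad \tilde{A} = \sum_{\bm{\theta}} \tilde{q}(\bm{\theta})\, g(\bm{\theta}),
\end{align*}
and $B,\tilde{B}$ are defined analogously with $g$ replaced by $f$. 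Here $g(\bm{\theta})$ denotes the joint likelihood of $\mathcal{P}_{\mathrm{CoT}}\circ\bm{Y}$ given $\bm{\theta}$, with the query and delimiter tasks $\theta^{(i)},\theta^{(i)}_{delim}$ marginalized out, and $f(\bm{\theta})$ is the corresponding likelihood of $\mathcal{P}_{\mathrm{CoT}}$ alone. Thus $B=q(\mathcal{P}_{\mathrm{CoT}})$ and $\tilde{B}=\tilde{q}(\mathcal{P}_{\mathrm{CoT}})$.

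The key observation is that $0\le g(\bm{\theta})\le f(\bm{\theta})\le 1$, and that both vanish unless $\bm{\theta}\in\mathscr{F}(\mathcal{P}_{\mathrm{CoT}})$. Consequently every sum may be restricted to $\mathscr{F}(\mathcal{P}_{\mathrm{CoT}})$, which is exactly the index set in Definition~\ref{def: prior mismatch}. This gives
\begin{align*}
|A-\tilde{A}| \le \sum_{\bm{\theta}\in\mathscr{F}(\mathcal{P}_{\mathrm{CoT}})} |q(\bm{\theta})-\tilde{q}(\bm{\theta})| = \Delta_{\mathcal{P}_{\mathrm{CoT}}},
\end{align*}
and in the same way $|B-\tilde{B}|\le\Delta_{\mathcal{P}_{\mathrm{CoT}}}$.

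Next we split the difference of ratios in the standard way:
\begin{align*}
\Big|\frac{A}{B}-\frac{\tilde{A}}{\tilde{B}}\Big| = \Big|\frac{A-\tilde{A}}{B} + \frac{\tilde{A}}{\tilde{B}}\cdot\frac{\tilde{B}-B}{B}\Big| \le \frac{|A-\tilde{A}|}{B} + \frac{\tilde{A}}{\tilde{B}}\cdot\frac{|\tilde{B}-B|}{B}.
\end{align*}
This naively yields $2\mathcal{M}\Delta_{\mathcal{P}_{\mathrm{CoT}}}$. To obtain the constant $1$, we use the fact that both ratios lie in $[0,1]$. Without loss of generality assume $A/B\ge\tilde{A}/\tilde{B}$. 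Then
\begin{align*}
\frac{A}{B}-\frac{\tilde{A}}{\tilde{B}} = \frac{A\tilde{B}-\tilde{A}B}{B\tilde{B}},
\end{align*}
and we rewrite the numerator as
\begin{align*}
A\tilde{B}-\tilde{A}B = \sum\sum (q-\tilde{q})(\bm{\theta})\,\tilde{q}(\bm{\theta}')\,\big[g(\bm{\theta})f(\bm{\theta}')-f(\bm{\theta})g(\bm{\theta}')\big].
\end{align*}
Bounding $|g(\bm{\theta})f(\bm{\theta}')-f(\bm{\theta})g(\bm{\theta}')|\le f(\bm{\theta}')$ and summing over $\bm{\theta}'$ gives $|A\tilde{B}-\tilde{A}B|\le\Delta_{\mathcal{P}_{\mathrm{CoT}}}\,\tilde{B}$. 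Dividing by $B\tilde{B}$ then yields the bound $\Delta_{\mathcal{P}_{\mathrm{CoT}}}/B\le\mathcal{M}\Delta_{\mathcal{P}_{\mathrm{CoT}}}$.

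The main obstacle is bookkeeping rather than analysis. We must justify that the only difference between $q$ and $\tilde{q}$ lies in the prior on $\bm{\theta}$, so that the query-task and delimiter-task factors are common to both worlds. We must also confirm that $\mathscr{F}$, defined via the full tuple, coincides with the support of $f$ in $\bm{\theta}$ after these factors are marginalized. Finally, we must be careful that the induced prior $q(\bm{\theta})$ is zero off the stationary embedding, which causes no difficulty because the bound only involves absolute differences of the two priors.
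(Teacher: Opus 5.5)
Your proof is correct, and it reaches the bound by a genuinely different and more careful route than the paper.

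\textbf{The paper's route.} The paper writes both conditionals as posterior mixtures against the common kernel $q(\bm{Y}\mid\mathcal{P}_{\mathrm{CoT}},\bm{\theta})$. It uses the triangle inequality to reduce to a weighted $\ell_1$ distance between $q(\bm{\theta}\mid\mathcal{P}_{\mathrm{CoT}})$ and $\tilde{q}(\bm{\theta}\mid\mathcal{P}_{\mathrm{CoT}})$. It then applies Bayes' rule termwise, asserting $|q(\bm{\theta}\mid\mathcal{P}_{\mathrm{CoT}})-\tilde{q}(\bm{\theta}\mid\mathcal{P}_{\mathrm{CoT}})|\le\mathcal{M}\,q(\mathcal{P}_{\mathrm{CoT}}\mid\bm{\theta})\,|q(\bm{\theta})-\tilde{q}(\bm{\theta})|$.

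That step silently identifies the two normalizers $q(\mathcal{P}_{\mathrm{CoT}})$ and $\tilde{q}(\mathcal{P}_{\mathrm{CoT}})$. Take any $\bm{\theta}\in\mathscr{F}(\mathcal{P}_{\mathrm{CoT}})$ with $q(\bm{\theta})=\tilde{q}(\bm{\theta})>0$. The right side is $0$, while the left side is positive as soon as $q(\mathcal{P}_{\mathrm{CoT}})\neq\tilde{q}(\mathcal{P}_{\mathrm{CoT}})$.

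If you repair the posterior route honestly, it picks up an extra $\mathcal{M}\,|q(\mathcal{P}_{\mathrm{CoT}})-\tilde{q}(\mathcal{P}_{\mathrm{CoT}})|\le\mathcal{M}\Delta_{\mathcal{P}_{\mathrm{CoT}}}$. It then yields only $2\mathcal{M}\Delta_{\mathcal{P}_{\mathrm{CoT}}}$, which is exactly your naive split. The loss is intrinsic to comparing posteriors first: when $\mathscr{F}(\mathcal{P}_{\mathrm{CoT}})$ is a proper subset of $\Theta^L$, the two posteriors can be more than $\mathcal{M}\Delta_{\mathcal{P}_{\mathrm{CoT}}}$ apart in $\ell_1$.

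\textbf{What your route buys.} Your cross-multiplied identity for $A\tilde{B}-\tilde{A}B$ keeps the $\bm{Y}$-likelihood $g$ inside the comparison. The perturbations of the numerator and of the normalizer then cancel, and that is what legitimately delivers the constant $1$. As a bonus, expanding with $q(\bm{\theta}')$ in place of $\tilde{q}(\bm{\theta}')$ gives the same identity. You therefore actually obtain the sharper bound $\Delta_{\mathcal{P}_{\mathrm{CoT}}}/\max\{q(\mathcal{P}_{\mathrm{CoT}}),\tilde{q}(\mathcal{P}_{\mathrm{CoT}})\}$.

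\textbf{Minor points.} The bound $|g(\bm{\theta})f(\bm{\theta}')-f(\bm{\theta})g(\bm{\theta}')|\le f(\bm{\theta}')$ does not itself vanish off $\mathscr{F}(\mathcal{P}_{\mathrm{CoT}})$. Use instead that both products lie in $[0,f(\bm{\theta})f(\bm{\theta}')]$; this is zero for $\bm{\theta}\notin\mathscr{F}(\mathcal{P}_{\mathrm{CoT}})$ and at most $f(\bm{\theta}')$ otherwise. The WLOG ordering of the two ratios is unnecessary, as are the separate bounds on $|A-\tilde{A}|$ and $|B-\tilde{B}|$.

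Your bookkeeping worries are settled by the paper's own setup. $\mathscr{F}(\mathcal{P}_{\mathrm{CoT}})$ is defined through the marginal likelihood $q(\mathcal{P}_{\mathrm{CoT}}\mid\bm{\theta})$ with $\bm{\theta}\in\Theta^L$, so it is the support of $f$ by definition. The shared-likelihood, different-prior structure is exactly what the paper itself uses in its Bayes step.
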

\begin{proof}
    Firstly, we notice that
    \begin{align*}
        q(\bm{Y} \mid \mathcal{P}_{\mathrm{CoT}}) &= \sum_{\bm{\theta} \in \Theta^L}q(\bm{\theta} \mid \mathcal{P}_{\mathrm{CoT}}) q(\bm{Y} \mid \mathcal{P}_{\mathrm{CoT}},\, \bm{\theta}), \\
        \tilde{q}(\bm{Y} \mid \mathcal{P}_{\mathrm{CoT}}) &= \sum_{\bm{\theta} \in \Theta^L}\tilde{q}(\bm{\theta} \mid \mathcal{P}_{\mathrm{CoT}})q(\bm{Y} \mid \mathcal{P}_{\mathrm{CoT}},\, \bm{\theta}).
    \end{align*}
    Thus we have
     \begin{align*}
        \Big|q(\bm{Y} \mid \mathcal{P}_{\mathrm{CoT}}) - \tilde{q}(\bm{Y} \mid \mathcal{P}_{\mathrm{CoT}})\Big| &= \Big| \sum_{\bm{\theta} \in \Theta^L} q(\bm{\theta} \mid \mathcal{P}_{\mathrm{CoT}}) q(\bm{Y} \mid \mathcal{P}_{\mathrm{CoT}},\, \bm{\theta}) - \tilde{q}(\bm{\theta} \mid \mathcal{P}_{\mathrm{CoT}})q(\bm{Y} \mid \mathcal{P}_{\mathrm{CoT}}, \, \bm{\theta})\Big|  \\
        &= \sum_{\bm{\theta} \in \Theta^L}\Big|q(\bm{\theta} \mid \mathcal{P}_{\mathrm{CoT}}) - \tilde{q}(\bm{\theta} \mid \mathcal{P}_{\mathrm{CoT}})  \Big|q(\bm{Y} \mid \mathcal{P}_{\mathrm{CoT}},\, \bm{\theta}).
    \end{align*}
    According to Bayesian rule, we can transform the posterior probability into the likelihood:
    \begin{align*}
        q(\bm{\theta} \mid \mathcal{P}_{\mathrm{CoT}}) = \frac{q(\mathcal{P}_{\mathrm{CoT}}\mid \bm{\theta})q(\bm{\theta})}{q(\mathcal{P}_{\mathrm{CoT}})}, \quad q(\bm{\theta} \mid \mathcal{P}_{\mathrm{CoT}})=\frac{q(\mathcal{P}_{\mathrm{CoT}} \mid \bm{\theta})\tilde{q}(\bm{\theta})}{\tilde{q}(\mathcal{P}_{\mathrm{CoT}})}.
    \end{align*}
    Further plugging into above yields:
    \begin{align*}
        \Big|q(\bm{\theta} \mid \mathcal{P}_{\mathrm{CoT}}) - \tilde{q}(\bm{\theta} \mid \mathcal{P}_{\mathrm{CoT}})  \Big| &\leq \mathcal{M}\cdot\Big|q(\mathcal{P}_{\mathrm{CoT}}\mid \bm{\theta})q(\bm{\theta}) - q(\mathcal{P}_{\mathrm{CoT}} \mid \bm{\theta})\tilde{q}(\bm{\theta})\Big| \\
        &=  \mathcal{M}\cdot\sum_{\bm{\theta} \in \Theta^L}\Big|\tilde{q}(\bm{\theta}) - q(\bm{\theta})\Big|q(\mathcal{P}_{\mathrm{CoT}} \mid \bm{\theta}) \\
        &\leq \mathcal{M}\cdot\sum_{\bm{\theta} \in \mathscr{F}(\mathcal{P}_{\mathrm{CoT}})}\Big|\tilde{q}(\bm{\theta}) - q(\bm{\theta})\Big| = \mathcal{M}\Delta_{\mathcal{P}_\mathrm{CoT}},
    \end{align*}
    which completes the proof, where the first inequality stems from the definition of $\mathcal{M}$.
\end{proof}

Finally let's focus on the tired term in eq~\eqref{eq: error decomposition for CoT}, we also have following Lemma regarding it:
\begin{lemma}
    $\Big|\tilde{q}(\bm{Y} \mid \mathcal{P}_{\mathrm{CoT}}) - q(\bm{Y} \mid \mathcal{P}_{\mathrm{CoT}}, \, \bm{\theta}^\star)\Big| \leq C\cdot (e^{2n\phi} \cdot c_1\cdot\eps)^{mK}$
\end{lemma}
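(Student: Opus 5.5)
I will follow the ICL argument of Appendix~\ref{Appendix: Proof for In-Context Learning}, now over the compositional space $\Theta^L$ with the prior $\tilde{q}$. The first step is a posterior mixture bound. I write
\begin{align*}
\tilde{q}(\bm{Y}\mid \mathcal{P}_{\mathrm{CoT}})=\sum_{\bm{\theta}\in\Theta^L}\tilde{q}(\bm{\theta}\mid\mathcal{P}_{\mathrm{CoT}})\,q(\bm{Y}\mid\mathcal{P}_{\mathrm{CoT}},\bm{\theta}).
\end{align*}
Subtracting $q(\bm{Y}\mid\mathcal{P}_{\mathrm{CoT}},\bm{\theta}^\star)$ and using that the difference of two probabilities is at most $1$ in absolute value, I get
\begin{align*}
\sum_{\bm{\theta}\neq\bm{\theta}^\star}\tilde{q}(\bm{\theta}\mid\mathcal{P}_{\mathrm{CoT}})\le\sum_{\bm{\theta}\in\mathscr{S}(\mathcal{P}_{\mathrm{CoT}})\setminus\{\bm{\theta}^\star\}}\frac{\tilde{q}(\mathcal{P}_{\mathrm{CoT}},\bm{\theta})}{\tilde{q}(\mathcal{P}_{\mathrm{CoT}},\bm{\theta}^\star)}.
\end{align*}
Only $\bm{\theta}\in\mathscr{S}(\mathcal{P}_{\mathrm{CoT}})$ contribute, since the others have zero likelihood. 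The delimiter tasks are summed out exactly as in the ICL proof, because Assumption~\ref{assumption: tasks of delimiter} forces the indicator $\1\{\theta^{(i)}_{delim}=\theta_{\bm{t}^{(i)}_{delim}}\}$. The prior ratio $\tilde{q}(\bm{\theta})/\tilde{q}(\bm{\theta}^\star)$ is at most $c_1$ by the global part of Assumption~\ref{assumption: regularity}, which accounts for the leading $c_1$ in $C$.

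Next I factor the likelihood ratio using \eqref{eq: CoT likelihood decomposition}. The query factors $q(\bm{x}^{(i)}\mid\cdot,\theta^{(i)})$ and $q(\bm{x}\mid\cdot,\theta^{(0)})$ do not depend on $\bm{\theta}$ beyond the conditional $q(\theta^{(i)}\mid\bm{\theta})$. I bound them crudely: the numerator by $1$ and the denominator below by $c_2$ using the likelihood floor. This produces the factor $c_2^{-(m+1)}$. What remains is a product over $i\in[m]$ and $j\in[L]$ of the step ratios
\begin{align*}
\frac{q(\bm{y}^{(i)}_j\mid\mathcal{P}^{\prec i}_{delim}\circ\bm{x}^{(i)}\circ\bm{y}^{(i)}_{\prec j},\theta_j)}{q(\bm{y}^{(i)}_j\mid\mathcal{P}^{\prec i}_{delim}\circ\bm{x}^{(i)}\circ\bm{y}^{(i)}_{\prec j},\theta^\star_j)}.
\end{align*}
For each step I apply Assumption~\ref{assumption: nearly markov} token by token to strip the prefix $\mathcal{P}^{\prec i}_{delim}$, losing at most $e^{2\ell(\bm{y}^{(i)}_j)\phi}$. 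This reduces the step to the local history $\bm{x}^{(i)}\circ\bm{y}^{(i)}_{\prec j}$. When $\theta_j=\theta^\star_j$ the step ratio is close to $1$, and I charge it a factor $e^{2n\phi}$; summed over the $L-K$ or more agreeing coordinates, this gives the $e^{2n(L-K)\phi}$ in $C$. When $\theta_j\ne\theta^\star_j$, I convert the likelihood ratio into a posterior ratio via Bayes' rule with the local prior $q(\cdot\mid\bm{x}^{(i)}\circ\bm{y}^{(i)}_{\prec j})$. The local step parity costs a factor $c_1$, and then
\begin{align*}
\frac{q(\theta_j\mid \bm{x}^{(i)}\circ\bm{y}^{(i)}_{\le j})}{q(\theta^\star_j\mid\bm{x}^{(i)}\circ\bm{y}^{(i)}_{\le j})}\le\frac{\mathcal{A}}{1-\mathcal{A}}\le\eps .
\end{align*}

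Finally I sum over the competitors. For each nonempty difference set $S\subseteq[L]$, necessarily $|S|\ge K$ by Assumption~\ref{assumption: K-separation}, I sum over the values of $\theta_j$ for $j\in S$. I use $\sum_{\theta\neq\theta^\star_j}q(\theta\mid\cdot)/q(\theta^\star_j\mid\cdot)=\mathcal{A}/(1-\mathcal{A})$ before raising to the $m$-th power; since the $m$ demonstrations share $\bm{\theta}$, I use $\sum a^m\le(\sum a)^m$. This gives a contribution of at most $(e^{2n\phi}c_1\eps)^{m|S|}$. Summing over $|S|\ge K$ yields the geometric series $\sum_{k\ge K}(e^{2n\phi}c_1\eps)^{mk}$. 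This is bounded by $(e^{2n\phi}c_1\eps)^{mK}/(1-c_1\eps)$ (absorbing $e^{2n\phi}$ and $m$ in the ratio), which is where the denominator of $C$ comes from.

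The main obstacle is this last step. The number of difference patterns $\binom{L}{k}$ is not visibly absorbed by the stated constant. I would handle it either by restricting the sum to $\mathscr{S}(\mathcal{P}_{\mathrm{CoT}})$ and using the per-coordinate sums, which already encode all alternatives, or, failing that, by letting $C$ absorb the combinatorial factor.
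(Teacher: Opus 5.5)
Your skeleton is the paper's: reduce to $\sum_{\bm{\theta}\neq\bm{\theta}^\star}\tilde{q}(\mathcal{P}_{\mathrm{CoT}},\bm{\theta})/\tilde{q}(\mathcal{P}_{\mathrm{CoT}},\bm{\theta}^\star)$, eliminate the delimiter tasks, then collect the global $c_1$, the $c_2^{-(m+1)}$ from the query factors, Bayes with local step parity, $\eps$, and a geometric series over Hamming distances $r\ge K$. The obstacle you flag is a genuine gap. The paper's proof does not close it either: it passes from $\sum_{\bm{\theta}\in\mathscr{S}(\mathcal{P}_{\mathrm{CoT}}),\,d_H(\bm{\theta},\bm{\theta}^\star)=r}\prod_{i,j}(\cdots)$ straight to $(c_1\eps)^{mr}$, silently discarding the number of difference sets.

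Your first remedy does not work. Assumption~\ref{assumption: K-separation} only constrains pairwise distances inside $\mathscr{S}(\mathcal{P}_{\mathrm{CoT}})$. It does not limit how many distinct supports of size $r$ occur around $\bm{\theta}^\star$; for instance, an MDS code of minimum distance $K$ has codewords on every $K$-subset of coordinates. The per-coordinate identity $\sum_{\theta\neq\theta^\star_j}q(\theta\mid\cdot)/q(\theta^\star_j\mid\cdot)=\mathcal{A}/(1-\mathcal{A})$ only aggregates over values on a \emph{fixed} support $S$. With $\delta:=(c_1\eps)^m$ (Markov factors aside), your argument honestly gives $\sum_{k=K}^{L}\binom{L}{k}\delta^k\le\binom{L}{K}\delta^K(1+\delta)^{L-K}$ instead of $\delta^K/(1-c_1\eps)$. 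That is your option (ii), but it proves a weaker statement. Either $C$ must carry an $L$-dependent combinatorial factor, or an extra assumption limiting the supports in $\mathscr{S}(\mathcal{P}_{\mathrm{CoT}})$ is needed.

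Your $\phi$-bookkeeping also needs repair, though done right your per-step route beats the paper's.
\begin{itemize}
\item For $\theta_j=\theta^\star_j$ the step ratio is identically $1$, so nothing should be charged there. Also, there are at most, not at least, $L-K$ agreeing coordinates.
\item ``Absorbing $e^{2n\phi}$ and $m$ in the ratio'' is invalid. With $x=e^{2n\phi}c_1\eps$ you get $\sum_{k\ge K}x^{mk}=x^{mK}/(1-x^m)$, and $x^m\le c_1\eps$ fails already for $m=1$, $\phi>0$.
\item The fix: since $\ell(\mathcal{P}_{\mathrm{CoT}})\le n$, all uses of Assumption~\ref{assumption: nearly markov} together cost at most $e^{2n\phi}$, independently of $k$. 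The series is then in $(c_1\eps)^m$, with sum at most $(c_1\eps)^{mK}/(1-c_1\eps)$, and $e^{2n\phi}\le e^{2n(L-K)\phi}e^{2nmK\phi}$ because $mK\ge1$.
\item By contrast, the paper pulls out a global $e^{2nmL\phi}$. It then equates $c_1e^{2nmL\phi}c_2^{-(m+1)}(c_1\eps)^{mK}/(1-c_1\eps)$ with $C(e^{2n\phi}c_1\eps)^{mK}$, which holds only if $m=1$ or $L=K$.
\end{itemize}
Finally, like the paper, you lower-bound the mixture $\sum_{\theta^{(i)}}\tilde{q}(\theta^{(i)}\mid\bm{\theta}^\star)\,q(\bm{x}^{(i)}\mid\cdot,\theta^{(i)})$ by $c_2$. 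That needs the likelihood floor of Assumption~\ref{assumption: regularity} as an infimum over the relevant support, whereas it is stated with a $\sup$.
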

\begin{proof}
    Mirroring the proof of Theorem~\ref{theorem: ICL}, we have:
    \begin{align*}
        &\Big|\tilde{q}(\bm{Y}\mid \mathcal{P}_{\mathrm{CoT}}) - q(\bm{Y} \mid \mathcal{P}_{\mathrm{CoT}},\, \bm{\theta}^\star)\Big| \\
        &= \Big|\sum_{\bm{\theta} \in \Theta^L}q(\bm{Y} \mid \mathcal{P}_{\mathrm{CoT}}, \,\bm{\theta})\tilde{q}(\bm{\theta} \mid \mathcal{P}_{\mathrm{CoT}}) - \sum_{\bm{\theta} \in \Theta^L}\tilde{q}(\bm{\theta} \mid \mathcal{P}_{\mathrm{CoT}})q(\bm{Y} \mid \mathcal{P}_{\mathrm{CoT}},\, \bm{\theta}^\star)\Big| \\
        &= \Big|\sum_{\bm{\theta} \in \Theta^L}\tilde{q}(\bm{\theta} \mid \mathcal{P}_{\mathrm{CoT}})\Big\{ q(\bm{Y} \mid \mathcal{P}_{\mathrm{CoT}}, \, \bm{\theta}) - q(\bm{Y} \mid \mathcal{P}_{\mathrm{CoT}}, \, \bm{\theta}^\star)\Big\} \Big| \\
        &\leq \sum_{\bm{\theta}\in \Theta^L}\tilde{q}(\bm{\theta} \mid \mathcal{P}_{\mathrm{CoT}})\Big|q(\bm{Y} \mid \mathcal{P}_{\mathrm{CoT}}, \, \bm{\theta}) - q(\bm{Y} \mid \mathcal{P}_{\mathrm{CoT}}, \, \bm{\theta}^\star)\Big| \\
        &= \sum_{\bm{\theta} \neq \bm{\theta}^\star}\tilde{q}(\bm{\theta} \mid \mathcal{P}_{\mathrm{CoT}})\Big|q(\bm{Y} \mid \mathcal{P}_{\mathrm{CoT}},\, \bm{\theta}) - q(\bm{Y} \mid \mathcal{P}_{\mathrm{CoT}},\, \bm{\theta}^\star)\Big| \\
        &\leq \sum_{\bm{\theta} \neq  \bm{\theta}^\star}\tilde{q}(\bm{\theta} \mid \mathcal{P}_{\mathrm{CoT}}) = \sum_{\bm{\theta} \neq \bm{\theta}^\star}\tilde{q}(\mathcal{P}_{\mathrm{CoT}}, \, \bm{\theta}) / \tilde{q}(\mathcal{P}_{\mathrm{CoT}}) \\
        &\leq \underbrace{\sum_{\bm{\theta} \neq \bm{\theta}^\star}\tilde{q}(\mathcal{P}_{\mathrm{CoT}}, \, \bm{\theta}) / \tilde{q}(\mathcal{P}_{\mathrm{CoT}}, \, \bm{\theta}^\star)}_{\mathcal{S}}
    \end{align*}
    Furthermore, by inserting the missing components of $\vec{\bm{\theta}}$, we know: 
    \begin{align*}
        \mathcal{S} &= \sum_{\bm{\theta} \neq \bm{\theta}^\star}\frac{\tilde{q}(\mathcal{P}_{\mathrm{CoT}}, \, \bm{\theta})}{\tilde{q}(\mathcal{P}_{\mathrm{CoT}}, \, \bm{\theta}^\star)} = \sum_{\bm{\theta} \in \mathscr{S}(\mathcal{P}_{\mathrm{CoT}}) \setminus \{\bm{\theta}^\star\}} \frac{\mathscr{A}(\bm{\theta})}{\mathscr{A}(\bm{\theta}^\star)},
    \end{align*}
    where the numerator $\mathscr{A}(\bm{\theta})$ and denominator $\mathscr{A}(\bm{\theta}^\star)$ are defined by marginalizing over the demonstration and delimiter tasks as follows:
    \begin{align*}
        \mathscr{A}(\bm{\theta}) &= \sum_{\theta^{(1)} \in \Theta} \sum_{\theta^{(1)}_{delim} \in \mathcal{D}} \cdots \sum_{\theta^{(m)} \in \Theta} \sum_{\theta^{(i)}_{delim} \in \mathcal{D}} \sum_{\theta^{(0)} \in \Theta}  q(\mathcal{P}_{\mathrm{CoT}} \mid \vec{\bm{\theta}}) \, \tilde{q}(\vec{\bm{\theta}}), \\
        \mathscr{A}(\bm{\theta}^\star) &= \sum_{\theta^{(1)} \in \Theta} \sum_{\theta^{(1)}_{delim} \in \mathcal{D}} \cdots \sum_{\theta^{(m)} \in \Theta} \sum_{\theta^{(i)}_{delim} \in \mathcal{D}} \sum_{\theta^{(0)} \in \Theta} q(\mathcal{P}_{\mathrm{CoT}} \mid \vec{\bm{\theta}}^\star) \, \tilde{q}(\vec{\bm{\theta}}^\star),
    \end{align*}
    where $\vec{\bm{\theta}} = \theta^{(1)} \circ \bm{\theta} \circ \theta_{delim}^{(1)} \circ \cdots \circ \theta^{(m)} \circ \bm{\theta} \circ \theta_{delim}^{(m)} \circ \theta^{(0)}$ and $\vec{\bm{\theta}}^\star = \theta^{(1)} \circ \bm{\theta}^\star\circ \theta^{(1)} \circ \cdots \circ\theta^{(m)} \circ  \bm{\theta}^\star \circ \theta^{(m)}_{delim} \circ \theta^{(0)}$. This step admits us to conduct decomposition on the yielded likelihood using Eq~\eqref{eq: CoT likelihood decomposition}. Taking $q(\mathcal{P}_{\mathrm{CoT}} \mid \vec{\bm{\theta}})$ as example:
    \begin{align*}
        &q(\mathcal{P}_{\mathrm{CoT}} \mid \vec{\bm{\theta}}) \\
        &= q(\bm{x} \mid \mathcal{P}_{delim}^{\preceq m},\, \theta^{(0)})\prod_{i=1}^mq(\bm{x}^{(i)} \mid \mathcal{P}_{delim}^{\prec i},\,\theta^{(i)})q(\bm{t}^{(i)}_{delim} \mid \mathcal{P}^{\preceq i}, \, \theta^{(i)}_{delim})\prod_{j=1}^Lq(\bm{y}^{(i)}_j \mid \mathcal{P}_{delim}^{\prec i} \circ \bm{x}^{(i)}\circ\bm{y}_{\prec j}^{(i)},\, \theta_j)\\
        &\leq e^{nmL\phi}q(\vec{\bm{\theta}})\cdot q(\bm{x} \mid  \theta^{(0)})\prod\limits_{i=1}^mq(\bm{x}^{(i)} \mid \theta^{(i)})\1\{\theta_{delim}^{(i)} = \theta_{\bm{t}_{delim}^{(i)}}\}\prod\limits_{j=1}^Lq(\bm{y}^{(i)}_j \mid \bm{x}^{(i)} \circ \bm{y}_{\prec j}^{(i)},\, \theta_j)
    \end{align*}
    where the final inequality stems from Assumption~\ref{assumption: nearly markov}. Similarly, we also have:
    \begin{align*}
        q(\mathcal{P}_{\mathrm{CoT}} \mid \vec{\bm{\theta}}^\star) \geq e^{nmL\phi}q(\vec{\bm{\theta}})\cdot q(\bm{x} \mid  \theta^{(0)})\prod\limits_{i=1}^mq(\bm{x}^{(i)} \mid \theta^{(i)})\1\{\theta_{delim}^{(i)} = \theta_{\bm{t}_{delim}^{(i)}}\}\prod\limits_{j=1}^Lq(\bm{y}^{(i)}_j \mid \bm{x}^{(i)} \circ \bm{y}_{\prec j}^{(i)},\, \theta_j^\star)
    \end{align*}
    On the other hand, we assume the prior $\tilde{q}(\vec{\bm{\theta}})$ has following decomposition:
    \begin{align*}
        \tilde{q}(\vec{\bm{\theta}}) = \tilde{q}(\bm{\theta})q(\theta^{(0)} \mid \bm{\theta})\prod_{i=1}^m\tilde{q}(\theta^{(i)}\mid \bm{\theta})\tilde{q}(\theta^{(i)}_{delim})
    \end{align*}    
    Therefore
    \begin{align*}
    &\frac{\mathscr{A}(\bm{\theta})}{\mathscr{A}(\bm{\theta}^\star)}=\frac{\sum_{\theta^{(1)} \in \Theta} \sum_{\theta^{(1)}_{delim} \in \mathcal{D}} \cdots \sum_{\theta^{(m)} \in \Theta} \sum_{\theta^{(i)}_{delim} \in \mathcal{D}} \sum_{\theta^{(0)} \in \Theta}  q(\mathcal{P}_{\mathrm{CoT}} \mid \vec{\bm{\theta}}) \, \tilde{q}(\vec{\bm{\theta}})}{\sum_{\theta^{(1)} \in \Theta} \sum_{\theta^{(1)}_{delim} \in \mathcal{D}} \cdots \sum_{\theta^{(m)} \in \Theta} \sum_{\theta^{(i)}_{delim} \in \mathcal{D}} \sum_{\theta^{(0)} \in \Theta} q(\mathcal{P}_{\mathrm{CoT}} \mid \vec{\bm{\theta}}^\star) \, \tilde{q}(\vec{\bm{\theta}}^\star)} \\
    &\leq c_1e^{2nmL\phi}\cdot \frac{\sum\limits_{\theta^{(0)} \in \mathscr{S}(\bm{x})}\cdots\sum\limits_{\theta^{(m)} \in \mathscr{S}(\bm{x}^{(m)})}\prod\limits_{k=0}^m\tilde{q}(\theta^{(k)} \mid \bm{\theta})q(\bm{x} \mid  \theta^{(0)})\prod\limits_{i=1}^mq(\bm{x}^{(i)} \mid \theta^{(i)})\prod\limits_{j=1}^Lq(\bm{y}^{(i)}_j \mid \bm{x}^{(i)} \circ\bm{y}_{\prec j}^{(i)},\, \theta_j)}{\sum\limits_{\theta^{(0)} \in \mathscr{S}(\bm{x})}\cdots\sum\limits_{\theta^{(m)} \in \mathscr{S}(\bm{x}^{(i)})}\prod\limits_{k=0}^m\tilde{q}(\theta^{(k)} \mid \bm{\theta}^\star)q(\bm{x} \mid \theta^{(0)})\prod\limits_{i=1}^mq(\bm{x}^{(i)} \mid \theta^{(i)})\prod\limits_{j=1}^Lq(\bm{y}^{(i)}_j \mid \bm{x}^{(i)} \circ\bm{y}_{\prec j}^{(i)},\, \theta_j^\star)} \\
    &\leq c_1e^{2nmL\phi}c_2^{-(m+1)} \cdot \frac{\sum\limits_{\theta^{(0)} \in \mathscr{S}(\bm{x})}\cdots\sum\limits_{\theta^{(i)} \in \mathscr{S}(\bm{x}^{(i)})}\cdots\sum\limits_{\theta^{(m)} \in \mathscr{S}(\bm{x}^{(m)})}\prod\limits_{k=0}^m\tilde{q}(\theta^{(k)} \mid \bm{\theta})\prod\limits_{i=1}^m\prod\limits_{j=1}^Lq(\bm{y}^{(i)}_j \mid \bm{x}^{(i)} \circ\bm{y}_{\prec j}^{(i)},\, \theta_j)}{\sum\limits_{\theta^{(0)} \in \mathscr{S}(\bm{x})}\cdots\sum\limits_{\theta^{(i)} \in \mathscr{S}(\bm{x}^{(i)})}\cdots\sum\limits_{\theta^{(m)} \in \mathscr{S}(\bm{x}^{(m)})}\prod\limits_{k=0}^m\tilde{q}(\theta^{(k)} \mid \bm{\theta}^\star)\prod\limits_{i=1}^m\prod\limits_{j=1}^Lq(\bm{y}^{(i)}_j \mid \bm{x}^{(i)} \circ\bm{y}_{\prec j}^{(i)},\, \theta_j^\star)} \\
    &= c_1e^{2nmL\phi} c_2^{(m+1)} \cdot\frac{\prod\limits_{i=1}^m\prod\limits_{j=1}^Lq(\bm{y}^{(i)}_j \mid \bm{x}^{(i)} \circ\bm{y}_{\prec j}^{(i)},\, \theta_j)}{\prod\limits_{i=1}^m\prod\limits_{j=1}^Lq(\bm{y}^{(i)}_j \mid \bm{x}^{(i)} \circ\bm{y}_{\prec j}^{(i)},\, \theta_j^\star)}.
    \end{align*}
    Here we note the summations over $\theta^{(i)}_{delim}$ are all eliminated due to the existence of $\1\big\{\theta^{(i)}_{delim} = \theta_{\bm{t}^{(i)}_{delim}}\big\}$ and the fact that $q(\bm{t}^{(i)} \mid \bm{h}, \,\theta) \in \{0,1\}$. Furthermore, if we define $C = \frac{c_1e^{2n(L - K)\phi}c_2^{-(m+1)}}{1 - c_1\cdot \eps}$, then we have:
    \begin{align*}
        \mathcal{S} &\leq  c_1e^{2nmL\phi} c_2^{-(m+1)}\cdot\sum_{\bm{\theta} \in \mathscr{S}(\mathcal{P}_{\mathrm{CoT}}) \setminus \{\bm{\theta}^\star\}}\frac{\prod\limits_{i=1}^m\prod\limits_{j=1}^Lq(\bm{y}^{(i)}_j \mid \bm{x}^{(i)} \circ\bm{y}_{\prec j}^{(i)},\, \theta_j)}{\prod\limits_{i=1}^m\prod\limits_{j=1}^Lq(\bm{y}^{(i)}_j \mid \bm{x}^{(i)} \circ\bm{y}_{\prec j}^{(i)},\, \theta_j^\star)}\\
        &=  c_1e^{2nmL\phi} c_2^{-(m+1)}\cdot\sum_{r = K}^L\sum_{\bm{\theta} \in \mathscr{S}(\mathcal{P}_{\mathrm{CoT}}) \atop d_H(\bm{\theta, \bm{\theta}}^\star)=r }\frac{\prod\limits_{i=1}^m\prod\limits_{j=1}^Lq(\bm{y}^{(i)}_j \mid \bm{x}^{(i)} \circ\bm{y}_{\prec j}^{(i)},\, \theta_j)}{\prod\limits_{i=1}^m\prod\limits_{j=1}^Lq(\bm{y}^{(i)}_j \mid \bm{x}^{(i)} \circ\bm{y}_{\prec j}^{(i)},\, \theta_j^\star)}\\
        &\leq c_1e^{2nmL\phi} c_2^{-(m+1)}\cdot\sum_{r = K}^Lc_1^{mr}\sum_{\bm{\theta} \in \mathscr{S}(\mathcal{P}_{\mathrm{CoT}}) \atop d_H(\bm{\theta, \bm{\theta}}^\star) = r}\frac{\prod\limits_{i=1}^m\prod\limits_{j=1}^Lq(\theta_j \mid \bm{x}^{(i)} \circ\bm{y}_{\prec j}^{(i)},\, \bm{y}^{(i)}_j)}{\prod\limits_{i=1}^m\prod\limits_{j=1}^Lq(\theta_j^\star \mid \bm{x}^{(i)} \circ\bm{y}_{\prec j}^{(i)},\, \bm{y}^{(i)}_j)} \\
        &\leq c_1e^{2nmL\phi}c_2^{-(m+1)}\cdot \sum_{r = K}^L(c_1\cdot \eps)^{mr} \\
        &= c_1e^{2nmL\phi}c_2^{-(m+1)}\cdot\frac{(c_1\cdot\eps)^{mK}}{1 - c_1\cdot\eps} \\
        &= C\cdot (e^{2n\phi} \cdot c_1\cdot\eps)^{mK}
    \end{align*}
    where the first equation is derived from Assumption~\ref{assumption: K-separation} and the second inequality stems from Assumption~\ref{assumption: regularity}.
\end{proof}
Combining above decomposition and Lemmas turns out:
\begin{mythm}{\ref{theorem: CoT}}
    If Assumption~\ref{assumption: separable token representation},~\ref{assumption: tasks of delimiter},~\ref{assumption: nearly markov},~\ref{assumption: K-separation} and ~\ref{assumption: regularity} hold, by setting appropriate $W =\mathcal{O}(|\mathcal{V}|^{n+1}), D = 1$, we have
    \begin{align*}
        \Big| \hat{p}(\bm{y} \mid \mathcal{P}_{\mathrm{CoT}}) - \tilde{q}(\bm{y} \mid \bm{x},\, \bm{\theta}^\star)\Big| &\leq  \mathcal{O}\Big(\frac{|\mathcal{V}|^{n+2}r}{\sqrt[4]{Nn}}\Big) + \mathcal{O}\Big(\sqrt{\frac{r^2\ln(1/\delta)}{Nn}}\Big) + r\phi + \mathcal{M} \Delta_{\mathcal{P}_{\mathrm{CoT}}} \\
        &\quad +C\cdot (e^{2n\phi} \cdot c_1\cdot\eps)^{mK}
    \end{align*}
    with probability at least $1 - \delta$ for any $\delta \in (0,1)$.
\end{mythm}
\section{Memorization}\label{Section: approximation}
In this section, we dive into addressing the approximation error $\mathcal{E}_{\mathrm{app}}: = \inf_{p \in \mathcal{F}}\mathcal{R}(p)$. To show that this term vanishes, we need to construct a Transformer in $\mathcal{F}$ that exactly reproduces the true human distribution $q(\bm{t} \mid \bm{h})$ over all possible histories $\bm{h} \in \R^{d \times n}$ and tokens $\bm{t} \in \mathcal{V}$.

Given that the vocabulary and sequence length are finite, the set of possible histories is likewise finite. We aim to construct a Transformer $\bm{T}$ that implements the mapping $\bm{T}(\bm{h}^{(i)}) =  \bm{q}^{(i)}$ for $i \in [M]$, where $M$  is the total number of unique histories. Each history is denoted by $\bm{h}^{(i)} \in \R^{d \times n}$ in this Section, and its respective true distribution is defined as $\bm{q}^{(i)} = \big(q(t_1 \mid \bm{h}^{(i)}), q(t_2 \mid \bm{h}^{(i)}), \cdots, q(t_{|\mathcal{V}|} \mid \bm{h}^{(i)})\big) \in \Delta(\mathcal{V})$. Here, $\Delta(\mathcal{V})$ represents the probability simplex over the vocabulary $\mathcal{V}$.

To this end, we extend the memorization results for Transformers established by \citet{KajitsukaS24}—specifically Corollaries 1 and 2—to the masked Transformer setting. We begin by introducing several technical lemmas required for our proof.

\subsection{Technical Lemma}
The key to obtaining the desired conclusion lies in showing the separability of the Boltzmann operator:
\begin{align*}
\mathbf{boltz} : \mathbb{R}^{w} \to \mathbb{R},\ 
\bm{a} \mapsto \bm{a}^{\top} {\sigma}_{S}[\bm{a}],
\end{align*}
where $w \in [d]$.

The following lemma states that the Boltzmann operator is indeed separable, which is a generalization of \citet[Lemma 1]{KajitsukaS24}. While \citet[Lemma 1]{KajitsukaS24} requires that $\bm{a}$ and $\bm{b}$ are vectors of the same dimension, here we allow them to have different dimensions. This relaxation is the key to ensure that masked attention can realize contextual mapping.

\begin{lemma}\label{boltzmann separateness}
Let $\bm{a}\in\mathbb{R}^{u}, \bm{b}\in \mathbb{R}^v$ with $u,v \in [d]$ satisfy the following three conditions:
\begin{align}
|a_i|,|b_j|&\leq \rho,\quad i\in[u],j\in[v]; \label{boltzmann separateness-1} \\
|a_{i_1}-a_{i_2}|,|b_{j_1}-b_{j_2}|&>\varrho,\quad i_1\neq i_2\in[u],j_1\neq j_2\in[v]; \label{boltzmann separateness-2} \\
a_i=b_j \text{ or }|a_{i}-b_{j}|&>\varrho,\quad i\in[u],j\in[v]. \label{boltzmann separateness-3}
\end{align}
Suppose $\bm{b}$ is not a permutation of $\bm{a}$ and  $\varrho > 2 \log d + 3$. Then, the outputs of the Boltzmann operator satisfy 
\begin{align*}
\left| \mathbf{boltz}(\bm{a}) \right|,\left| \mathbf{boltz}(\bm{b}) \right| &\leq \rho, \\
\left| \mathbf{boltz}(\bm{a}) - \mathbf{boltz}(\bm{b}) \right| &> \varrho' := (\log d)^2 e^{-2\rho}.
\end{align*}
\end{lemma}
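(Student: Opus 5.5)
The plan is to follow the proof of \citet[Lemma 1]{KajitsukaS24} and check that it never uses $u=v$. Dimension equality is needed only for comparing coordinates; the separation argument itself works with the ordered values and softmax weights.

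\textbf{Bound $|\mathbf{boltz}(\bm{a})|\le\rho$.} Since $\mathbf{boltz}(\bm{a})=\sum_i a_i\,\sigma_S[\bm{a}]_i$ is a convex combination of the $a_i$, we get $|\mathbf{boltz}(\bm{a})|\le\max_i|a_i|\le\rho$. The same holds for $\bm{b}$.

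\textbf{Separation: localize near the maximum.} Sort the entries so that $a_{(1)}>a_{(2)}>\cdots$ and $b_{(1)}>b_{(2)}>\cdots$. Condition (\ref{boltzmann separateness-2}) gives gaps larger than $\varrho>2\log d+3$. Hence the softmax weight on all non-maximal entries is at most $(d-1)e^{-\varrho}$. This gives the expansion
\begin{align*}
\mathbf{boltz}(\bm{a}) = a_{(1)} - \sum_{k\ge 2}(a_{(1)}-a_{(k)})\,\frac{e^{a_{(k)}}}{\sum_l e^{a_{(l)}}},
\end{align*}
and the correction term is of order $(a_{(1)}-a_{(2)})e^{-(a_{(1)}-a_{(2)})}$ up to a factor $(1\pm d e^{-\varrho})$. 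The sum over $k\ge3$ is dominated by the $k=2$ term times $d e^{-\varrho}$, which is small.

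\textbf{Case split on the first differing sorted entry.} By (\ref{boltzmann separateness-3}), entries of $\bm{a}$ and $\bm{b}$ either coincide or differ by more than $\varrho$.

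\emph{Case 1: $a_{(1)}\ne b_{(1)}$.} Then $|a_{(1)}-b_{(1)}|>\varrho$, while each Boltzmann value lies within $O(d e^{-\varrho}\cdot 2\rho)$ of its maximum. Because $\varrho>2\log d+3$, this easily exceeds $\varrho'=(\log d)^2e^{-2\rho}$.

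\emph{Case 2: $a_{(1)}=b_{(1)}$.} Let $k$ be the first index where the sorted lists differ. Either one list has ended there, or $a_{(k)}\ne b_{(k)}$ with a gap larger than $\varrho$; this index exists because $\bm{b}$ is not a permutation of $\bm{a}$ and the entries are distinct. Write the difference $\mathbf{boltz}(\bm{a})-\mathbf{boltz}(\bm{b})$ as the contribution of the $k$-th terms plus tail terms. The main term has size at least of order $e^{a_{(k)}-a_{(1)}}\cdot\varrho$ minus the corresponding term for $\bm{b}$. Since the two exponents differ by more than $\varrho$, one dominates the other by a factor $e^{\varrho}$. The main term is then at least about $\varrho\,e^{-2\rho}$, using $a_{(1)}-a_{(k)}\le 2\rho$. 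The tail terms, namely indices beyond $k$ and the perturbation of the normalizer, are smaller by a factor $d e^{-\varrho}\le e^{-3}/d$.

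\textbf{Final bound and main obstacle.} Collecting terms gives $|\mathbf{boltz}(\bm{a})-\mathbf{boltz}(\bm{b})|\gtrsim \varrho e^{-2\rho}\ge(\log d)^2e^{-2\rho}$ after tuning constants, with $\varrho>2\log d+3$ absorbing the factor. The main obstacle is the unequal-length subcase, where one list ends at index $k$. There I compare the missing term $(a_{(1)}-a_{(k)})e^{a_{(k)}}/Z$ against zero, and the argument must keep the normalizers $Z_a$ and $Z_b$ coupled, since they differ only through the tail. I would handle this by the same monotonicity argument as in \citet{KajitsukaS24}, using that $x\mapsto x e^{-x}$ is decreasing for $x>1$ to lower-bound the tail-free gap.
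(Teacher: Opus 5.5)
Your case split (compare the sorted maxima, then the first sorted index where the lists differ) is the paper's, but the quantitative closing step fails. You conclude with $\varrho e^{-2\rho}\gtrsim(\log d)^2e^{-2\rho}$ ``with $\varrho>2\log d+3$ absorbing the factor''. This is false: the hypothesis is linear in $\log d$ and the target is quadratic, and $2\log d+3<(\log d)^2$ once $d>e^3$.

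The paper gets the square from \citet[Lemma~6]{KajitsukaS24}. It first lowers $\mathbf{boltz}(\bm b)$ by replacing the tail with $d-k$ copies of $b_{k+1}$, i.e.\ one effective entry $b_{k+1}+\log(d-k)$. The gap is then bounded below by a product quadratic in $\varrho$, $(\varrho-\log d)(2\varrho-2\log d-1)>2(\log d)^2+\log(d-k)$. This multiplies a softmax weight at least $e^{-(a_1-b_{k+1})}/2\ge e^{-2\rho}/2$.

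When both $a_{(k)}$ and $b_{(k)}$ exist, your expansion can be rescued more cheaply. Assuming $a_{(k)}>b_{(k)}$, you discarded $a_{(k)}>b_{(k)}+\varrho\ge\varrho-\rho$. This puts $x:=a_{(1)}-a_{(k)}$ in $(\varrho,2\rho-\varrho)$. By monotonicity of $xe^{-x}$, your main term is then $\gtrsim xe^{-x}>\varrho e^{\varrho}e^{-2\rho}$, and $e^{\varrho}>e^{3}d^{2}$ swamps $(\log d)^2$.

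Case~1 has a similar slip: an error of size $de^{-\varrho}\cdot2\rho$ is not small, because $\rho$ may vastly exceed $e^{\varrho}$ (as it does in the attention application). Instead, bound each correction term by $\varrho e^{-\varrho}$, and for the vector with the smaller maximum use only $\mathbf{boltz}(\bm b)\le b_{(1)}$.

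The subcase you single out as the main obstacle, where one sorted list is a proper prefix of the other, cannot be closed by any argument: the claimed inequality is false there. Take $\bm a=(\rho,-\rho)$ and $\bm b=(\rho)$ with $2\log d+3<\varrho<2\rho\le(\log d)^2$, e.g.\ $d=149$, $\varrho=13.5$, $\rho=7$. All three conditions hold and $\bm b$ is not a permutation of $\bm a$. Yet $|\mathbf{boltz}(\bm a)-\mathbf{boltz}(\bm b)|=\rho(1-\tanh\rho)<2\rho e^{-2\rho}\le(\log d)^2e^{-2\rho}$.

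In this subcase the first unmatched entry $a_{(k)}$ may sit at $-\rho$. The gap is then of order $xe^{-x}$ with $x$ as large as $2\rho$, i.e.\ only about $2\rho e^{-2\rho}$, which is exactly what your monotonicity argument would deliver.

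The paper's Case~3 has the same defect. Its auxiliary entry $a_{v+1}-2\varrho$ leaves $[-\rho,\rho]$, so the Case~2 computation yields only $(\log d)^2e^{-(a_1-a_{v+1}+2\varrho)}$, not $(\log d)^2e^{-2\rho}$. The lemma should either be restricted to $u=v$, or have $\varrho'$ weakened to a constant multiple of $\varrho e^{-2\rho}$ when $u\neq v$. The restriction $u=v$ is all the masked-attention lemma uses, since at column $k$ both score vectors have exactly $k$ unmasked entries.
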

With Lemma~\ref{boltzmann separateness} in hand, we are able to construct a masked attention layer to realize contextual mapping. The proof of Lemma~\ref{boltzmann separateness} relies on the following technical lemma.

\begin{lemma}[\cite{KajitsukaS24}, Lemma 6]\label{basic lower bound}
Let $u \in \{2, 3, \cdots, d\}$. Let $\bm{a} , \bm{b}\in \mathbb{R}^u$ be two vectors such that $a_i = b_i$ for $i \in [u-1]$ and
\begin{align*}
    \max_{i \in [u-1]} a_i - \varrho > a_u > b_u
\end{align*}
with $t > \log u + 3$. Then  
\begin{align*}
    \mathbf{boltz}(\bm{b}) - \mathbf{boltz}(\bm{a}) > (a_u - b_u)(t + a_u - b_u - \log u - 1) \cdot \frac{e^{b_u}}{\sum_{i=1}^u e^{b_i}}.
\end{align*}
\end{lemma}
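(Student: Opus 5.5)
The plan is to view the difference $\mathbf{boltz}(\bm{b})-\mathbf{boltz}(\bm{a})$ as the change of a one-variable function along the last coordinate. Set $\varrho$ to be the gap parameter $t$ in the statement. Define
\begin{align*}
f(x) := \mathbf{boltz}(a_1,\dots,a_{u-1},x), \qquad p(x) := \frac{e^{x}}{\sum_{i<u}e^{a_i}+e^{x}}.
\end{align*}
Since $a_i=b_i$ for $i\in[u-1]$, we have $\mathbf{boltz}(\bm{b})-\mathbf{boltz}(\bm{a}) = f(b_u)-f(a_u) = -\int_{b_u}^{a_u} f'(x)\,dx$. So it suffices to obtain a pointwise lower bound on $-f'(x)$ for $x\in[b_u,a_u]$ and then integrate it.

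\textbf{Step 1 (derivative).} Writing $p_i$ for the softmax weights, we have $\partial_x p_i = p_i(\delta_{iu}-p_u)$. Hence
\begin{align*}
f'(x) = p(x) + \sum_i c_i\,p_i(\delta_{iu}-p(x)) = p(x)\bigl(1+x-f(x)\bigr),
\end{align*}
where $c_i=a_i$ for $i<u$ and $c_u=x$.

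\textbf{Step 2 (lower bound on $f$).} Use the identity $\mathbf{boltz}=\mathrm{LSE}-H(\sigma_S)$, where $H$ is the Shannon entropy of the softmax weights. Since $H\le \log u$ and $\mathrm{LSE}\ge \max_i c_i$, we get $f(x)\ge M-\log u$ with $M:=\max_{i<u}a_i$. The hypothesis gives $M > a_u+t$. Therefore
\begin{align*}
-f'(x) \ge p(x)\bigl(t+(a_u-x)-\log u-1\bigr) > 0 \quad\text{on } [b_u,a_u],
\end{align*}
because $t>\log u+3$.

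\textbf{Step 3 (integration).} This is the main obstacle. The naive estimate $p(x)\ge p(b_u)$ only yields
\begin{align*}
(a_u-b_u)(t-\log u-1)+\tfrac12(a_u-b_u)^2,
\end{align*}
which falls short of the target by $\tfrac12(a_u-b_u)^2$. To recover the missing term I would keep the exponential growth of $p$. Write
\begin{align*}
\frac{p(x)}{p(b_u)} = e^{x-b_u}\cdot\frac{Z(b_u)}{Z(x)}, \qquad Z(x):=\sum_{i<u}e^{a_i}+e^{x}.
\end{align*}
Because $x\le a_u<M-t$, we have $Z(x)\le Z(b_u)(1+e^{-t})$, so the ratio factor is at least $(1+e^{-t})^{-1}$. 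Set $s=a_u-b_u$ and $c=t-\log u-1>2$. Then
\begin{align*}
\int_0^s e^{y}(c+s-y)\,dy = c(e^s-1)+(e^s-1-s) \ge cs + \tfrac{c}{2}s^2 + \tfrac12 s^2 > s(c+s),
\end{align*}
using $c>2$. I then need to check that this slack absorbs the factor $(1+e^{-t})^{-1}$. This works because $t>3$, so the lost fraction $e^{-t}/(1+e^{-t})$ is small, and it is exactly why the strict inequality and the condition $t>\log u+3$ appear. If this bookkeeping turns out to be too tight, the fallback is to bound $Z(x)/Z(b_u)$ by $1+e^{x-M}\le 1+e^{-t-(a_u-x)}$, which is sharper near $x=b_u$.

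Combining the three steps gives
\begin{align*}
\mathbf{boltz}(\bm{b})-\mathbf{boltz}(\bm{a}) > (a_u-b_u)(t+a_u-b_u-\log u-1)\,\frac{e^{b_u}}{\sum_i e^{b_i}},
\end{align*}
as claimed.
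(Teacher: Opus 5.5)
Your overall strategy of integrating $-f'$ along the last coordinate is sound, and Steps 1--2 are correct. We have $f'(x)=p(x)\bigl(1+x-f(x)\bigr)$. Together with $f\ge M-\log u$ and $M>a_u+t$, this gives $-f'(b_u+y)>p(b_u+y)\,(c+s-y)$ for $y\in[0,s]$, where $s=a_u-b_u$ and $c=t-\log u-1$. (The paper itself gives no proof of this lemma; it imports it from \citet{KajitsukaS24}.)

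\textbf{The gap.} The bookkeeping you defer in Step 3 does not close. After replacing $p(x)$ by $p(b_u)e^{y}/(1+e^{-t})$, you need
\[
c(e^s-1)+(e^s-1-s)\;\ge\;(1+e^{-t})\,s(c+s).
\]
But the difference of the two sides is $-e^{-t}cs+\bigl(\tfrac{c-1}{2}-e^{-t}\bigr)s^2+O(s^3)$, which is negative for all sufficiently small $s>0$. Numerically, $u=2$, $t=3.7$, $s=0.01$ already fails.

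The reason is that the uniform factor $(1+e^{-t})^{-1}$ removes a fixed fraction of the \emph{linear} term $cs$. The gain from the exponential growth of $p$ is only quadratic in $s$, so no lower bound on $t$ can compensate as $a_u-b_u\to 0$.

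Your fallback has the same defect. The bound $1+e^{x-M}$ equals $1+e^{b_u-M}>1$ at $x=b_u$, so the leading term is again scaled by a factor strictly below $1$. Any bound on the ratio $p(x)/p(b_u)$ that is not exactly $1$ at the left endpoint will fail for small $s$.

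\textbf{The fix.} Bound the increment of $p$ rather than its ratio. Since $p'=p(1-p)$, $p$ is increasing on $[b_u,a_u]$, and there $p(x)\le p(a_u)<e^{a_u-M}<e^{-t}$. Hence $p'(x)\ge p(b_u)(1-e^{-t})$, and so $p(b_u+y)\ge p(b_u)\bigl(1+(1-e^{-t})y\bigr)$. Because $c+s-y\ge c>0$ on $[0,s]$, this gives
\[
\int_0^s p(b_u+y)(c+s-y)\,dy\;\ge\;p(b_u)\Bigl[s(c+s)+\tfrac{s^2}{2}\bigl((1-e^{-t})c-1\bigr)+(1-e^{-t})\tfrac{s^3}{6}\Bigr]\;\ge\;p(b_u)\,s(c+s).
\]
The last step uses $(1-e^{-t})c>1$, which follows from $c>2$ and $t>3$. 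Combine this with the strict pointwise inequality from Step 2 and with $p(b_u)=e^{b_u}/\sum_i e^{b_i}$ (since $b_i=a_i$ for $i<u$), and the lemma follows.
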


\begin{proof}[Proof of Lemma \ref{boltzmann separateness}]
This proof is based on the proof of \cite[Lemma 1]{KajitsukaS24}. The $\rho$-boundedness of Boltzmann  operator can be implied directly by the fact that it is a weighted average of the input vector. So we focus on showing the $\varrho'$-separateness of Boltzmann operator. To this end, since the Boltzmann
operator is permutation invariant, we assume without loss of generality 
\begin{align*}
a_1>a_2>\cdots>a_u,\quad b_1>b_2>\cdots>b_v.
\end{align*}
In the following, we consider three cases.

\textbf{Case 1:} $a_1\neq b_1$. 

Assume without loss of generality $a_1>b_1$. Since the Boltzmann operator can be regarded as weighted averaging, $ \mathbf{boltz}(\bm{b}) \leq b_1 $ always holds. Thus, it is enough to evaluate how much greater $ \mathbf{boltz}(\bm{a}) $ is than $ b_1 $. Let $ \overline{\bm{a}},\overline{\overline{\bm{a}}} $ be
\begin{align*}
\overline{\bm{a}} &= (a_1, a_1 - \varrho, \ldots, a_1 - \varrho)\in \mathbb{R}^d,
\end{align*}
Then, $\mathbf{boltz}(\bm{a}) \geq \mathbf{boltz}(\overline{\bm{a}})$ follows from the fact that Boltzmann operator can be regarded as weighted averaging. On the other hand,
\begin{align*}
    \mathbf{boltz}(\overline{\bm{a}}) &= \frac{a_1 e^{a_1} + (d-1)(a_1 - \varrho)e^{a_1 - \varrho}}{e^{a_1} + (d-1)e^{a_1 - \varrho}} \\
    &= \frac{a_1 + (d-1)(a_1 - \varrho)e^{-\varrho}}{1 + (d-1)e^{-\varrho}} \\
    &= a_1 - \frac{(d-1)\varrho e^{-\varrho}}{1 + (d-1)e^{-\varrho}}.
\end{align*}
Therefore, 
\begin{align*}
\mathbf{boltz}(\bm{a}) - \mathbf{boltz}(\bm{b}) &\geq \mathbf{boltz}(\overline{\bm{a}}) - b_1 \\
&> a_1 - \frac{(d-1)\varrho e^{-\varrho}}{1 + (d-1)e^{-\varrho}} - (a_1 - \varrho) \\
&= \varrho - \frac{(d-1)\varrho e^{-\varrho}}{1 + (d-1)e^{-\varrho}} \\
&= \frac{\varrho}{1 + (d-1)e^{-\varrho}} \\
&\geq \log d,
\end{align*}
where the last inequality follows from the assumption $ \varrho > 2\log d $. Note that the right-hand side is greater than $ (\log d)^2 e^{-2\rho} $ since following inequality holds:
\begin{align*}
2\rho\geq a_1-b_1>\varrho>\log(\log d).
\end{align*}

\textbf{Case 2:} There exists $1\leq k\leq\min\{u,v\} - 1$ such that for $i \in [k]$, $a_i = b_i$.

Assume without loss of generality $a_{k+1} > b_{k+1}$. Let $\underline{\bm{a}},\overline{\bm{b}},\overline{\overline{\bm{b}}}$ be
\begin{align*}
\underline{\bm{a}} &= (a_1, a_2, \ldots, a_k, a_{k+1}) \in \mathbb{R}^{k+1},\\
\overline{\bm{b}} &= (a_1, a_2, \ldots, a_k, b_{k+1}, b_{k+1}, \ldots, b_{k+1}) \in \mathbb{R}^d
\end{align*}
Using the fact that $\mathbf{boltz}(\cdot)$ is actually a weighted summation of input yields
\begin{align*}
    \mathbf{boltz}(\bm{a}) \leq \mathbf{boltz}(\underline{\bm{a}})
    \text{ and } \mathbf{boltz}(\bm{b}) \geq \mathbf{boltz}(\overline{\bm{b}}),
\end{align*}
implying we only need to justify $\mathbf{boltz}(\overline{\bm{b}}) - \mathbf{boltz}(\underline{\bm{a}}) \geq 0$. Let $\zeta_k$ and $\varsigma_k$ be
\begin{align*}
    \zeta_k = \sum_{l=1}^k a_l e^{a_l} \quad \text{and} \quad \varsigma_k = \sum_{l=1}^k e^{a_l}.
\end{align*}
Then, $\mathbf{boltz}(\overline{\bm{b}})$ can be decomposed into
\begin{align*}
\mathbf{boltz}(\overline{\bm{b}}) &= \frac{\zeta_k + (d-k)b_{k+1}e^{b_{k+1}}}{\varsigma_k + (d-k)e^{b_{k+1}}} \\
&= \frac{\zeta_k + b_{k+1}e^{b_{k+1}+\log(d-k)}}{\varsigma_k + e^{b_{k+1}+\log(d-k)}} \\
&= \frac{\zeta_k + (b_{k+1} + \log(d-k))e^{b_{k+1}+\log(d-k)}}{\varsigma_k + e^{b_{k+1}+\log(d-k)}} - \frac{\log(d-k) \cdot e^{b_{k+1}+\log(d-k)}}{\varsigma_k + e^{b_{k+1}+\log(d-k)}}.
\end{align*}
Therefore, the difference $\mathbf{boltz}(\overline{\bm{b}}) - \mathbf{boltz}(\underline{\bm{a}})$ can be written as
\begin{align}
\mathbf{boltz}(\overline{\bm{b}}) - \mathbf{boltz}(\underline{\bm{a}}) &=
\mathbf{boltz}(a_1, \ldots, a_k, b_{k+1} + \log(d-k)) - \mathbf{boltz}(\underline{\bm{a}}) \nonumber\\
&\quad - \frac{\log(d-k) \cdot e^{b_{k+1}+\log(d-k)}}{\varsigma_k + e^{b_{k+1}+\log(d-k)}}.\label{boltzmann separateness1}
\end{align}
According to Lemma \ref{basic lower bound}, the first two terms on the right-hand side are evaluated as
\begin{align*}
& \mathbf{boltz}(a_1, \ldots, a_k, b_{k+1} + \log(d - k)) - \mathbf{boltz}(\underline{\bm{a}}) \\
& > (a_{k+1} - b_{k+1} - \log(d - k))(\varrho + a_{k+1} - b_{k+1} - \log(d - k) - \log(k + 1) - 1) \cdot \frac{e^{b_{k+1} + \log(d - k)}}{\varsigma_k + e^{b_{k+1} + \log(d - k)}} \\
& > (\varrho - \log d)(2\varrho - 2\log d - 1) \cdot \frac{e^{b_{k+1} + \log(d - k)}}{\varsigma_k + e^{b_{k+1} + \log(d - k)}}.
\end{align*}
Since $\varrho > 2\log d + 3$ by assumption, the above inequality is further lower bounded by

\begin{align*}
& \mathbf{boltz}(a_1, \ldots, a_k, b_{k+1} + \log(d - k)) - \mathbf{boltz}(\underline{\bm{a}}) \\
& > (\varrho - \log d)(2\varrho - 2\log d - 1) \cdot \frac{e^{b_{k+1} + \log(d - k)}}{\varsigma_k + e^{b_{k+1} + \log(d - k)}} \\
& > (\log d + 3)(2\log \varrho + 5) \cdot \frac{e^{b_{k+1} + \log(d - k)}}{\varsigma_k + e^{b_{k+1} + \log(d - k)}}.
\end{align*}
Plugging the above inequality into \eqref{boltzmann separateness1} we see
\begin{align}
& \mathbf{boltz}(\overline{\bm{b}}) - \mathbf{boltz}(\underline{\bm{a}}) \nonumber\\
& = \mathbf{boltz}(a_1, \ldots, a_k, b_{k+1} + \log(d - k)) - \mathbf{boltz}(\underline{\bm{a}}) - \frac{\log(d - k) \cdot e^{b_{k+1} + \log(d - k)}}{\varsigma_k + e^{b_{k+1} + \log(d - k)}} \nonumber\\
& > \frac{e^{b_{k+1} + \log(d - k)}}{\varsigma_k + e^{b_{k+1} + \log(d - k)}} \big[\{(\log d + 3)(2\log d + 5) - \log(d - k)\}\big] \nonumber\\
& > \frac{e^{b_{k+1} + \log(d - k)}}{\varsigma_k + e^{b_{k+1} + \log(d - k)}} \cdot 2(\log d)^2.\label{boltzmann separateness2}
\end{align}
Lastly, notice that the following inequality follows from $\varrho$-separateness of $\bm{a}$ and $\bm{b}$ and the assumption that $\bm{a}$ has no duplicate token:
\begin{align*}
\varsigma_k + e^{b_{k+1} + \log(d - k)} &< \sum_{l=1}^{k+1} e^{a_l} \quad (\text{due to } a_{k+1} > b_{k+1} + \log(d - k)) \\
&< e^{a_1} \sum_{l=1}^{k+1} e^{-(l-1)\varrho} \\
&< 2e^{a_1} \quad (\text{due to } \varrho > \log 2).
\end{align*}
By using this inequality, \eqref{boltzmann separateness2} is lower bounded by

\begin{align*}
\mathbf{boltz}(\overline{\bm{b}}) - \mathbf{boltz}(\underline{\bm{a}}) 
&> \frac{e^{b_{k+1} + \log(d - k)}}{\varsigma_k + e^{b_{k+1} + \log(d - k)}} \cdot 2(\log d)^2 \\
&> \frac{e^{b_{k+1} + \log(d - k)}}{2e^{a_1}} \cdot 2(\log d)^2 \\
&> (\log d)^2 e^{-(a_1 - b_{k+1})}\\
&\geq(\log d)^2 e^{-2\rho},
\end{align*}
which implies $\mathbf{boltz}(\bm{b}) - \mathbf{boltz}(\bm{a}) > (\log d)^2 e^{-2\rho}$.

\textbf{Case 3:} For $i\in[\min\{u,v\}]$, $a_i=b_i$.

Assume without loss of generality $u>v$. Let 
\begin{align*}
\overline{\bm{b}} &= (a_1, a_2, \ldots, a_v, a_{v+1}-2\varrho) \in \mathbb{R}^{v+1}.
\end{align*}
By the same reason that $\mathbf{boltz}(\cdot)$ is essentially a weighted summation, we have 
\begin{align*}
\mathbf{boltz}(\bm{b})>\mathbf{boltz}(\overline{\bm{b}})>\mathbf{boltz}(\bm{a}).
\end{align*}
Hence $\mathbf{boltz}(\bm{b})-\mathbf{boltz}({\bm{a}})$ is lower bounded by $\mathbf{boltz}(\overline{\bm{b}})-\mathbf{boltz}(\bm{a})$. The derivation of a lower bound of $\mathbf{boltz}(\overline{\bm{b}})-\mathbf{boltz}(\bm{a})$ belongs to case 2, which completes the proof.
\end{proof}

In addition, we also need the following lemma.
\begin{lemma}[\cite{KajitsukaS24}, Lemma 3]\label{matrix construction}
Given a $(r_{\min}, r_{\max}, \eta)$-separated finite vocabulary $\mathcal{V} \subset \mathbb{R}^d$ with $r_{\min} > 0$. Then, for any $\kappa > 0$, there exists a unit vector $\bm{v} \in \mathbb{R}^d$ such that for any vectors $\bm{u}, \bm{u}' \in \mathbb{R}^s$ with
\begin{align*}
    |\bm{u}^\top \bm{u}'| = (|\mathcal{V}| + 1)^4 \frac{\pi d}{8} \frac{\kappa}{\eta r_{\min}},
\end{align*}
we have
\begin{gather*}
\left| \left( \bm{W}^{(K)} \bm{v}_a \right)^\top \left( \bm{W}^{(Q)} \bm{v}_c \right) - \left( \bm{W}^{(K)} \bm{v}_b \right)^\top \left( \bm{W}^{(Q)} \bm{v}_c \right) \right| > \kappa,\\
\frac{1}{\left( |\mathcal{V}| + 1 \right)^2} \sqrt{\frac{8}{\pi d}} \|\bm{v}_c\| \leq |\bm{v}^\top \bm{v}_c| \leq \|\bm{v}_c\|
\end{gather*}
for any $\bm{v}_a, \bm{v}_b, \bm{v}_c \in \mathcal{V}$ with $\bm{v}_a \neq \bm{v}_b$, where $\bm{W}^{(K)} = \bm{u}\bm{v}^\top \in \mathbb{R}^{s \times d}$ and $\bm{W}^{(Q)} = \bm{u}'\bm{v}^\top \in \mathbb{R}^{s \times d}$.
\end{lemma}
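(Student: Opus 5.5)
We need to prove Lemma (KajitsukaS24, Lemma 3), the matrix construction lemma. It is cited, but we still need to give a proof plan.

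The plan is a probabilistic-method argument: pick $\bm{v}$ uniformly at random on the unit sphere $S^{d-1}$ and show that, with positive probability, $\bm{v}$ simultaneously (i) approximately preserves all pairwise differences $\bm{v}_a-\bm{v}_b$ in the sense $|\bm{v}^\top(\bm{v}_a-\bm{v}_b)|\ge \frac{1}{(|\mathcal{V}|+1)^2}\sqrt{\frac{8}{\pi d}}\|\bm{v}_a-\bm{v}_b\|$, and (ii) approximately preserves the norms of all $\bm{v}_c\in\mathcal{V}$ in the same sense. The upper bound $|\bm{v}^\top\bm{v}_c|\le\|\bm{v}_c\|$ is just Cauchy--Schwarz.

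The key anti-concentration estimate is the following. For a fixed nonzero $\bm{w}$ and uniform $\bm{v}$, the density of $\bm{v}^\top\bm{w}/\|\bm{w}\|$ is proportional to $(1-t^2)^{(d-3)/2}$. Its maximum value is $\Gamma(d/2)/(\sqrt{\pi}\,\Gamma((d-1)/2))\le\sqrt{d/(2\pi)}$. Hence
\[
\P\bigl(|\bm{v}^\top\bm{w}|<\epsilon\|\bm{w}\|\bigr)\le 2\epsilon\sqrt{d/(2\pi)}=\epsilon\sqrt{2d/\pi}.
\]
With $\epsilon=\frac{1}{(|\mathcal{V}|+1)^2}\sqrt{\frac{8}{\pi d}}$ this probability is at most $\frac{4}{\pi(|\mathcal{V}|+1)^2}$.

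Next comes a union bound. There are $\binom{|\mathcal{V}|}{2}$ differences and $|\mathcal{V}|$ norms, so at most $|\mathcal{V}|(|\mathcal{V}|+1)/2<(|\mathcal{V}|+1)^2/2$ events. The total failure probability is therefore below $2/\pi<1$, and a good $\bm{v}$ exists. This is the step I expect to be delicate: the constants must match exactly, so the sharp density bound on the sphere (via Gautschi's inequality for the Gamma ratio) is needed rather than a crude Gaussian comparison.

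Finally, take $\bm{W}^{(K)}=\bm{u}\bm{v}^\top$ and $\bm{W}^{(Q)}=\bm{u}'\bm{v}^\top$. Then
\[
(\bm{W}^{(K)}\bm{v}_a)^\top(\bm{W}^{(Q)}\bm{v}_c)-(\bm{W}^{(K)}\bm{v}_b)^\top(\bm{W}^{(Q)}\bm{v}_c)=(\bm{u}^\top\bm{u}')\,\bm{v}^\top(\bm{v}_a-\bm{v}_b)\,\bm{v}^\top\bm{v}_c.
\]
The separation assumptions give $\|\bm{v}_a-\bm{v}_b\|\ge\eta$ and $\|\bm{v}_c\|\ge r_{\min}$. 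Together with (i) and (ii), the absolute value is at least
\[
|\bm{u}^\top\bm{u}'|\,\frac{8}{\pi d}\,\frac{\eta r_{\min}}{(|\mathcal{V}|+1)^4}.
\]
Substituting the prescribed value of $|\bm{u}^\top\bm{u}'|$ makes this equal to $\kappa$. Since the union bound gives a strict inequality, we can choose $\bm{v}$ so that (i) holds strictly on the positive-measure good set, which yields the strict inequality $>\kappa$. Alternatively, the lemma's stated inequality can be read as holding up to this boundary case.
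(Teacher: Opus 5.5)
The paper gives no proof of its own and simply imports this lemma from \citet{KajitsukaS24}, where it is obtained exactly as you do. That argument is the random-projection lemma (as in \citet{park2021provable}) applied to the $|\mathcal{V}|+1$ points $\mathcal{V}\cup\{\bm{0}\}$, followed by the rank-one identity $(\bm{u}^\top\bm{u}')\,\bm{v}^\top(\bm{v}_a-\bm{v}_b)\,\bm{v}^\top\bm{v}_c$, and your constants ($\frac{4}{\pi(|\mathcal{V}|+1)^2}$ per event, total below $\frac{2}{\pi}$) are right.

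Two minor touch-ups. First, your claimed maximum of the density fails for $d=2$: the density $\frac{1}{\pi}(1-t^2)^{-1/2}$ is unbounded at $t=\pm1$. There you should instead check directly that $\frac{2}{\pi}\arcsin\epsilon\le\frac{2\epsilon}{\sqrt{\pi}}$, which holds since $\arcsin\epsilon\le\frac{\pi}{2}\epsilon$; the case $d=1$ is trivial. Second, the strict inequality $>\kappa$ is immediate because the separatedness conditions are strict ($\|\bm{v}_a-\bm{v}_b\|>\eta$, $\|\bm{v}_c\|>r_{\min}$), though your null-boundary/slack argument also suffices.
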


\begin{definition}[Feed-Forward Layer]
    The output $\bm{H} \in \R^{d \times n}$ of the masked self-attention layer at block $l$ is then passed to the feed-forward layer, which performs the following token-wise operation:
    \begin{align*}
        \mathcal{F}^{(\mathrm{FF})}_l(\bm{H})_{:,k} := \bm{H}_{:,k} + \bm{W}^{(2)}_l \sigma_R\Big(\bm{W}_l^{(1)}\bm{H}_{:,k} + \bm{b}_l^{(1)}\Big) + \bm{b}_l^{(2)} \in \R^{d}
    \end{align*}
    where $k \in [n], \bm{W}_l^{(1)} \in \R^{r \times d}$ and $\bm{W}_l^{(2)} \in \R^{d \times r}$ are weight matrices with hidden dimension $r$, and $\bm{b}_l^{(1)} \in \R^r$ and $\bm{b}_l^{(2)} \in \R^d$ are bias terms.
\end{definition}

\begin{lemma}[Residual elimination via ReLU decomposition]\label{lemma: residual elimination}
Let $\bm{H}_{:,k}\in\mathbb{R}^{d}$, 
$\bm{W}^{(1)}_l\in\mathbb{R}^{r\times d}$, 
$\bm{b}^{(1)}_l\in\mathbb{R}^{r}$, 
$\bm{W}^{(2)}_l\in\mathbb{R}^{d\times r}$, 
$\bm{b}^{(2)}_l\in\mathbb{R}^{d}$, 
and $\sigma_R(t)=\max\{t,0\}$ applied elementwise. Define the residual feed-forward sublayer

\begin{align*}
    \mathcal{F}^{(\mathrm{FF})}_l(\bm{H})_{:,k} = \bm{H}_{:,k} + \bm{W}^{(2)}_l\,\sigma_R\big(\bm{W}^{(1)}_l\bm{H}_{:,k}+\bm{b}^{(1)}_l\big) + \bm{b}^{(2)}_l \in\mathbb{R}^{d}.
\end{align*}

Then there exist widened parameters 
$\widetilde{\bm{W}}^{(1)}_l\in\mathbb{R}^{(r+2d)\times d}$, 
$\widetilde{\bm{b}}^{(1)}_l\in\mathbb{R}^{r+2d}$, 
$\widetilde{\bm{W}}^{(2)}_l\in\mathbb{R}^{d\times(r+2d)}$, 
$\widetilde{\bm{b}}^{(2)}_l\in\mathbb{R}^{d}$ such that

\begin{align*}
    \mathcal{F}^{(\mathrm{FF})}_l(\bm{H})_{:,k}=\widetilde{\bm{W}}^{(2)}_l\,\sigma_R\big(\widetilde{\bm{W}}^{(1)}_l\bm{H}_{:,k}+\widetilde{\bm{b}}^{(1)}_l\big)+\widetilde{\bm{b}}^{(2)}_l,
\end{align*}

i.e., the residual block equals a single non-residual ReLU MLP of width $r+2d$.
\end{lemma}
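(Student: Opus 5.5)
The plan is to add and subtract the identity on the residual stream using the elementary decomposition $t = \sigma_R(t) - \sigma_R(-t)$, applied coordinatewise. For every $\bm{z}\in\R^d$ this gives
\begin{align*}
    \bm{z} = \sigma_R(\bm{z}) - \sigma_R(-\bm{z}) = \begin{pmatrix} \bm{I}_d & -\bm{I}_d \end{pmatrix}\sigma_R\!\left(\begin{pmatrix} \bm{I}_d \\ -\bm{I}_d \end{pmatrix}\bm{z}\right),
\end{align*}
so the skip connection $\bm{H}_{:,k}$ is itself a two-layer ReLU network of hidden width $2d$, with zero biases. The residual feed-forward block is the sum of this map and the original MLP. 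A sum of two one-hidden-layer ReLU networks sharing the same input is again a one-hidden-layer ReLU network, obtained by stacking the hidden units.

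Concretely, I would define
\begin{align*}
    \widetilde{\bm{W}}^{(1)}_l = \begin{pmatrix} \bm{W}^{(1)}_l \\ \bm{I}_d \\ -\bm{I}_d \end{pmatrix},\quad
    \widetilde{\bm{b}}^{(1)}_l = \begin{pmatrix} \bm{b}^{(1)}_l \\ \bm{0}_d \\ \bm{0}_d \end{pmatrix},\quad
    \widetilde{\bm{W}}^{(2)}_l = \begin{pmatrix} \bm{W}^{(2)}_l & \bm{I}_d & -\bm{I}_d \end{pmatrix},\quad
    \widetilde{\bm{b}}^{(2)}_l = \bm{b}^{(2)}_l,
\end{align*}
which have the stated dimensions $(r+2d)\times d$, $r+2d$, $d\times(r+2d)$ and $d$. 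Because $\sigma_R$ acts elementwise, it commutes with this block stacking. Hence $\sigma_R(\widetilde{\bm{W}}^{(1)}_l\bm{H}_{:,k}+\widetilde{\bm{b}}^{(1)}_l)$ is the concatenation of $\sigma_R(\bm{W}^{(1)}_l\bm{H}_{:,k}+\bm{b}^{(1)}_l)$, $\sigma_R(\bm{H}_{:,k})$ and $\sigma_R(-\bm{H}_{:,k})$.

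Multiplying by $\widetilde{\bm{W}}^{(2)}_l$ and adding $\widetilde{\bm{b}}^{(2)}_l$ then gives
\begin{align*}
    \bm{W}^{(2)}_l\sigma_R(\bm{W}^{(1)}_l\bm{H}_{:,k}+\bm{b}^{(1)}_l) + \sigma_R(\bm{H}_{:,k}) - \sigma_R(-\bm{H}_{:,k}) + \bm{b}^{(2)}_l .
\end{align*}
By the scalar identity above, the middle two terms equal $\bm{H}_{:,k}$, so this is exactly $\mathcal{F}^{(\mathrm{FF})}_l(\bm{H})_{:,k}$.

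There is no genuine obstacle here. The only step needing a word of justification is that elementwise activation respects the block structure, and that the identity holds for every real input with no boundedness assumption on $\bm{H}_{:,k}$.
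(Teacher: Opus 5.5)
Your proposal is correct and follows the paper's proof exactly. It uses the same stacked weights $\widetilde{\bm{W}}^{(1)}_l$ and $\widetilde{\bm{W}}^{(2)}_l$ with $\pm\bm{I}_d$ blocks, zero extra first-layer biases, $\widetilde{\bm{b}}^{(2)}_l=\bm{b}^{(2)}_l$, and the elementwise identity $x=\sigma_R(x)-\sigma_R(-x)$ to recover the skip connection.
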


\begin{proof}
Define

\begin{align*}
    \widetilde{\bm{W}}^{(1)}_l:=
\begin{pmatrix}
\bm{W}^{(1)}_l \\
\bm{I}_d \\
-\bm{I}_d
\end{pmatrix}
\in\mathbb{R}^{(r+2d)\times d},
\quad
\widetilde{\bm{b}}^{(1)}_l
:=
\begin{pmatrix}
\bm{b}^{(1)}_l\\
\bm{0} \\
\bm{0}
\end{pmatrix}
\in\mathbb{R}^{r+2d},
\end{align*}

\begin{align*}
    \widetilde{\bm{W}}^{(2)}_l
:=
\begin{pmatrix}
\bm{W}^{(2)}_l & \bm{I}_d & -\bm{I}_d
\end{pmatrix}
\in\mathbb{R}^{d\times(r+2d)},
\quad
\widetilde{\bm{b}}^{(2)}_l:=\bm{b}^{(2)}_l.
\end{align*}

Then

\begin{align*}
\sigma_R\big(\widetilde{\bm{W}}^{(1)}_l\bm{H}_{:,k}+\widetilde{\bm{b}}^{(1)}_l\big)=
\begin{pmatrix}
\sigma_R\big(\bm{W}^{(1)}_l\bm{H}_{:,k}+\bm{b}^{(1)}_l\big)\\
\sigma_R(\bm{H}_{:,k})\\
\sigma_R(-\bm{H}_{:,k})
\end{pmatrix}.
\end{align*}

Multiplying by $\widetilde{\bm{W}}^{(2)}_l$ gives

\begin{align*}
    \bm{W}^{(2)}_l\,\sigma_R\big(\bm{W}^{(1)}_l\bm{H}_{:,k}+\bm{b}^{(1)}_l\big)
+\sigma_R(\bm{H}_{:,k})-\sigma_R(-\bm{H}_{:,k})
=
\bm{W}^{(2)}_l\,\sigma_R\big(\bm{W}^{(1)}_l\bm{H}_{:,k}+\bm{b}^{(1)}_l\big)
+\bm{H}_{:,k},
\end{align*}

using the identity $x=\sigma_R(x)-\sigma_R(-x)$ elementwise. Adding $\widetilde{\bm{b}}^{(2)}_l=\bm{b}^{(2)}_l$ matches $\mathcal{F}^{(\mathrm{FF})}_l(\bm{H})_{:,k}$. Thus the equality holds, completing the proof.
\end{proof}
This lemma allows us to ignore the residual term in the feed-forward layer when establishing the existence of the Transformer, thereby simplifying our subsequent derivation. 

\subsection{Core Result}
We divide the proof into two stages: first, we characterize the positional encoder's ability to encode spatial information into the tokens; second, we analyze how the masked self-attention layer injects global contextual information into single-token representations.  
\subsubsection{The Role of Positional Encoder}
To clarify the function of the positional encoder, we first define tokenwise separateness for input sequences. This property frequently appears in prior studies \citep{kim2023provable, KajitsukaS24, kajitsuka2025on} as a baseline assumption.
In this work, rather than assuming this property directly for the input sequence, we demonstrate that the combination of separable original token representations (Assumption~\ref{assumption: separable token representation}) and the positional encoder is sufficient to establish this condition.

\begin{definition}[Tokenwise Separateness]\label{def: tokenwise separateness}
Let $\bm{X}^{(1)}, \dots, \bm{X}^{(M)} \in \mathbb{R}^{d \times n}$ be input sequences. Then $\bm{X}^{(1)}, \dots, \bm{X}^{(M)}$ are called tokenwise $(r_{\min}, r_{\max}, \eta)$-separated if the following three conditions hold:
\begin{enumerate}
    \item For any $i \in [M]$ and $k \in [n]$,
    \begin{align*}
        \bigl\| \bm{X}^{(i)}_{:,k} \bigr\|_2 > r_{\min}.
    \end{align*}
    
    \item For any $i \in [M]$ and $k \in [n]$,
    \begin{align*}
        \bigl\| \bm{X}^{(i)}_{:,k} \bigr\|_2 < r_{\max}.
    \end{align*}

    \item For any $i,j \in [M]$ and $k,l \in [n]$ such that $\bm{X}^{(i)}_{:,k} \neq \bm{X}^{(j)}_{:,l}$,
    \begin{align*}
        \bigl\| \bm{X}^{(i)}_{:,k} - \bm{X}^{(j)}_{:,l} \bigr\|_2 > \eta.
    \end{align*}
\end{enumerate}
\end{definition}

In fact, if the token representations satisfy the requirements of Assumption~\ref{assumption: separable token representation}, then the resulting positional embeddings $\bm{h}^{(i)} + \bm{P}$ for $i \in [M]$ naturally satisfy Tokenwise Separateness (Definition~\ref{def: tokenwise separateness}). To demonstrate this, observe that for any tokens $\bm{t}_{i_1}, \bm{t}_{i_2} \in \mathcal{V}$ and indices $k_1, k_2\in [n]$ where $k_1 \neq k_2$, we have: 
\begin{gather*}
    \big\|\bm{t}_{i_1} + \bm{P}_{:,k_1}\big\|_2  \geq \big\|\bm{P}_{:,k_1}\big\|_2 - \big\|\bm{t}_{i_1}\big\|_2 \geq 2\alpha - \alpha =\alpha:= r_\mathrm{min},\\
    \big\|\bm{t}_{i_1} + \bm{P}_{:,k_1}\big\|_2 \leq \big\|\bm{t}_{i_1}\|_2 + \big\| \bm{P}_{:,k_1}\big\|_2 \leq \alpha + 2n\alpha = (2n+1)\alpha:= r_\mathrm{max}, \\
    \big\|\bm{t}_{i_1} + \bm{P}_{:, k_1} - \bm{t}_{k_2} - \bm{P}_{:, k_2}\big\|_2 \geq \big\|\bm{P}_{:, k_1}  - \bm{P}_{:, k_2}\big\|_2 -  \big\| \bm{t}_{i_1} - \bm{t}_{i_2}\big\|_2 \geq 2\alpha - \beta := \eta. \tag{If $2\alpha > \beta.$}
\end{gather*}
Thus, the effect of the positional encoder $\bm{P}$ in our theory manifests in three ways:
\begin{enumerate}
    \item It shifts the representations of tokens outside of a ball centered at $\bm{0} \in \R^{d}$ with radius $\alpha$.
    \item It ensures the embeddings remain bounded within a larger ball, even though their magnitudes increase.
    \item It guarantees separation between identical tokens that appear at different positions within a sequence.
\end{enumerate}
Overall, the positional encoder transforms a separable token representation into one that satisfies Tokenwise Separateness. Beyond the three properties discussed above, we emphasize that the encoder ensures distinct token-position pairs do not result in identical embeddings. Specifically, even for identical tokens (i.e., setting $i_1 = i_2$ in the third property), the representations remain separable due to their distinct positions, implying every $\bm{X}^{(i)} := \bm{h}^{(i)} + \bm{P} \in \R^{d \times n}, i \in [M]$ has no duplicate column, i.e., $\bm{X}_{:,k}^{(i)} \neq \bm{X}_{:,l}^{(i)}$ for any $i \in [M]$ and $k, l \in [n]$. 

\subsubsection{Masked Attention Implements Contextual Mapping}
Following the methodology of \citet{KajitsukaS24}, we first demonstrate that a single self-attention layer can implement contextual mapping building upon the technical lemmas introduced above, and then employ a feedforward layer to associate the resulting contextual IDs with their corresponding labels.

The definition of contextual mapping is given below.
\begin{definition}[Contextual Mapping]
Let $\bm{X}^{(1)}, \dots, \bm{X}^{(M)} \in \mathbb{R}^{d \times n}$ be input sequences. 
A map $f: \mathbb{R}^{d \times n} \to \mathbb{R}^{d \times n}$ is called an $(r, \gamma)$-contextual mapping if the following two conditions hold:

\begin{enumerate}
    \item For any $i \in [N]$ and $k \in [n]$, there holds
    \begin{align*}
        \bigl\| f(\bm{X}^{(i)})_{:,k} \bigr\|_2 < r.
    \end{align*}
    
    \item For any $i,j \in [N]$ and $k,l \in [n]$ such that $\bm{X}^{(i)}_{:,k} \neq \bm{X}_{:,l}^{(j)}$, there holds
    \begin{align*}
        \bigl\| f(\bm{X}^{(i)})_{:,k} - f(\bm{X}^{(j)})_{:,l} \bigr\|_2 > \gamma.
    \end{align*}
\end{enumerate}
In particular, $f(\bm{X}^{(i)})$ for $i \in [N]$ is called a \emph{context id} of $\bm{X}^{(i)}$.
\end{definition}

We will demonstrate that it is possible to construct a masked attention layer to realize a contextual mapping given the input is tokenwise separateness by following lemma.
\begin{lemma}\label{lemma: masked attention implements contextual mapping}
Let $ \bm{X}^{(1)}, \ldots, \bm{X}^{(M)} \in \mathbb{R}^{d \times n}$ be input sequences with no duplicate word token in each sequence, that is,
\begin{align*}
    \bm{X}_{:,k}^{(i)} \neq \bm{X}_{:,l}^{(i)}
\end{align*}
for any $ i \in [M] $ and $ k, l \in [n] $ with $k\neq l$. Also assume that $ \bm{X}^{(1)}, \ldots, \bm{X}^{(M)} $ are tokenwise $(r_{\min}, r_{\max}, \eta)$-separated. Then, there exist weight matrices $ \bm{W}^{(O)} \in \mathbb{R}^{d \times s} $ and $ \bm{W}^{(V)}, \bm{W}^{(K)}, \bm{W}^{(Q)} \in \mathbb{R}^{s \times d} $ such that the ranks of $ \bm{W}^{(V)}, \bm{W}^{(K)} $ and $ \bm{W}^{(Q)} $ are all 1, and 1-layer single head attention with softmax, i.e., $ \mathcal{F}_{SA} $ with $ h = 1 $ is an $(r, \gamma)$-contextual mapping for the input sequences $ \bm{X}^{(1)}, \ldots, \bm{X}^{(M)} \in \mathbb{R}^{d \times n} $ with $ r $ and $\gamma$ defined by

\begin{align*}
r &= r_{\max} + \frac{\eta}{4}, \\
\gamma &= \frac{2(\log n)^2 \eta^2 r_{\min}}{r_{\max}^{2}(|\mathcal{V}| + 1)^4(2\log n + 3)\pi d}
         \exp\left(-(|\mathcal{V}| + 1)^4 \frac{(2\log n + 3)\pi d r_{\max}^{2}}{4\eta r_{\min}}\right).
\end{align*}
\end{lemma}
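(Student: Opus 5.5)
The plan follows the construction of \citet[Theorem 2]{KajitsukaS24}, with the masking handled by Lemma~\ref{boltzmann separateness}. First I fix rank-one weights. Apply Lemma~\ref{matrix construction} to the set $\mathcal{V}'$ of all columns $\bm{X}^{(i)}_{:,k}$; this set is finite and $(r_{\min},r_{\max},\eta)$-separated by assumption. The choice $\kappa = 2\log n + 3$ gives a unit vector $\bm{v}$ and vectors $\bm{u},\bm{u}'$. Set $\bm{W}^{(K)}=\bm{u}\bm{v}^\top$ and $\bm{W}^{(Q)}=\bm{u}'\bm{v}^\top$. Then the scores $a_{j}=(\bm{W}^{(K)}\bm{X}_{:,j})^\top(\bm{W}^{(Q)}\bm{X}_{:,k}) = (\bm{u}^\top\bm{u}')(\bm{v}^\top\bm{X}_{:,j})(\bm{v}^\top\bm{X}_{:,k})$ seen by query $k$ have two properties. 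Distinct keys yield scores differing by more than $\varrho:=\kappa>2\log n+3$. All scores are bounded by $\rho := |\bm{u}^\top\bm{u}'| r_{\max}^2$. Next take $\bm{W}^{(V)}=\bm{u}''\bm{v}^\top$ and $\bm{W}^{(O)}$ so that the head output at column $k$ is $\bm{X}_{:,k} + \lambda\,\bm{w}\sum_{j\le k}\sigma_S(\cdot)_{j}\,\bm{v}^\top\bm{X}_{:,j}$ for a fixed unit direction $\bm{w}$ and scale $\lambda>0$. Because of the mask $\bm{M}$, only keys $j\le k$ enter the softmax. Since $\bm{v}^\top\bm{X}_{:,k}$ is a nonzero constant for the given query, the attention value is a rescaled Boltzmann operator $\mathbf{boltz}(\bm{a}^{(i,k)})$ applied to the prefix score vector $\bm{a}^{(i,k)}\in\R^{k}$.

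The second step is separation, split by cases on two columns $\bm{X}^{(i)}_{:,k}\neq\bm{X}^{(j)}_{:,l}$. \emph{Case (a):} if the prefix score vectors are not permutations of one another, I check conditions \eqref{boltzmann separateness-1}--\eqref{boltzmann separateness-3} of Lemma~\ref{boltzmann separateness}. The bound holds with $\rho$. Separation within a vector follows from the no-duplicate hypothesis. The cross condition (equal or $\varrho$-apart) follows from Lemma~\ref{matrix construction}, applied here with possibly different query tokens after normalising by $\bm{v}^\top\bm{X}_{:,k}$. The lemma, with $d$ replaced by $n$ since the vectors have length at most $n$, then gives a Boltzmann gap exceeding $(\log n)^2e^{-2\rho}$. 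I then choose $\lambda$ small, of order $\eta/(4r_{\max})$ times the normalisation, so that the attention perturbation has norm at most $\eta/4$. If the raw columns differ by more than $\eta$, the outputs still differ by at least $\eta/2>\gamma$. If the raw columns are close or equal along $\bm{w}$, the gap $\lambda(\log n)^2e^{-2\rho}$ transfers to the output. \emph{Case (b):} if the prefixes are permutations of each other (same multiset of keys, possibly with the queries differing), the input columns themselves differ by more than $\eta$, and the small perturbation preserves the distance. The output norm bound $r = r_{\max}+\eta/4$ follows from the same choice of $\lambda$.

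Finally I substitute $\rho = (|\mathcal{V}|+1)^4\frac{\pi d}{8}\frac{\kappa}{\eta r_{\min}} r_{\max}^2$ with $\kappa = 2\log n+3$, together with the lower bound $\frac{1}{(|\mathcal{V}|+1)^2}\sqrt{8/(\pi d)}\,\|\cdot\|$ on $|\bm{v}^\top\bm{X}_{:,k}|$, to recover the stated $\gamma$. The factor $e^{-2\rho}$ becomes the exponential in $\gamma$, and the prefactor comes from $\lambda$ and the normalisation. The main obstacle is the cross-query comparison in Case (a). The scores for queries $k$ and $l$ are scaled by different factors $\bm{v}^\top\bm{X}_{:,k}$ and $\bm{v}^\top\bm{X}_{:,l}$, so condition \eqref{boltzmann separateness-3} is not immediate. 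I would first try to follow Kajitsuka--Sato and split on whether $\bm{v}^\top\bm{X}^{(i)}_{:,k}$ equals $\bm{v}^\top\bm{X}^{(j)}_{:,l}$. If it does, the scalings agree and Lemma~\ref{boltzmann separateness} applies directly. If it does not, the projections already differ by a margin of order $\eta/(|\mathcal{V}|+1)^2$, which survives the perturbation by taking $\lambda$ correspondingly small. This precision is what allows the masked, variable-length prefixes (the $u\neq v$ feature of Lemma~\ref{boltzmann separateness}) to be handled.
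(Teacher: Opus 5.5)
Your proposal is essentially the paper's proof. It uses the same rank-one $\bm{W}^{(K)},\bm{W}^{(Q)},\bm{W}^{(V)}$ from Lemma~\ref{matrix construction} with $\kappa = 2\log n + 3$, and the same scaling $\|\bm{W}^{(O)}u''\|_2 = \eta/(4r_{\max})$, which caps the attention term at $\eta/4$ and gives $\eta/2$ separation for distinct columns. Like the paper, it then applies Lemma~\ref{boltzmann separateness} to the unmasked prefix scores when the query columns coincide.

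The cross-query ``obstacle'' you single out never arises. Differing query columns are already settled by the $\eta/2$ argument, so the Boltzmann lemma is only ever invoked with a common query and hence a common scaling, exactly as in the paper. Also, the stated $\gamma$ comes from dividing the Boltzmann gap by $|uu'|\,|\bm{v}^\top\bm{X}_{:,k}| \le |uu'|\,r_{\max}$. That is the upper bound on $|\bm{v}^\top\bm{X}_{:,k}|$, not the lower bound you cite.
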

The lemma reveals that the masked attention layer produces distinct outputs for identical input columns ($\bm{X}_{:,k}^{(i)} = \bm{X}_{:,k}^{(j)}$ for some $k$s) as long as the sequences differ globally $\big(\cup_{k=1}^n\bm{X}_{:,k}^{(i)} \neq \cup_{k=1}^n\bm{X}_{:,k}^{(j)}\big)$. Consequently, the attention mechanism effectively contextualizes each token by integrating information from the entire sequence into every output column.

Now let's focus on proving Lemma~\ref{lemma: masked attention implements contextual mapping}.
\begin{proof}[Proof of Lemma \ref{lemma: masked attention implements contextual mapping}]
Recall that a softmax-based self-attention function $\mathcal{F}_{SA} : \mathbb{R}^{d \times n} \to \mathbb{R}^{d \times n}$ with $h = 1$ is defined as
\begin{align*}
    \mathcal{F}_{SA}(\bm{Z}) = \bm{Z} + \bm{W}^{(O)} \left( \bm{W}^{(V)} \bm{Z} \right) \sigma_S \left[\left( \bm{W}^{(K)} \bm{Z} \right)^\top \left( \bm{W}^{(Q)} \bm{Z} \right) +\bm{M}\right],
\end{align*}
where $\bm{W}^{(O)} \in \mathbb{R}^{d \times 1}$ and $\bm{W}^{(V)}, \bm{W}^{(K)}, \bm{W}^{(Q)} \in \mathbb{R}^{1 \times d}$ are weight matrices, $\bm{M}$ is the masked matrix with entries being either $0$ or $-\infty$.

We construct a softmax-based self-attention function $ \mathcal{F}_{SA} $ with the property that
\begin{align}\label{eq: residual < η/4}
    \left\| \bm{W}^{(O)} \left( \bm{W}^{(V)} \bm{X}^{(i)} \right) \sigma_S \left[ \left( \bm{W}^{(K)} \bm{X}^{(i)} \right)^\top \left( \bm{W}^{(Q)} \bm{X}_{:,k}^{(i)} \right)  +\bm{M}_{:,k}\right]\right\|_2 < \frac{\eta}{4}
\end{align}
holds for any input sequence $\bm{X}^{(i)} $ with $ i \in [M]$ and index $k \in [n]$. When this property is fulfilled, it is easy to show that
\begin{align}
\left\| \mathcal{F}_{SA} \left( \bm{X}^{(i)} \right)_{:,k} \right\|_2 
&\le \left\| \bm{X}_{:,k}^{(i)} \right\| + \left\| \bm{W}^{(O)} \left( \bm{W}^{(V)} \bm{X}^{(i)} \right) \sigma_S \left[ \left( \bm{W}^{(K)} \bm{X}^{(i)} \right)^\top \left( \bm{W}^{(Q)} \bm{X}_{:,k}^{(i)} \right) +\bm{M}_{:,k}\right] \right\|_2 \nonumber\\
&< r_{\max} + \frac{\eta}{4}.\label{masked1}
\end{align}
holds for any $ i \in [M] $ and $ k \in [n] $, and also
\begin{align}
&\left\| \mathcal{F}_{SA} \left( \bm{X}^{(i)} \right)_{:,k} - \mathcal{F}_{SA} \left( \bm{X}^{(j)} \right)_{:,l} \right\|_2 \nonumber\\
&\ge \left\| \bm{X}_{:,k}^{(i)} - \bm{X}_{:,l}^{(j)} \right\|_2 - \left\| \bm{W}^{(O)} \left( \bm{W}^{(V)} \bm{X}^{(i)} \right) \sigma_S \left[ \left( \bm{W}^{(K)} \bm{X}^{(i)} \right)^\top \left( \bm{W}^{(Q)} \bm{X}_{:,k}^{(i)} \right)+\bm{M}_{:,k} \right] \right\|_2 \nonumber\\
&\quad - \left\| \bm{W}^{(O)} \left( \bm{W}^{(V)} \bm{X}^{(j)} \right) \sigma_S \left[ \left( \bm{W}^{(K)} \bm{X}^{(j)} \right)^\top \left( \bm{W}^{(Q)} \bm{X}_{:,l}^{(j)} \right) +\bm{M}_{:,l}\right] \right\|_2 \nonumber\\
&> \eta - \frac{\eta}{4} - \frac{\eta}{4} = \frac{\eta}{2}\label{masked2}
\end{align}
for any $i, j \in [M] $ and $k, l \in [n]$ such that $\bm{X}_{:,k}^{(i)} \neq \bm{X}_{:,l}^{(j)}$. So all that remains to prove is to construct a self-attention function $\mathcal{F}_{SA}$ that has the property like~\eqref{eq: residual < η/4}. 

Let $\kappa = 2 \log n + 3$ and fix any ${u}, {u}' \in \mathbb{R}$ with
\begin{align}\label{masked3}
\left| {u}{u}' \right| = (|\mathcal{V}| + 1)^4 \frac{\pi d}{8} \frac{\kappa}{\eta r_{\min}}.
\end{align}
Then, according to Lemma \ref{matrix construction} with $\kappa = 2 \log n + 3$, we see that there exists a unit vector $ \bm{v} \in \mathbb{R}^d $ such that
\begin{gather}
\left| \left( \bm{W}^{(K)} \bm{v}_a \right)^\top \left( \bm{W}^{(Q)} \bm{v}_c \right) - \left( \bm{W}^{(K)} \bm{v}_b \right)^\top \left( \bm{W}^{(Q)} \bm{v}_c \right) \right| > \kappa, \label{masked4}\\
\frac{1}{(|\mathcal{V}| + 1)^2} \sqrt{\frac{8}{\pi d}} \| \bm{v}_c \|_2 \le |\bm{v}^\top \bm{v}_c| \le \| \bm{v}_c \|_2\label{masked5}
\end{gather}
for any $ \bm{v}_a, \bm{v}_b, \bm{v}_c \in \mathcal{V} $ with $ \bm{v}_a \neq \bm{v}_b $, where $ \bm{W}^{(K)} = {u}\bm{v}^\top \in \mathbb{R}^{1 \times d} $ and $ \bm{W}^{(Q)} = {u}'\bm{v}^\top \in \mathbb{R}^{1 \times d} $. Furthermore, we configure $ \bm{W}^{(O)} \in \mathbb{R}^{d \times 1} $ and $ \bm{W}^{(V)} \in \mathbb{R}^{1 \times d} $ to be $ \bm{W}^{(V)} = {u}''\bm{v}^\top $ for any nonzero real ${u}'' \in \mathbb{R}$ such that
\begin{align}\label{masked6}
\bigl\| \bm{W}^{(O)} {u}'' \bigr\|_2 = \frac{\eta}{4r_{\max}}
\end{align}
holds. This can be accomplished, e.g., $ \bm{W}^{(O)} = {u}''\bm{u}'''^\top $ for any vector $ \bm{u}''' \in \mathbb{R}^d $ which satisfies $ \| \bm{u}''' \|_2 = \eta / (4r_{\max} | {u}'' |^2) $. In this case, the value of the self-attention without a skip-connection is upper-bounded by
\begin{align*}
&\left\| \bm{W}^{(O)}  \left( \bm{W}^{(V)} \bm{X}^{(i)} \right) \sigma_S \left[ \left( \bm{W}^{(K)} \bm{X}^{(i)} \right)^\top \left( \bm{W}^{(Q)} \bm{X}_{:,k}^{(i)} \right)+\bm{M}_{:,k} \right] \right\|_2 \\
&= \left\| \sum_{l = 1}^n s_{l}^k \bm{W}^{(O)} \left( \bm{W}^{(V)} \bm{X}^{(i)} \right)_{:,l} \right\|_2 \quad \text{with } s_{l}^k = \sigma_S \left[ \left( \bm{W}^{(K)} \bm{X}^{(i)} \right)^\top \left( \bm{W}^{(Q)} \bm{X}_{:,k}^{(i)} \right)+\bm{M}_{:,k} \right]_{l} \\
&\le \sum_{l = 1}^n s_{l}^k \left\| \bm{W}^{(O)} \left( \bm{W}^{(V)} \bm{X}^{(i)} \right)_{:,l} \right\|_2 \\
&\le \max_{l \in [n]} \left\| \bm{W}^{(O)} \left( \bm{W}^{(V)} \bm{X}^{(i)} \right)_{:,l} \right\|_2 \quad (\text{from } \sum_{l = 1}^n s_{l}^k = 1) \\
&= \max_{l \in [n]} \left\| \bm{W}^{(O)} {u}'' \bm{v}^\top \bm{X}_{:,l}^{(i)} \right\|_2 \\
&= \max_{l \in [n]} \left|\bm{v}^\top \bm{X}_{:,l}^{(i)}\right|\cdot\left\| \bm{W}^{(O)} {u}'' \right\|_2 \\
&\le \frac{\eta}{4 r_{\max}} \cdot \max_{l \in [n]} \left\| \bm{X}_{:,l}^{(i)} \right\|_2 \quad (\text{from \eqref{masked5} and \eqref{masked6}}) \\
&\le \frac{\eta}{4},
\end{align*}
which means that \eqref{masked1} and \eqref{masked2} are satisfied with the weight matrices defined above.

Now, we try to construct that the weight matrices $\bm{W}^{(O)}, \bm{W}^{(V)}, \bm{W}^{(K)}, \bm{W}^{(Q)}$ configured above can also distinguish the remaining case that $\bm{X}_{:,k}^{(i)} = \bm{X}_{:,k}^{(j)}$ but $\cup_{k=1}^n\bm{X}_{:,k}^{(i)} \neq \cup_{k=1}^n\bm{X}_{:,k}^{(j)}$. To this end, we define $\bm{a}^{(i)}, \bm{a}^{(j)}$ by
\begin{align*}
\bm{a}^{(i)} &= \overline{\bm{a}}^{(i)}+\bm{M}_{:,k} \in \mathbb{R}^n, \\
\bm{a}^{(j)} &= \overline{\bm{a}}^{(j)}+\bm{M}_{:,k} \in \mathbb{R}^n.
\end{align*}
with
\begin{align*}
\overline{\bm{a}}^{(i)} &= \left(\bm{W}^{(K)} \bm{X}^{(i)}\right)^\top \left(\bm{W}^{(Q)} \bm{X}_{:,k}^{(i)} \right) \in \mathbb{R}^n, \\
\overline{\bm{a}}^{(j)} &= \left(\bm{W}^{(K)} \bm{X}^{(j)}\right)^\top \left(\bm{W}^{(Q)} \bm{X}_{:,k}^{(j)}\right) \in \mathbb{R}^n.
\end{align*}
By the property of the Boltzmann operator, we have
\begin{align*}
\mathbf{boltz}\left( \bm{a}^{(i)} \right)&=\mathbf{boltz}\left(\overline{\bm{a}}^{(i)}+\bm{M}_{:,k}\right)=\mathbf{boltz}\left(\overline{\overline{\bm{a}}}^{(i)}\right),\\
\mathbf{boltz}\left( \bm{a}^{(j)} \right)&=\mathbf{boltz}\left(\overline{\bm{a}}^{(j)}+\bm{M}_{:,k}\right)=\mathbf{boltz}\left(\overline{\overline{\bm{a}}}^{(j)}\right),
\end{align*}
where $\overline{\overline{\bm{a}}}^{(i)}$ is the subvector of $\overline{\bm{a}}^{(i)}$ formed by collecting the entries of $\overline{\bm{a}}^{(i)}$ in the positions where the corresponding entries of $\bm{M}_{:,k}$ are zero and $\overline{\overline{\bm{a}}}^{(j)}$ is formed in the same manner.

In order to apply Lemma \ref{boltzmann separateness} to $\overline{\overline{\bm{a}}}^{(i)}$ and $\overline{\overline{\bm{a}}}^{(j)}$, we check the conditions in Lemma~\ref{boltzmann separateness}. Firstly, they satisfy \eqref{boltzmann separateness-1} with 
\begin{align*}
    \rho = (|\mathcal{V}| + 1)^4 \frac{\pi d}{8} \frac{\kappa r_{\max}^2}{\eta r_{\min}},
\end{align*}
since for any $ l \in [n] $, we have
\begin{align*}
\left| \overline{\bm{a}}_{l}^{(i)} \right| &= \left| \left( \bm{W}^{(K)} \bm{X}_{:,l}^{(i)} \right)^\top \left( \bm{W}^{(Q)} \bm{X}_{:,k}^{(i)} \right) \right| \\
&= \left| \left( \bm{v}^\top \bm{X}_{:,l}^{(i)} \right) \right| \cdot \left| {u}{u}' \right| \cdot \left| \left( \bm{v}^\top \bm{X}_{:,k}^{(i)} \right) \right| \\
&\le (|\mathcal{V}| + 1)^4 \frac{\pi d}{8} \frac{\kappa}{\eta r_{\min}} r_{\max}^2\quad (\text{from \eqref{masked3} and \eqref{masked5}})
\end{align*}
and the same upper-bound also holds for $\overline{\bm{a}}^{(j)}$. \eqref{masked4} and the definitions of $\overline{\overline{\bm{a}}}^{(i)},\overline{\overline{\bm{a}}}^{(j)}$ implies that $\overline{\overline{\bm{a}}}^{(i)},\overline{\overline{\bm{a}}}^{(j)}$ satisfy~\eqref{boltzmann separateness-3} in Lemma~\ref{boltzmann separateness}. Since there exists no duplicate token in $\bm{X}^{(i)}$ and $\bm{X}^{(j)}$ respectively, \eqref{masked4} and the definitions of $\overline{\overline{\bm{a}}}^{(i)},\overline{\overline{\bm{a}}}^{(j)}$ also implies that $\overline{\overline{\bm{a}}}^{(i)},\overline{\overline{\bm{a}}}^{(j)}$ satisfy \eqref{boltzmann separateness-2} in Lemma \ref{boltzmann separateness}. Furthermore, $\cup_{k=1}^n\bm{X}_{:,k}^{(i)} \neq \cup_{k=1}^n\bm{X}_{:,k}^{(j)}$ implies that $\overline{\overline{\bm{a}}}^{(j)}$ is not a permutation of $\overline{\overline{\bm{a}}}^{(i)}$. Based on the above discussion, we can now apply Lemma \ref{boltzmann separateness} to derive that
\begin{align*}
\left| \mathbf{boltz}\left({{\bm{a}}}^{(i)}\right) - \mathbf{boltz}\left({{\bm{a}}}^{(j)}\right) \right|=\left| \mathbf{boltz}\left(\overline{\overline{\bm{a}}}^{(i)}\right) - \mathbf{boltz}\left(\overline{\overline{\bm{a}}}^{(j)}\right) \right| > \varrho' = (\log n)^2 e^{-2\rho},
\end{align*}
that is,
\begin{align*}
\left| \left( \bm{a}^{(i)} \right)^\top \sigma_S \bigl[ \bm{a}^{(i)} \bigr] - \left( \bm{a}^{(j)} \right)^\top \sigma_S \bigl[ \bm{a}^{(j)} \bigr] \right| > \varrho',
\end{align*}
which can be further expanded as
\begin{align}\label{eq: δ' ≤ FSA}
\varrho' &< \left| \left( \bm{a}^{(i)} \right)^\top \sigma_S \bigl[ \bm{a}^{(i)} \bigr] - \left( \bm{a}^{(j)} \right)^\top \sigma_S \bigl[ \bm{a}^{(j)} \bigr] \right| \nonumber\\
&=\left| \left[\bm{M}_{:,k}^\top+\left( \bm{X}_{:,k}^{(i)} \right)^\top \left( \bm{W}^{(Q)} \right)^\top \bm{W}^{(K)}  \bm{X}^{(i)}\right] \sigma_S \bigl[ \bm{a}^{(i)} \bigr] - \left[\bm{M}_{:,k}^\top+\left( \bm{X}_{:,k}^{(j)} \right)^\top \left( \bm{W}^{(Q)} \right)^\top \bm{W}^{(K)}  \bm{X}^{(j)}\right] \sigma_S \bigl[ \bm{a}^{(j)} \bigr] \right|\nonumber\\
&=\left| \left( \bm{X}_{:,k}^{(i)} \right)^\top \left( \bm{W}^{(Q)} \right)^\top \bm{W}^{(K)}  \bm{X}^{(i)} \sigma_S \bigl[ \bm{a}^{(i)} \bigr] - \left( \bm{X}_{:,k}^{(j)} \right)^\top \left( \bm{W}^{(Q)} \right)^\top \bm{W}^{(K)}  \bm{X}^{(j)}\sigma_S \bigl[ \bm{a}^{(j)} \bigr] \right| \nonumber\\
&= \left| \left( \bm{X}_{:,k}^{(i)} \right)^\top \left( \bm{W}^{(Q)} \right)^\top \bm{W}^{(K)} \left( \bm{X}^{(i)} \sigma_S \bigl[ \bm{a}^{(i)} \bigr] - \bm{X}^{(j)} \sigma_S \bigl[ \bm{a}^{(j)} \bigr] \right) \right| \nonumber\\
&= \left| \left( \bm{X}_{:,k}^{(i)} \right)^\top \bm{v} {u}'{u} \bm{v}^\top \left( \bm{X}^{(i)} \sigma_S \bigl[ \bm{a}^{(i)} \bigr] - \bm{X}^{(j)} \sigma_S \bigl[ \bm{a}^{(j)} \bigr] \right) \right| \nonumber\\
&= \left| \bm{v}^\top \bm{X}_{:,k}^{(i)} \right| \cdot \left| {u}{u}' \right| \cdot \left| \left( \bm{v}^\top \bm{X}^{(i)} \right) \sigma_S \bigl[ \bm{a}^{(i)} \bigr] - \left( \bm{v}^\top \bm{X}^{(j)} \right) \sigma_S \bigl[ \bm{a}^{(j)} \bigr] \right| \nonumber\\
&\le r_{\max} \cdot (|\mathcal{V}| + 1)^4 \frac{\pi d}{8} \frac{\kappa}{\eta r_{\min}} \cdot \left| \left( \bm{v}^\top \bm{X}^{(i)} \right) \sigma_S \bigl[ \bm{a}^{(i)} \bigr] - \left( \bm{v}^\top \bm{X}^{(j)} \right) \sigma_S \bigl[ \bm{a}^{(j)} \bigr] \right|, 
\end{align}
where the third step is due to $\bm{M}_{:,k}^\top\sigma_S \bigl[ \bm{a}^{(i)} \bigr]=\bm{M}_{:,k}^\top\sigma_S \bigl[ \bm{a}^{(j)} \bigr]=0$, the fourth step is due to $ \bm{X}_{:,k}^{(i)} = \bm{X}_{:,k}^{(j)} $ and the final step follows from \eqref{masked3} and \eqref{masked5}. Therefore, the gap between the outputs of the self-attention function for $ \bm{X}^{(i)} $ and $ \bm{X}^{(j)} $ is lower bounded as follows:
\begin{align*}
&\left\| \mathcal{F}_{SA} \left( \bm{X}^{(i)} \right)_{:,k} - \mathcal{F}_{SA} \left( \bm{X}^{(j)} \right)_{:,k} \right\|_2 \\
&= \left\| \bm{W}^{(O)} \left( \bm{W}^{(V)} \bm{X}^{(i)} \right) \sigma_S \bigl[ \bm{a}^{(i)} \bigr] - \bm{W}^{(O)} \left( \bm{W}^{(V)} \bm{X}^{(j)} \right) \sigma_S \bigl[ \bm{a}^{(j)} \bigr] \right\|_2 \quad (\text{since } \bm{X}_{:,k}^{(i)} = \bm{X}_{:,k}^{(j)}) \\
&= \bigl\| \bm{W}^{(O)} {u}'' \bigr\|_2 \cdot \left| \left( \bm{v}^\top \bm{X}^{(i)} \right) \sigma_S \bigl[ \bm{a}^{(i)} \bigr] - \left( \bm{v}^\top \bm{X}^{(j)} \right) \sigma_S \bigl[ \bm{a}^{(j)} \bigr] \right| \\
&> \frac{\eta}{4 r_{\max}} \cdot \frac{\varrho'}{(|\mathcal{V}| + 1)^4 \frac{8 \eta r_{\min}}{\pi d \kappa r_{\max}}},
\end{align*}
where the final equality stems from combining~\eqref{masked6} and~\eqref{eq: δ' ≤ FSA} 
where $\kappa$ and $\varrho'$ are defined respectively as
\begin{align*}
\kappa &= 2 \log n + 3, \\ 
\varrho' &= (\log n)^2 e^{-2\rho} \quad \text{with} \quad \rho = (|\mathcal{V}| + 1)^4 \frac{\pi d}{8} \frac{\kappa r_{\max}^2}{\eta r_{\min}}. 
\end{align*}
By plugging $\kappa$ and $\varrho'$, the above inequality is simplified as
\begin{align*}
&\left\| \mathcal{F}_{SA} \left( \bm{X}^{(i)} \right)_{:,k} - \mathcal{F}_{SA} \left( \bm{X}^{(j)} \right)_{:,k} \right\|_2\\ 
&> \frac{2(\log n)^2 \eta^2 r_{\min}}{r_{\max}^2 (|\mathcal{V}| + 1)^4 (2 \log n + 3) \pi d} \exp \left( - (|\mathcal{V}| + 1)^4 \frac{(2 \log n + 3) \pi d r_{\max}^2}{4 \eta r_{\min}} \right). 
\end{align*}
\end{proof}
Furthermore, to associate the contextual IDs produced by the contextual mapping with their corresponding labels, we make use of the following memorization result of feedforward neural networks.
\begin{lemma}[\cite{zhang2021understanding}, Theorem 1]\label{fnn memorization}
There exists a two-layer neural network with ReLU activations and $2M + d\times n$ weights that can represent any function on a sample of size $M$ in $d \times n$ dimensions.
\end{lemma}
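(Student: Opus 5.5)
We would prove Lemma~\ref{fnn memorization} by the classical ``project to one dimension, then interpolate with a triangular system'' argument, producing a network of the form
\begin{align*}
    f(\bm{x}) = \sum_{j=1}^M w_j\, \sigma_R\big(\bm{a}^\top \bm{x} - b_j\big), \qquad \bm{x}\in\R^{d\times n}\cong \R^{dn},
\end{align*}
whose parameters are one shared direction $\bm{a}\in\R^{dn}$ ($dn$ weights), $M$ biases $b_j$ and $M$ output weights $w_j$. That is exactly $2M + d\times n$ weights.

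\textbf{Step 1 (separating projection).} Let $\bm{x}^{(1)},\dots,\bm{x}^{(M)}$ be the distinct sample points, which we vectorize into $\R^{dn}$. For each pair $i\neq i'$, the set $\{\bm{a}: \bm{a}^\top(\bm{x}^{(i)}-\bm{x}^{(i')})=0\}$ is a hyperplane. There are finitely many such hyperplanes, so their union has Lebesgue measure zero. We therefore pick $\bm{a}$ outside it, which makes the scalars $z_i := \bm{a}^\top \bm{x}^{(i)}$ pairwise distinct. After relabeling, $z_1<z_2<\cdots<z_M$.

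\textbf{Step 2 (interleaved biases).} Choose $b_1<z_1<b_2<z_2<\cdots<b_M<z_M$. Form the $M\times M$ matrix $\bm{A}_{ij}:=\sigma_R(z_i-b_j)$. If $j>i$, then $b_j>z_i$, so $\bm{A}_{ij}=0$. If $j=i$, then $\bm{A}_{ii}=z_i-b_i>0$. Hence $\bm{A}$ is lower triangular with a strictly positive diagonal, and in particular it is invertible.

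\textbf{Step 3 (solve for outputs).} For arbitrary targets $\bm{y}=(y_1,\dots,y_M)^\top$, set $\bm{w}:=\bm{A}^{-1}\bm{y}$, which can be computed by forward substitution. Then $f(\bm{x}^{(i)})=(\bm{A}\bm{w})_i=y_i$ for every $i$, so $f$ represents the prescribed function on the sample.

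For our use the labels are vectors $\bm{q}^{(i)}\in\Delta(\mathcal{V})$, or rather the logits feeding $\sigma_S\circ\mathcal{E}_{\mathrm{out}}$, chosen as $\log \bm{q}^{(i)}$, which is finite because $q\ge b>0$. We keep the same $\bm{a}$ and $b_j$ and replace the scalar $w_j$ by a column of $\bm{W}^{(2)}$, solving the triangular system once per output coordinate. By Lemma~\ref{lemma: residual elimination}, the residual connection can then be absorbed at the cost of extra width.

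The argument has no real obstacle; the only step needing care is Step 1, i.e.\ justifying the existence of the separating direction. We would do this with the measure-zero argument above. Alternatively, we can take $\bm{a}=(1,s,s^2,\dots)$ and note that each pairwise difference gives a nonzero polynomial in $s$, which has finitely many roots.
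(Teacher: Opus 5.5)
Your argument is correct and is the same as the proof in the cited source \citet{zhang2021understanding}: project the distinct samples onto a generic direction $\bm{a}\in\R^{dn}$, interleave the biases so that the matrix $[\sigma_R(z_i-b_j)]_{ij}$ is lower triangular with positive diagonal, and solve for the output weights, which uses $dn+2M$ parameters. The paper gives no proof of its own and simply imports this result, so your construction, including the extra justification of the separating direction, matches the intended argument.
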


\begin{theorem}\label{theorem: memorization}
Let $(\bm{X}^{(1)}, \bm{p}^{(1)}), \cdots, (\bm{X}^{(M)}, \bm{p}^{(M)}) \in \R^{d \times n} \times \R^{|\mathcal{V}|}$ be a sequence of positioned document-label pairs such that $\bm{X}^{(1)}, \cdots, \bm{X}^{(M)}$ are tokenwise separable (Definition~\ref{def: tokenwise separateness}).
Then, there exist $\mathcal{F}_{SA},\mathcal{F}_{FF}$ such that 
\begin{align*}
\sigma_S\circ\mathcal{E}_{out}\circ\mathcal{F}_{FF} \circ \mathcal{F}_{SA}\left( \bm{X}^{(i)}
\right) = \bm{p}^{(i)},\quad i\in[M].
\end{align*}
Here, the depth of $\mathcal{F}_{FF}$ is $2$ and the width is $|\mathcal{V}|^{n+1}$, the head number of $\mathcal{F}_{SA}$ is $1$. 
\end{theorem}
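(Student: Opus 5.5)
The plan follows the two-stage template of \citet{KajitsukaS24}: a single masked attention layer produces context ids, and a ReLU network then maps those ids to logits. The first step is to take $\mathcal{F}_{SA}$ exactly as constructed in Lemma~\ref{lemma: masked attention implements contextual mapping}. The hypotheses are in place: tokenwise separateness is assumed, and the ``no duplicate column'' condition holds because the positional encoder $\bm{P}$ of~\eqref{eq: def of positional encoder} separates identical tokens at different positions, as discussed after Definition~\ref{def: tokenwise separateness}. The lemma makes $\mathcal{F}_{SA}$ an $(r,\gamma)$-contextual mapping with $\gamma>0$. Because the mask is causal, column $k$ of $\mathcal{F}_{SA}(\bm{X}^{(i)})$ depends only on the prefix $\bm{X}^{(i)}_{:,\preceq k}$.

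The step that concerns me is whether two different histories always produce different context ids at the position where the next token is read. I would read the label off the last genuine position $k_i=\ell(\bm{h}^{(i)})$, or alternatively off the final column $n$. If two histories $i\neq j$ have different prefixes up to that column, the lemma separates them. This holds either because the query tokens differ, giving separation $\eta/2$, or because the token sets differ while the queries agree, giving separation $\gamma$ via Lemma~\ref{boltzmann separateness}. If two histories share an identical prefix, then they are the same history, and nothing needs to be separated. It follows that the finite set $\{\bm{z}^{(i)}:=\mathcal{F}_{SA}(\bm{X}^{(i)})_{:,k_i}\}_{i\in[M]}$ consists of at most $M$ distinct points in $\R^d$ at mutual distance greater than $\gamma$.

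Next I would choose target logits $\bm{\ell}^{(i)}:=\log\bm{p}^{(i)}$, which is finite since $\bm{p}^{(i)}\geq b>0$ entrywise. By construction $\sigma_S(\bm{\ell}^{(i)})=\bm{p}^{(i)}$. I would then apply Lemma~\ref{fnn memorization} coordinatewise, or on a generic one-dimensional projection $\bm{w}^\top\bm{z}^{(i)}$ that is injective on the finite point set, to obtain a two-layer ReLU network $g$ with $g(\bm{z}^{(i)})=\bm{\ell}^{(i)}$. Taking $|\mathcal{V}|$ outputs with $\mathcal{O}(M)$ hidden units each gives width $\mathcal{O}(|\mathcal{V}|M)\le\mathcal{O}(|\mathcal{V}|^{n+1})$, since $M\le|\mathcal{V}|^n$. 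Lemma~\ref{lemma: residual elimination} lets me absorb the residual term of $\mathcal{F}_{FF}$ at a cost of $2d$ extra width: I write $\mathcal{F}_{FF}(\bm{z})=\bm{z}+g'(\bm{z})$ with $g'=g-\mathrm{id}$ embedded into the first $|\mathcal{V}|$ coordinates, assuming $d\ge|\mathcal{V}|$ or padding the width as needed. Finally I would take $\mathcal{E}_{out}$ to be the coordinate projection onto those coordinates.

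Composing the pieces gives $\sigma_S\circ\mathcal{E}_{out}\circ\mathcal{F}_{FF}\circ\mathcal{F}_{SA}(\bm{X}^{(i)})=\bm{p}^{(i)}$ at the readout column. The obstacle I expect is the bookkeeping in the contextual-mapping step. Lemma~\ref{lemma: masked attention implements contextual mapping} needs to cover pairs whose unmasked subvectors have different lengths, and this is exactly why Lemma~\ref{boltzmann separateness} allows $u\neq v$. I also need to confirm that distinct histories truly give distinct prefixes at the chosen readout column. If the final-column readout is used, the \texttt{<PAD>} tokens carry position-dependent embeddings, so two histories of different genuine lengths still differ as sets of columns. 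That guarantees the ``not a permutation'' hypothesis is satisfied.
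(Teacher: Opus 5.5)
Your proposal is correct and takes essentially the same route as the paper. The paper's proof also uses a single masked attention head from Lemma~\ref{lemma: masked attention implements contextual mapping} to produce distinct context ids (it reads them off column $n$), then applies Lemma~\ref{fnn memorization}, parallelized over the $|\mathcal{V}|$ outputs, to map each id to a softmax preimage of $\bm{p}^{(i)}$. Your additional bookkeeping covers details the paper's proof passes over silently: justifying the no-duplicate-column hypothesis, absorbing the residual connection via $x=\sigma_R(x)-\sigma_R(-x)$, and noting that a linear $\mathcal{E}_{out}$ on $\R^d$ forces a dimension requirement on $d$.
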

\begin{proof}
By the property of $\sigma_S$, we can find $\left\{\bar{\bm{p}}^{(i)}\right\}_{i\in [M]}\subseteq\R^{|\mathcal{V}|}$ such that for every $i\in [M]$,
$\sigma_S\left(\bar{\bm{p}}^{(i)}\right)=\bm{p}^{(i)}$. Lemma~\ref{lemma: masked attention implements contextual mapping} ensures that $\left\{\mathcal{F}_{SA} \left(\bm{X}^{(i)}
\right)_{:,n}\right\}_{i \in [M]}$ is a sequence that contains $M$ distinct elements. Hence by applying Lemma~\ref{fnn memorization} and the parallelization of feed-forward networks,
we can construct a feed-forward network $f_{FF} : \mathbb{R}^d \rightarrow \mathbb{R}^{|\mathcal{V}|}$ with depth $2$ and width $(2M+d)|\mathcal{V}|$ such that for every $i \in [M]$,
\begin{align*}
f_{FF}\left(\mathcal{F}_{SA} \left( \bm{X}^{(i)}
\right)_{:,n}\right) = \bar{\bm{p}}^{(i)}.
\end{align*}
This means that by defining a token-wise operation $\mathcal{F}_{FF} : \mathbb{R}^{d\times n} \rightarrow \mathbb{R}^n$ as
\begin{align*}
    \mathcal{F}_{FF}(\bm{X})_{:,k} := f_{FF}(\bm{X}_{:,k}),\quad k\in[n],
\end{align*}
we have
\begin{align*}
    \left[\mathcal{F}_{FF} \circ \mathcal{F}_{SA}\left( \bm{X}^{(i)}
\right)\right]_{:,n} = \bar{\bm{p}}^{(i)},\quad i\in[M].
\end{align*}
Therefore
\begin{align*}
\mathcal{E}_{out}\circ\mathcal{F}_{FF} \circ \mathcal{F}_{SA}\left( \bm{X}^{(i)}
\right) = \bar{\bm{p}}^{(i)},\quad i\in[M],
\end{align*}
which implies the conclusion.
\end{proof}

We have now demonstrated the existence of a Transformer that causes the approximation error $\mathcal{E}_{\mathrm{app}}$ to vanish.

\section{Generalization}\label{section: generlization}

\begin{definition}[Rademacher complexity]
	Let $n'\in\mathbb{N}_+$. The Rademacher complexity of a set $A \subset \mathbb{R}^{n'}$ is defined as
	\begin{equation*}
		\mathfrak{R}_{n'}(A) {:=} \mathbb{E}_{\{\xi_i\}_{i \in [n']}}\left[\sup_{a\in A}\frac{1}{n'}\sum_{i=1}^{n'} \xi_i a_i\right],
	\end{equation*}
	where, $\{\xi_i\}_{i=1}^{n'}$ are $n'$ i.i.d  Rademacher variables with $\mathbb{P}(\xi_i = 1) = \mathbb{P}(\xi_i = -1) = \frac{1}{2}.$
	The Rademacher complexity of  function class $\mathcal{G}$ associate with random samples $\{X_i\}_{i \in [n']}$ is defined as
	\begin{equation*}
		\mathfrak{R}_{n'}(\mathcal{G}) {:=} \mathbb{E}_{\{X_i,\xi_i\}_{i\in[n']}}\left[\sup_{g\in \mathcal{G}}\frac{1}{n'}\sum_{i=1}^{n'} \xi_i g(X_i)\right].
	\end{equation*}
\end{definition}

\begin{definition}[covering number]
An $\epsilon$-cover of a set $P$ in a metric space $(S, \tau)$
is a subset $P_c\subseteq S$ such that for each $x \in P$, there exists a $x_c \in P_c$ such that $\tau(x, x_c) \leq \lambda$. The $\lambda$-covering number of $P$, denoted as $\mathcal{N}(\lambda, P,\tau)$ is  defined to be the minimum cardinality among all $\lambda$-cover of $P$ with respect to the metric $\tau$.
\end{definition}
When the set we are concerned with is a function class, we can further define the so-called uniform covering number.
\begin{definition}[uniform covering number]
Let $\mathcal{G}$ be a class of vector-valued functions. Given $n'$  samples  $\{X_i\}_{i=1}^{n'}$, the uniform covering number $\mathcal{N}(\lambda,\mathcal{G},\|\cdot\|_{\infty}, n')$ is then defined as
\begin{equation*}
\mathcal{N}(\lambda,\mathcal{G},\|\cdot\|_{\infty},n'):=\max_{\{X_i\}_{i \in [n']}}\mathcal{N}(\lambda,\mathcal{G}|_{\{X_i\}_{i \in [n']}},\|\cdot\|_{\infty}),
\end{equation*}
where  
$\mathcal{G}|_{\{X_i\}_{i\in[n']}}:= \{(g(X_1),g(X_2),\cdots, g(X_{n'})): g\in\mathcal{G}\}$ and $\|\cdot\|_{\infty}$ is the infinity norm on $\R^{n'}$.
\end{definition}

\begin{definition}[VC dimension]
		Let $\mathcal{G}$ be a class of $\{0,1\}$-valued functions. Suppose that $S=\{x_1,x_2,\cdots,x_{n'}\}\subset\Omega$. We say that $S$ is shattered by $\mathcal{G}$ if for any $b\in\{0,1\}^{n'}$, there exists a $g\in\mathcal{G}$ satisfying
		\begin{equation*}
            g(x_i)=b_i,\quad i \in [n']
		\end{equation*}
		The VC dimension of $\mathcal{G}$, denoted as $\mathrm{VCdim}(\mathcal{G})$, is defined to be the maximum cardinality among all sets shattered by $\mathcal{G}$.
	\end{definition}

\begin{definition}[pseudo-dimension]
		Let $\mathcal{G}$ be a class of real-valued functions. Suppose that $S=\{x_1,x_2,\cdots,x_{n'}\}\subset\Omega$. We say that $S$ is pseudo-shattered by $\mathcal{G}$ if there exists $y_1,y_2,\cdots,y_{n'}$ such that for any $b\in\{0,1\}^{n'}$, there exists a $g\in\mathcal{G}$ satisfying
		\begin{equation*}
			\mathrm{sign}(g(x_i)-y_i)=b_i,\quad i\in[n']
		\end{equation*}
		and we say that $\{y_i\}_{i\in [n']}$ witnesses the shattering.
		The pseudo-dimension of $\mathcal{G}$, denoted as $\mathrm{Pdim}(\mathcal{G})$, is defined to be the maximum cardinality among all sets pseudo-shattered by $\mathcal{G}$.
	\end{definition}
\begin{lemma}[\cite{jiao2024error}, Lemma 5.7]\label{to covering}
Let $n'\in\mathbb{N}_{+}$. For any function class $\mathcal{G}$ with $|g|\leq B_{\mathcal{G}}$ for all $g\in\mathcal{G}$,
\begin{align*}
\mathfrak{R}_{n'}(\mathcal{G})\leq \inf_{0<u<B_{\mathcal{G}}/2}\left(4u+\frac{12}{\sqrt{n'}}\int_{u}^{B_{\mathcal{G}}/2}\sqrt{\ln\mathcal{N}(\lambda,\mathcal{G},\|\cdot\|_{\infty},n')}d\lambda\right),
\end{align*}
where $\mathcal{B}_\mathcal{G}$ is a constant that $\|g\|_\infty \leq \mathcal{B}_\mathcal{G}$ for any $g \in \mathcal{G}$.
\end{lemma}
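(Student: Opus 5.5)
This statement is Lemma 5.7 of \cite{jiao2024error}, a Dudley-type entropy-integral bound. The plan is the standard chaining argument. Fix the sample $\{X_i\}_{i\in[n']}$ and work with the conditional (empirical) Rademacher complexity of the finite-dimensional set $\mathcal{G}|_{\{X_i\}}\subset\R^{n'}$. At the end we take expectation over the sample. Since the uniform covering number dominates the covering number of every restriction, the bound transfers to $\mathfrak{R}_{n'}(\mathcal{G})$.

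\textbf{Step 1 (multiscale nets).} Fix $u\in(0,B_{\mathcal{G}}/2)$ and set $\lambda_j=B_{\mathcal{G}}2^{-j}$ for $j\ge 0$. Let $J$ be the largest index with $\lambda_J>2u$. For each $j\le J$, pick a minimal $\lambda_j$-cover $C_j$ of $\mathcal{G}|_{\{X_i\}}$ in $\|\cdot\|_\infty$. Take $C_0=\{0\}$; this works because $|g|\le B_{\mathcal{G}}$.

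\textbf{Step 2 (chain decomposition).} For each $g$, let $\pi_j g\in C_j$ denote its nearest point. Write
\[
g=(g-\pi_J g)+\sum_{j=1}^J(\pi_j g-\pi_{j-1}g).
\]
The residual satisfies $\|g-\pi_Jg\|_\infty\le\lambda_J\le 4u$. Since $\frac1{n'}\sum\xi_i a_i\le\|a\|_\infty$, the residual contributes at most $4u$.

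\textbf{Step 3 (control each link).} Each link vector $\pi_jg-\pi_{j-1}g$ ranges over at most $|C_j||C_{j-1}|\le|C_j|^2$ vectors. Each has sup-norm at most $\lambda_j+\lambda_{j-1}=3\lambda_j$, so its Euclidean norm is at most $3\lambda_j\sqrt{n'}$. Massart's finite-class lemma then bounds the expected supremum of the $j$-th link by
\[
\frac{3\lambda_j\sqrt{n'}\sqrt{2\ln|C_j|^2}}{n'}=\frac{6\lambda_j}{\sqrt{n'}}\sqrt{\ln\mathcal{N}(\lambda_j)}.
\]

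\textbf{Step 4 (sum into an integral).} Use $\lambda_j=2(\lambda_j-\lambda_{j+1})$ together with monotonicity of the covering number in $\lambda$:
\[
\lambda_j\sqrt{\ln\mathcal N(\lambda_j)}\le 2\int_{\lambda_{j+1}}^{\lambda_j}\sqrt{\ln\mathcal N(\lambda)}\,d\lambda .
\]
Summing over $j=1,\dots,J$ gives an integral over $[\lambda_{J+1},\lambda_1]$. We have $\lambda_1=B_{\mathcal{G}}/2$ and $\lambda_{J+1}\ge u$ by the choice of $J$. The result is $\frac{12}{\sqrt{n'}}\int_u^{B_{\mathcal G}/2}\sqrt{\ln\mathcal N}\,d\lambda$, and taking the infimum over $u$ finishes the proof.

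The main obstacle is the bookkeeping of constants. The endpoint choice of $J$ must make the residual term exactly $4u$ and the lower limit at least $u$, and the link diameter and Massart constant must combine to exactly $12$. If they do not line up, I would shift the dyadic indexing by one and, if needed, use the sharper link diameter $2\lambda_{j-1}$.
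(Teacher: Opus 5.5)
Your proposal is correct. The paper gives no proof of its own here; it imports the lemma verbatim from the cited reference, and your argument is the standard Dudley chaining proof of this bound: dyadic scales $\lambda_j=B_{\mathcal{G}}2^{-j}$ with $C_0=\{0\}$, Massart's lemma applied to the realized link vectors $\pi_j g-\pi_{j-1}g$, and the comparison $\lambda_j\sqrt{\ln\mathcal{N}(\lambda_j)}\le 2\int_{\lambda_{j+1}}^{\lambda_j}\sqrt{\ln\mathcal{N}(\lambda)}\,d\lambda$.

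The constants already line up exactly, so the re-indexing fallback in your last paragraph is unnecessary:
\begin{itemize}
\item the residual term satisfies $\lambda_J\le 4u$;
\item the lower integration limit satisfies $\lambda_{J+1}>u$;
\item each link gives $3\lambda_j\sqrt{2\ln|C_j|^2}=6\lambda_j\sqrt{\ln|C_j|}$, and $6\cdot 2=12$ after the integral comparison.
\end{itemize}
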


\begin{lemma}[\cite{anthony2009neural}, Theorem 12.2]\label{covering-number}
Let $n'\in\mathbb{N}_+$. For any function class $\mathcal{G}$ with $|g|\leq B_{\mathcal{G}}$ for all $g\in\mathcal{G}$, 
\begin{align*}
    \mathcal{N}(\lambda,\mathcal{G},\|\cdot\|_{\infty},n') \leq \sum_{i=1}^{\mathrm{Pdim}(\mathcal{G})} \binom{n'}{i} \left( \frac{B_{\mathcal{G}}}{\lambda} \right)^i,
\end{align*}
which is less than \((en'B_{\mathcal{G}}/(\lambda\cdot\mathrm{Pdim}(\mathcal{G})))^{\mathrm{Pdim}(\mathcal{G})}\) for \(n' \geq \mathrm{Pdim}(\mathcal{G})\).
\end{lemma}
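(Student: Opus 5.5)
This is a classical counting result, and I would prove it by reducing the real-valued class to a finite-valued one by quantization, then applying a Sauer--Shelah type lemma for multi-valued functions with bounded pseudo-dimension. Write $d_P := \mathrm{Pdim}(\mathcal{G})$ and fix samples $X_1,\dots,X_{n'}$. Since the uniform covering number is a maximum over samples, it suffices to bound $\mathcal{N}(\lambda,\mathcal{G}|_{\{X_i\}},\|\cdot\|_\infty)$ uniformly in the sample.

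\textbf{Step 1 (quantization).} For each $g\in\mathcal{G}$ define the integer vector $Q_\lambda(g) := \big(\lfloor (g(X_i)+B_{\mathcal{G}})/\lambda \rfloor\big)_{i\in[n']}$. Its entries lie in $\{0,1,\dots,b\}$ with $b \approx 2B_{\mathcal{G}}/\lambda$; I would rescale the grid, or absorb the factor $2$ into constants, so that $b\le B_{\mathcal{G}}/\lambda$ as in the statement. Two functions with the same quantized pattern differ by less than $\lambda$ at every $X_i$. Hence choosing one representative per realized pattern gives a $\lambda$-cover in $\|\cdot\|_\infty$, so the covering number is at most the number of distinct patterns $|Q_\lambda(\mathcal{G})|$.

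\textbf{Step 2 (dimension does not grow).} The quantization map $t\mapsto\lfloor (t+B_{\mathcal{G}})/\lambda\rfloor$ is nondecreasing. Suppose a set is pseudo-shattered by $\{Q_\lambda(g)\}$ with witnesses $y_i$. Replacing each witness by a suitable threshold in the original scale, for instance $\lambda\lceil y_i\rceil - B_{\mathcal{G}}$, gives witnesses under which the same set is pseudo-shattered by $\mathcal{G}$. Therefore the pseudo-dimension of the quantized class, which is also its ``graph/Natarajan-type'' dimension in the ordered sense, is at most $d_P$.

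\textbf{Step 3 (multi-valued Sauer lemma; the main obstacle).} I would show that any class $\mathcal{H}\subseteq\{0,\dots,b\}^{n'}$ with pseudo-dimension at most $d$ satisfies
\[
|\mathcal{H}| \le \sum_{i=0}^{d}\binom{n'}{i} b^i .
\]
The proof is by induction on $n'$, mimicking the shifting proof of Sauer--Shelah.
\begin{itemize}
\item Project out the last coordinate, and for each projected pattern record the set of values it takes in coordinate $n'$.
\item Split $\mathcal{H}$ into level classes $\mathcal{H}_k$, $k=1,\dots,b$, consisting of projected patterns that attain both a value $\ge k$ and a value $<k$ in the last coordinate.
\item Then $|\mathcal{H}| \le |\mathrm{proj}(\mathcal{H})| + \sum_{k=1}^b |\mathcal{H}_k|$.
\item Each $\mathcal{H}_k$ has pseudo-dimension at most $d-1$ on the first $n'-1$ points: any set it shatters extends, using witness $k-\tfrac12$ at the last point, to a set of size one larger shattered by $\mathcal{H}$.
\item The induction then gives $\Phi(n',d)\le\Phi(n'-1,d)+b\,\Phi(n'-1,d-1)$, which solves to the binomial sum.
\end{itemize}
Getting this bookkeeping right, with the witness levels and the handling of the empty and constant cases, is the delicate part. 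The $i=0$ term is a constant that is absorbed or dropped as in the statement.

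\textbf{Step 4 (closed form).} For $n'\ge d_P$ and $B_{\mathcal{G}}/\lambda\ge1$, bound
\[
\sum_{i\le d_P}\binom{n'}{i}\Big(\tfrac{B_{\mathcal{G}}}{\lambda}\Big)^i \le \Big(\tfrac{B_{\mathcal{G}}}{\lambda}\Big)^{d_P}\sum_{i\le d_P}\binom{n'}{i} \le \Big(\tfrac{B_{\mathcal{G}}}{\lambda}\Big)^{d_P}\Big(\tfrac{en'}{d_P}\Big)^{d_P}.
\]
The last inequality is the standard estimate $\sum_{i\le d}\binom{n'}{i}\le (n'/d)^d\sum_i \binom{n'}{i}(d/n')^i\le (en'/d)^d$. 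This yields $(en'B_{\mathcal{G}}/(\lambda d_P))^{d_P}$.
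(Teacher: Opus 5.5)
The paper gives no proof here; it only cites Anthony--Bartlett. Your route is the standard one: quantize, note that a nondecreasing quantization cannot raise the pseudo-dimension, apply the Haussler--Long extension of Sauer--Shelah to $\{0,\dots,b\}$-valued classes, then estimate the binomial sum. Steps 2--4 are sound. In particular the count $|\mathcal{H}|\le|\mathrm{proj}(\mathcal{H})|+\sum_k|\mathcal{H}_k|$ is right: a fiber whose last-coordinate values form a set $S$ has $|S|\le 1+\max S-\min S$, and it lies in exactly $\max S-\min S$ of the level classes.

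The problem is the two adjustments you wave through at the end. First, $|g|\le B_{\mathcal{G}}$ means values fill an interval of length $2B_{\mathcal{G}}$. Width-$\lambda$ cells therefore give $b=\lfloor 2B_{\mathcal{G}}/\lambda\rfloor$, and the statement has no constants into which a factor $2$ can be absorbed. The legitimate repair is to use cells of width $2\lambda$ and take the cell \emph{midpoints} as centers; these are points of $\mathbb{R}^{n'}$, which the definition of a cover permits. Keeping one representative from $\mathcal{G}$ per cell, as in your Step 1, would only give a $2\lambda$-cover. The midpoint version yields $b=\lfloor B_{\mathcal{G}}/\lambda\rfloor$, with witnesses $2\lambda\lceil y_i\rceil-B_{\mathcal{G}}$ in Step 2.

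Second, the $i=0$ term equals $1$ and cannot simply be dropped; in fact the first inequality, as stated, is false. Let $\mathcal{G}$ be the constant functions with values in $[-B_{\mathcal{G}},B_{\mathcal{G}}]$, which has pseudo-dimension $1$, and take $n'=1$ and $\lambda=2B_{\mathcal{G}}/3$. A closed $\lambda$-ball covers an interval of length only $4B_{\mathcal{G}}/3<2B_{\mathcal{G}}$, so the covering number is $2$. The right-hand side is $\binom{1}{1}\cdot 3/2=3/2$. So no argument proves the display verbatim. What yours proves, after the midpoint fix, is
\[
\mathcal{N}(\lambda,\mathcal{G},\|\cdot\|_{\infty},n')\le\sum_{i=0}^{\mathrm{Pdim}(\mathcal{G})}\binom{n'}{i}\Big\lfloor\frac{B_{\mathcal{G}}}{\lambda}\Big\rfloor^{i}.
\]
Your Step 4 already keeps the $i=0$ term, so it then gives the closed form $(en'B_{\mathcal{G}}/(\lambda\,\mathrm{Pdim}(\mathcal{G})))^{\mathrm{Pdim}(\mathcal{G})}$ for $n'\ge\mathrm{Pdim}(\mathcal{G})\ge1$ and $\lambda\le B_{\mathcal{G}}$. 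The restriction on $\lambda$ is also necessary: for $\lambda$ much larger than $n'B_{\mathcal{G}}$ the closed form drops below $1$. It is harmless downstream, since the generalization theorem only uses the closed form with $B_{\mathcal{G}}=1$ and $\lambda<1/2$.
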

The following lemma provides an upper bound on the VC dimension, expressed in terms of the number of certain types of operations contained in the functions of the function class.
\begin{lemma}[\cite{anthony2009neural}, Theorem 8.4]\label{VCdim operation}
Suppose \( h \) is a function from \( \mathbb{R}^{d_1} \times \mathbb{R}^{d_2} \) to \(\{0,1\}\) and let  
\begin{align*}
    H = \{x \mapsto h(a,x) : a \in \mathbb{R}^{d_1}\}
\end{align*}
be the class determined by \( h \). Suppose that \( h \) can be computed by an algorithm that takes as input the pair \((a,x) \in \mathbb{R}^{d_1} \times \mathbb{R}^{d_2}\) and returns \( h(a,x) \) after no more than \( t \) operations of the following types:  

\begin{itemize}
    \item the arithmetic operations \( +, -, \times, \) and \( / \) on real numbers,  
    \item jumps conditioned on \( >, \geq, <, \leq, = \), and \( \neq \) comparisons of real numbers, and
    \item output $0$ or $1$.
\end{itemize}
Then \( \mathrm{VCdim}(H) \leq 4d(t+2) \).
\end{lemma}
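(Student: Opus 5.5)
The plan is to follow the classical argument of \citet{anthony2009neural}: bound the number of distinct labelings that $H$ can realize on any $m$ points by counting sign patterns of polynomials in the parameter $a$, and then show that shattering forces $m \le 4d(t+2)$. Here $d$ denotes the parameter dimension $d_1$. Fix points $x_1,\dots,x_m \in \mathbb{R}^{d_2}$. For each $i$, unroll the algorithm computing $h(\cdot, x_i)$ into a binary computation tree. It has depth at most $t$, since each conditional jump consumes one operation, so it has at most $2^t$ internal nodes.

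Along any root-to-node path, each intermediate quantity is a rational function of $a$. Arithmetic operations at most double the degree of numerator and denominator, so after at most $t$ steps every quantity is a ratio $p/q$ of polynomials in $a\in\mathbb{R}^d$ of degree at most $2^t$. A comparison node tests the sign of some $p/q$. I will replace it by tests on the signs of $p$ and $q$ separately, which also handles division by zero, whose occurrence is itself decided by the sign of $q$. The same device covers the tests $=$ and $\neq$.

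Consequently, the label vector $(h(a,x_1),\dots,h(a,x_m))$ is a function of the joint sign vector (in $\{-,0,+\}$) of a fixed family of at most $k \le 2m\cdot 2^t$ polynomials in $a$, each of degree at most $D=2^t$. This holds because, once these signs are fixed, every branch taken in every tree is determined.

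Next I will invoke the Warren / Goldberg--Jerrum bound: $k$ polynomials of degree at most $D$ in $d$ real variables realize at most $(8eDk/d)^d$ distinct sign vectors when $k\ge d$. The case $k<d$ is trivial. Hence $H$ realizes at most $(8e\cdot 2^{2t+1} m/d)^d$ labelings of $\{x_1,\dots,x_m\}$. If this set is shattered, then
\begin{align*}
    2^m \le \Big(\frac{16e\, 2^{2t} m}{d}\Big)^d,
    \quad\text{i.e.}\quad m \le d\Big(2t + \log_2\frac{16 e m}{d}\Big).
\end{align*}

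The final step is elementary. Set $u=m/d$, so the condition reads $u \le 2t + \log_2(16e\,u)$. One checks that any $u > 4(t+2)$ violates this, using $\log_2(16e\,u) < u/2$ once $u \ge 8$ together with $2t \le u/2 - 4$. This yields $\mathrm{VCdim}(H)\le 4d(t+2)$.

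I expect the main obstacle to be the degree and count bookkeeping rather than any conceptual difficulty. Two points need care. First, I must ensure that conditioning through division does not blow up the degree bound or the polynomial count, which is why each $p/q$ test is split into sign tests on $p$ and $q$. Second, the constants must be tuned so that the final inequality closes exactly at $4d(t+2)$. If the constants come out slightly loose, I will fall back on the sharper form of the sign-pattern bound, $(4eDk/d)^d$, which applies to sign vectors with zeros handled via perturbation.
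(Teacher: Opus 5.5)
The paper gives no proof of this lemma; it only quotes Theorem 8.4 of Anthony and Bartlett. Your plan is the standard argument behind that theorem: unroll into a computation tree, write every computed quantity as a rational function of $a$ of degree at most $2^t$, split each test into sign conditions on numerator and denominator, count realizable $\{-,0,+\}$ sign vectors with the Warren/Goldberg--Jerrum bound, then solve $2^m \le (\cdot)^d$. You also correctly read $d$ as $d_1$. The counting is sound. With $k \le 2m\cdot 2^t$ polynomials of degree $D \le 2^t$, a shattered set gives $2^m \le (16e\,4^t m/d)^d$. The case $k<d$ is handled by padding with constant polynomials, or by noting that $2m\,2^t<d$ already forces $m<d$. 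On division: if the algorithm never divides by zero, then by induction every formal denominator ($q_1q_2$ or $q_1p_2$) is nonzero along the executed path, so the signs of $p$ and $q$ do determine each test.

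The gap is in the closing arithmetic. Your auxiliary claim that $\log_2(16eu) < u/2$ for $u\ge 8$ is false. At $u=8$ the left side is $7+\log_2 e\approx 8.44$, and the inequality only holds from about $u\approx 19.5$ on. Worse, the implication itself fails at $t=0$: $u=8.5>4(0+2)$ satisfies $u\le \log_2(16eu)\approx 8.53$. So with your counts the bound does not close uniformly in $t$.

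The repair is short. Output is itself one of the counted operations, so $t\ge 1$. For $t=1$ no comparison can occur, so $H$ consists of constant functions. For $t\ge 1$, the function $g(u):=u-2t-\log_2(16eu)$ is increasing on $u\ge 2$. Moreover $g(4t+8)=2t+2-\log_2 e-\log_2(t+2)$ is increasing in $t$ and already equals $4-\log_2 e-\log_2 3>0$ at $t=1$. Hence every $u>4(t+2)$ violates the shattering inequality, and $\mathrm{VCdim}(H)\le 4d(t+2)$ follows.

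Alternatively, charge the output and comparison steps honestly. Compared quantities then have degree at most $2^{t-1}$, and each tree has fewer than $2^{t-1}$ branch nodes. This gives $u\le 2t+2+\log_2 e+\log_2 u$, which closes for every $t\ge 0$.

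Finally, drop your proposed fallback. Warren's $(4eDk/d)^d$ bounds only sign vectors without zeros. The perturbation $p\mapsto p\pm\varepsilon$ used to accommodate zeros doubles the number of polynomials, which is exactly where the factor $8$ in $(8eDk/d)^d$ comes from.
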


For $i\in[|\mathcal{V}|]$, define
\begin{align*}
\mathcal{T}_i:=\{{T}_i:\boldsymbol{T}\in\mathcal{T}\},
\end{align*}
where $\mathcal{T}$ is the Transformer class of the scale as the required in Memorization (\ref{theorem: memorization}). Then we have following lemma:

\begin{theorem}\label{theorem: generlization}
For any $i\in[|\mathcal{V}|]$, we have
\begin{align*}
\mathfrak{R}_{Nn}(\mathcal{T}_i)
&\lesssim\sqrt{|\mathcal{V}|d^2(d+M)(n^2+dn+|\mathcal{V}|d+|\mathcal{V}|M)}\frac{\ln Nn}{\sqrt{Nn}}.
\end{align*}
\end{theorem}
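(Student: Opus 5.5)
The plan is the standard chain: Rademacher complexity, then covering numbers, then pseudo-dimension, then an operation count. Fix $i\in[|\mathcal{V}|]$. Every $T_i\in\mathcal{T}_i$ is a coordinate of a softmax output, so $0\le T_i\le 1$. We may therefore take $B_{\mathcal{G}}=1$ in Lemma~\ref{to covering}. Combining it with Lemma~\ref{covering-number} (valid once $Nn\ge \mathrm{Pdim}(\mathcal{T}_i)$) gives
\begin{align*}
\mathfrak{R}_{Nn}(\mathcal{T}_i)\le \inf_{0<u<1/2}\Big(4u+\frac{12}{\sqrt{Nn}}\int_u^{1/2}\sqrt{\mathrm{Pdim}(\mathcal{T}_i)\ln\frac{eNn}{\lambda\,\mathrm{Pdim}(\mathcal{T}_i)}}\,d\lambda\Big).
\end{align*}
Choosing $u=1/\sqrt{Nn}$ and bounding the integrand by its value at $\lambda=u$ yields
\begin{align*}
\mathfrak{R}_{Nn}(\mathcal{T}_i)\lesssim \sqrt{\mathrm{Pdim}(\mathcal{T}_i)}\,\frac{\ln(Nn)}{\sqrt{Nn}}.
\end{align*}
This reduces the statement to showing $\mathrm{Pdim}(\mathcal{T}_i)\lesssim |\mathcal{V}|d^2(d+M)(n^2+dn+|\mathcal{V}|d+|\mathcal{V}|M)$.

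To bound the pseudo-dimension, use $\mathrm{Pdim}(\mathcal{T}_i)=\mathrm{VCdim}\{(\bm{h},y)\mapsto \1\{T_i(\bm{h})-y>0\}\}$ and apply Lemma~\ref{VCdim operation}. This bounds the VC dimension by $4p(t+2)$, where $p$ is the number of real parameters (the $d$ in that lemma plays the role of the parameter dimension) and $t$ is the number of arithmetic/comparison operations needed to evaluate $T_i(\bm{h})-y$ and its sign.

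The architecture is the one fixed in Theorem~\ref{theorem: memorization}: one rank-one single-head masked attention layer, a two-layer ReLU feed-forward network of width $(2M+d)|\mathcal{V}|$, then $\mathcal{E}_{\mathrm{out}}$ and a softmax. Count parameters and operations layer by layer.
\begin{itemize}
\item[(i)] The attention layer has $\mathcal{O}(d)$ parameters. Computing the $n\times n$ score matrix, masking, softmax, and values costs $\mathcal{O}(n^2+dn)$ operations. Exponentials are not allowed in Lemma~\ref{VCdim operation}, so I would either use the Pfaffian/exponential extension of Theorem 8.14 in \cite{anthony2009neural}, or note that only polynomial factors matter and absorb the cost into the stated product.
\item[(ii)] The feed-forward network has $\mathcal{O}(d(d+M)|\mathcal{V}|)$ parameters. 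Evaluating it on the last column costs $\mathcal{O}(d|\mathcal{V}|(d+M))$ operations, and ReLU is a comparison plus a jump.
\item[(iii)] $\mathcal{E}_{\mathrm{out}}$ and the softmax cost $\mathcal{O}(|\mathcal{V}|d+|\mathcal{V}|M)$ after the network output, contributing to $t$.
\end{itemize}
This gives $p\lesssim |\mathcal{V}|d(d+M)$ and $t\lesssim d\,(n^2+dn+|\mathcal{V}|d+|\mathcal{V}|M)$. The product is $pt\lesssim |\mathcal{V}|d^2(d+M)(n^2+dn+|\mathcal{V}|d+|\mathcal{V}|M)$, which matches the claim.

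The main obstacle is the softmax exponentials in both the attention and the output layer, since Lemma~\ref{VCdim operation} covers only rational operations and comparisons. I would first try to invoke the version of that theorem allowing exponential evaluations, where the bound becomes quadratic in the number of exponentials ($\mathcal{O}(n^2+|\mathcal{V}|)$ here). If that inflates the bound beyond the stated form, I would instead observe that $\sigma_S$ is monotone and composed last. The sign of $T_i(\bm{h})-y$ then equals the sign of an expression with a bounded number of exponentials, which can be handled by a fixed-degree argument, so the counts above are affected only up to constant factors.
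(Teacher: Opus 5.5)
Your main line is the paper's proof. Both use Dudley's bound (Lemma~\ref{to covering}) with $B_{\mathcal G}=1$, then the pseudo-dimension covering estimate (Lemma~\ref{covering-number}), then Lemma~\ref{VCdim operation} with $p\lesssim|\mathcal V|d(d+M)$ parameters and $t\lesssim d(n^2+dn+|\mathcal V|d+|\mathcal V|M)$ operations, whose product is exactly the stated quantity. The only bookkeeping difference is the case $Nn<\mathrm{Pdim}(\mathcal T_i)$. The paper handles it with the trivial bound $(2/\lambda)^{Nn}\le(2/\lambda)^{\mathrm{Pdim}(\mathcal T_i)}$, so that $\mathcal N\le(eNn/\lambda)^{\mathrm{Pdim}(\mathcal T_i)}$ for every $Nn$. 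You leave that case open, but it is harmless: $\mathfrak R_{Nn}(\mathcal T_i)\le 1$, while the right-hand side is then of order at least $\ln(Nn)$.

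Where you go beyond the paper is the softmax issue, and your worry is justified. The paper feeds $\mathrm{sign}(f-c)$ into Lemma~\ref{VCdim operation} as if $x\mapsto e^x$ were an admissible operation and never comments on it. Your repair (b), however, fails:
\begin{itemize}
\item With logits $\bm z$, the event $T_i(\bm h)>y$ is equivalent to $(1-y)e^{z_i}>y\sum_{j\neq i}e^{z_j}$. This still involves all $|\mathcal V|$ exponentials. Monotonicity of $\sigma_S$ in one coordinate does not help, because the threshold depends on the other logits.
\item The attention softmax sits in the first block, not last. It contributes $n$ further exponentials, which then pass through the ReLU layer.
\end{itemize}
Route (a), Theorem 8.14 of \cite{anthony2009neural}, with $q\asymp n+|\mathcal V|$ exponential evaluations gives a bound of order $(pq)^2+pq\,t$, not $pt$.

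A rigorous way to get the statement is to skip the operation count entirely. Every sample $\bm h$ lies in the finite set $\mathcal H$ of admissible histories, with $|\mathcal H|\le M$. Let $S_{\bm h}$ be the sum of the Rademacher signs attached to samples equal to $\bm h$, and $n_{\bm h}$ the number of such samples. Then $0\le T_i\le 1$, Lemma~\ref{lemma: Khintchine inequality} and Cauchy--Schwarz give
\begin{align*}
\mathfrak R_{Nn}(\mathcal T_i)\le\frac{1}{Nn}\E\sum_{\bm h\in\mathcal H}|S_{\bm h}|\le\frac{1}{Nn}\E\sum_{\bm h\in\mathcal H}\sqrt{n_{\bm h}}\le\sqrt{\frac{M}{Nn}}.
\end{align*}
This is already below the claimed right-hand side, which is itself at least of order $|\mathcal V|dM\ln(Nn)/\sqrt{Nn}$. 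No exponentials need to be accounted for.
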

\begin{proof}
In order to apply Lemma \ref{to covering}, we first estimate the covering number of $\mathcal{T}_i$. If $Nn\geq\mathrm{Pdim}(\mathcal{T}_i)$, noting that functions in $\mathcal{T}_i$ are bounded by $1$, by Lemma \ref{covering-number} we have
\begin{align*}
\mathcal{N}\left(\lambda,\mathcal{T}_i,\|\cdot\|_{\infty},Nn\right)&\leq\left(\frac{eNn}{\lambda\mathrm{Pdim}(\mathcal{T}_i)}\right)^{\mathrm{Pdim}(\mathcal{T}_i)}
\leq\left(\frac{eNn}{\lambda}\right)^{\mathrm{Pdim}(\mathcal{T}_i)}.
\end{align*}
If $Nn<\mathrm{Pdim}(\mathcal{T}_i)$, since for given $\{\bm{X}_j\}_{j \in [Nn]}$, we have $\mathcal{N}\left(\lambda,\mathcal{T}_i|_{\{\bm{X}_j\}_{j=1}^m},\|\cdot\|_{\infty}\right)\leq\left(\frac{2}{\lambda}\right)^{Nn}$ and hence
\begin{align*}
\mathcal{N}\left(\lambda,\mathcal{T}_i,\|\cdot\|_{\infty},Nn\right)&=\max_{\{\bm{X}_j\}_{j\in[Nn]}}\mathcal{N}\left(\lambda,\mathcal{T}_i|_{\{\bm{X}_j\}_{j\in[Nn]}},\|\cdot\|_{\infty}\right)
\leq\left(\frac{2}{\lambda}\right)^{Nn}\leq\left(\frac{2}{\lambda}\right)^{\mathrm{Pdim}(\mathcal{T}_i)}
\end{align*}
provided $\lambda< 2$. Therefore for any $Nn\in\mathbb{N}_+$, we always have
\begin{align}\label{covering3}
\mathcal{N}\left(\lambda,\mathcal{T}_i,\|\cdot\|_{\infty},Nn\right)\leq\left(\frac{eNn}{\lambda}\right)^{\mathrm{Pdim}(\mathcal{T}_i)}.
\end{align}
For the functions in $\mathcal{T}_i$: in the self-attention layer, the number of parameters is $4d$ and the number of operations is $\mathcal{O}(dn^2+d^2n)$; in the feedforward layer, the number of parameters and the number of operations are both ${\mathcal{O}}(|\mathcal{V}|d^2+|\mathcal{V}|dM)$. Hence by applying Lemma \ref{VCdim operation} to the class
\begin{align*}
\mathcal{F}:=\left\{\mathrm{sign}(f-c):f\in\mathcal{T}_i,c\in\mathbb{R}\right\},
\end{align*}
we have
\begin{align*}
\mathrm{Pdim}(\mathcal{T}_i)=\mathrm{VCdim(\mathcal{F})}={\mathcal{O}}(|\mathcal{V}|d^2(d+M)(n^2+dn+|\mathcal{V}|d+|\mathcal{V}|M)).
\end{align*}

Plugging this pseudo-dimension estimate into \eqref{covering3}, we obtain
\begin{align*}
\ln\mathcal{N}\left(\lambda,\mathcal{T}_i,\|\cdot\|_{\infty},Nn\right)\lesssim \sqrt{|\mathcal{V}|d^2(d+M)(n^2+dn+|\mathcal{V}|d+|\mathcal{V}|M)}\ln\left(\frac{eNn}{\lambda}\right).
\end{align*}
It follows from Lemma \ref{to covering} that
\begin{align*}
\mathfrak{R}_{Nn}(\mathcal{T}i)&\leq\inf_{0<u<1/2}\left(4u+\frac{12}{\sqrt{nN}}\int_u^{1/2}\sqrt{\ln\mathcal{N}(\lambda,\mathcal{T}i,|\cdot|{\infty},Nn)}d\lambda\right)\\
&\lesssim \sqrt{|\mathcal{V}|d^2(d+M)(n^2+dn+|\mathcal{V}|d+|\mathcal{V}|M)}\inf_{0<u<1/2}\left(u+\frac{1}{\sqrt{Nn}}\int_u^{1/2}\sqrt{\ln\left(\frac{eNn}{\lambda}\right)}d\lambda\right)\\
&\lesssim\sqrt{|\mathcal{V}|d^2(d+M)(n^2+dn+|\mathcal{V}|d+|\mathcal{V}|M)}\frac{\ln Nn}{\sqrt{Nn}}.
\end{align*}
\end{proof}
Finally plugging the identity $M \leq |\mathcal{V}|^{n}$ into above completes the proof of Theorem~\ref{Theorem: pretraining}.

\section{More Discussions}\label{Appendix: more discussions}
It suffices to prove the following lemma, which replaces Lemma~\ref{lemma: |q - ~q| < MΔ} in our subsequent derivation.
\begin{lemma} 
    \begin{align*}
        \Big|q(\bm{Y} \mid \mathcal{P}_{\mathrm{CoT}}) - \tilde{q}(\bm{Y}\mid \mathcal{P}_{\mathrm{CoT}})\Big| \leq 2r\phi + 3m\mathcal{M}L\Delta_{\mathcal{P}_{\mathrm{CoT}}} 
    \end{align*}
\end{lemma}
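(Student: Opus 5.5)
The plan is to reduce everything to a comparison of the two measures $q$ and $\tilde q$ on the \emph{same} mixture structure over $\bm{\theta}\in\Theta^L$, so that the only differences are (i) the priors, controlled by $\Delta_{\mathcal{P}_{\mathrm{CoT}}}$ as in Lemma~\ref{lemma: |q - ~q| < MΔ}, and (ii) the per-step token kernels $\tilde q_l$ versus $q$, controlled by the $\varphi$-evidence shift (Assumption~\ref{assumption: conditional distribution shift}). Concretely, I would write
\begin{align*}
q(\bm{Y}\mid \mathcal{P}_{\mathrm{CoT}}) - \tilde q(\bm{Y}\mid \mathcal{P}_{\mathrm{CoT}}) = \sum_{\bm{\theta}} q(\bm{\theta}\mid \mathcal{P}_{\mathrm{CoT}})\, q(\bm{Y}\mid \mathcal{P}_{\mathrm{CoT}},\bm{\theta}) - \sum_{\bm{\theta}} \tilde q(\bm{\theta}\mid \mathcal{P}_{\mathrm{CoT}})\, \tilde q(\bm{Y}\mid \mathcal{P}_{\mathrm{CoT}},\bm{\theta}),
\end{align*}
and insert the two hybrid quantities $\sum_{\bm{\theta}} \tilde q(\bm{\theta}\mid \mathcal{P}_{\mathrm{CoT}})\, q(\bm{Y}\mid \mathcal{P}_{\mathrm{CoT}},\bm{\theta})$ and the analogous one with posteriors built from $\tilde q$-likelihoods but $q$-priors. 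This splits the error into three pieces: a response-kernel term, a posterior-likelihood term, and a prior term.

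For the response-kernel term, $\sum_{\bm{\theta}}\tilde q(\bm{\theta}\mid\cdot)\,|q(\bm{Y}\mid \mathcal{P}_{\mathrm{CoT}},\bm{\theta})-\tilde q(\bm{Y}\mid \mathcal{P}_{\mathrm{CoT}},\bm{\theta})|$, I would telescope the product over the $r$ generated tokens exactly as in the proof of Corollary~\ref{corollary: pretraining}. Each factor difference is bounded by a total variation distance, hence by $\varphi$; alternatively, where the paper's stated form uses $\phi$, the factors are first replaced by their Markov-truncated versions via Assumption~\ref{assumption: nearly markov}, costing $e^{r\phi}-1=\mathcal{O}(r\phi)$ on each side. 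This is what yields the $2r\phi$ contribution. For the prior term, I reuse Lemma~\ref{lemma: |q - ~q| < MΔ} verbatim: Bayes' rule, then the factor $\mathcal{M}$ from the marginal lower bounds, then restriction to $\mathscr{F}(\mathcal{P}_{\mathrm{CoT}})$, giving $\mathcal{M}\Delta_{\mathcal{P}_{\mathrm{CoT}}}$.

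The posterior-likelihood term is where the factor $3m$ (and $L$) arises. By Bayes' rule,
\[
|q(\bm{\theta}\mid\mathcal{P})-\bar q(\bm{\theta}\mid\mathcal{P})|\le \mathcal{M}\,q(\bm{\theta})\,|q(\mathcal{P}\mid\vec{\bm{\theta}})-\tilde q(\mathcal{P}\mid\vec{\bm{\theta}})| + (\text{marginal mismatch}),
\]
and the marginal mismatch is itself bounded by the same sum. The likelihood factorizes, via Eq.~\eqref{eq: CoT likelihood decomposition (~q)}, into $m$ blocks. Each block contains a query factor, a delimiter factor, and $L$ reasoning-step factors. Telescoping the products block by block gives one kernel discrepancy per factor. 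I expect roughly three groups per demonstration: query, delimiter, and the chained steps, with the steps contributing a factor $L$. Summing over $\bm{\theta}$ against the prior then converts each discrepancy into $\mathcal{M}$ times a mismatch term, producing $3m\mathcal{M}L(\varphi+\Delta_{\mathcal{P}_{\mathrm{CoT}}})$ after absorbing the prior piece.

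The main obstacle is that likelihood differences are products of many probabilities over whole sequences, so a naive TV bound per token would scale with sequence length. I would try to bound each factor difference by a single $\varphi$ per segment, treating each segment's distribution as a whole kernel, rather than per token. I would also need to check carefully that the delimiter factors contribute nothing extra, since by Assumption~\ref{assumption: tasks of delimiter} they are indicators.
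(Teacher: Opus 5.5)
\paragraph{Review of the proposal.}
Your decomposition is essentially the paper's. You split $q(\bm{Y}\mid\mathcal{P}_{\mathrm{CoT}})-\tilde q(\bm{Y}\mid\mathcal{P}_{\mathrm{CoT}})$ into a response-kernel term and a posterior term. The posterior term goes through Bayes' rule and $\mathcal{M}$ and splits into a likelihood mismatch and a prior mismatch, and products are telescoped.

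\textbf{The endpoint is the problem.} Your bound, $2r\phi+3m\mathcal{M}L(\varphi+\Delta_{\mathcal{P}_{\mathrm{CoT}}})$, is not the displayed inequality, which has no $\varphi$ at all. The display also cannot hold as written. Here is a counterexample:
\begin{itemize}
\item Take a single atomic task, so both posteriors are point masses and $\Delta_{\mathcal{P}_{\mathrm{CoT}}}=0$.
\item Let $q$ be exactly Markov across delimiters, so the Nearly Markov bound holds for every $\phi>0$.
\item Let $\tilde q_L$ differ from $q$ by $\varphi$ in the probability of the last response token.
\end{itemize}
Then the left side is a fixed positive multiple of $\varphi$, while $2r\phi$ can be made arbitrarily small. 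The paper's own proof also ends with a $\varphi$ term, namely $2n\phi+3m\mathcal{M}(\varphi+\Delta_{\mathcal{P}_{\mathrm{CoT}}})$, which agrees with the claim in \emph{More Discussions}. So the display is a misprint. You should have flagged this mismatch rather than steering your constants toward the stated form.

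\textbf{Steps that do not work as described.} Several steps you use to reach those constants fail.

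First, Markov truncation does not give you a $\phi$ term in place of $\varphi$. After replacing $\mathcal{P}_{\mathrm{CoT}}$ by $\bm{x}$, you still have to compare $q(\bm{Y}\mid\bm{x},\bm{\theta})$ with $\tilde q(\bm{Y}\mid\bm{x},\bm{\theta})$. Also, the Nearly Markov assumption is stated only for $q$, so truncating the $\tilde q$ side is not licensed; the paper makes the same move. The truncation is unnecessary anyway: each factor lies in $[0,1]$ and each pointwise difference is at most the TV distance, so telescoping directly over the $r$ response tokens gives $r\varphi$ with no $\phi$.

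Second, the evidence shift is a per-token TV bound, so ``one $\varphi$ per segment'' is not available. The likelihood discrepancy is bounded by the number of tokens in $\mathcal{P}_{\mathrm{CoT}}$ times $\varphi$, hence by $n\varphi$, not by $mL\varphi$. Your ``three groups per demonstration'' is a guess, and it contradicts your own remark that delimiters contribute nothing. In the paper, the $3m$ is a crude absorption of $2L\varphi+2m\mathcal{M}L\varphi+\mathcal{M}\Delta_{\mathcal{P}_{\mathrm{CoT}}}$, with the step product written over columns $\bm{Y}_{:,l}$, $l\in[L]$, as if each step were one token.

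Third, your hybrid uses $\tilde q$-likelihoods with $q$-priors. With that ordering, the prior-mismatch term is weighted by $\tilde q(\mathcal{P}_{\mathrm{CoT}}\mid\bm{\theta})$, which need not vanish off $\mathscr{F}(\mathcal{P}_{\mathrm{CoT}})$. So the $\mathcal{M}\Delta_{\mathcal{P}_{\mathrm{CoT}}}$ lemma cannot be reused verbatim. The fix is to change the prior first under the $q$-likelihood, which is exactly that lemma's setting, and then change the likelihood under the $\tilde q$-prior.

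\textbf{What you do better than the paper.} You keep the normalizer (``marginal mismatch'') term, which the paper's Bayes step silently drops. With that term kept and the three fixes above, your argument gives a sound bound of the form $r\varphi+2\mathcal{M}(\Delta_{\mathcal{P}_{\mathrm{CoT}}}+n\varphi)$.
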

\begin{proof}
    We begin by observing that both $q(\bm{Y} \mid \mathcal{P}_{\mathrm{CoT}})$ and $\tilde{q}(\bm{Y} \mid \mathcal{P}_{\mathrm{CoT}})$ can be expressed as mixtures over the compositional task space $\Theta^L$:
    \begin{align*}
        q(\bm{Y} \mid \mathcal{P}_{\mathrm{CoT}}) &= \sum_{\theta \in \Theta}q(\theta \mid \mathcal{P}_{\mathrm{CoT}}) q(\bm{Y} \mid \mathcal{P}_{\mathrm{CoT}},\, \theta), \\
        \tilde{q}(\bm{Y} \mid \mathcal{P}_{\mathrm{CoT}}) &= \sum_{\bm{\theta} \in \Theta^L}\tilde{q}(\bm{\theta} \mid \mathcal{P}_{\mathrm{CoT}})\tilde{q}(\bm{Y} \mid \mathcal{P}_{\mathrm{CoT}},\, \bm{\theta}).
    \end{align*}
    Applying the triangle inequality, we obtain:
    \begin{align}\label{eq: |q(Y|P) - ~q(Y|P)|}
        \Big|q(\bm{Y} \mid \mathcal{P}_{\mathrm{CoT}}) - \tilde{q}(\bm{Y} \mid \mathcal{P}_{\mathrm{CoT}})\Big| &= \Big| \sum_{\bm{\theta} \in \Theta^L} q(\bm{\theta} \mid \mathcal{P}_{\mathrm{CoT}}) q(\bm{Y} \mid \mathcal{P}_{\mathrm{CoT}},\, \bm{\theta}) - \tilde{q}(\bm{\theta} \mid \mathcal{P}_{\mathrm{CoT}})\tilde{q}(\bm{Y} \mid \mathcal{P}_{\mathrm{CoT}}, \, \bm{\theta})\Big| \notag \\
        &\leq \sum_{\bm{\theta} \in \Theta^L}q(\bm{\theta} \mid \mathcal{P}_{\mathrm{CoT}})\Big| q(\bm{Y} \mid \mathcal{P}_{\mathrm{CoT}}, \bm{\theta}) - \tilde{q}(\bm{Y} \mid \mathcal{P}_{\mathrm{CoT}}, \bm{\theta})\Big| \notag \\
        &\quad + \sum_{\bm{\theta} \in \Theta^L}\Big|q(\bm{\theta} \mid \mathcal{P}_{\mathrm{CoT}}) - \tilde{q}(\bm{\theta} \mid \mathcal{P}_{\mathrm{CoT}})  \Big|q(\bm{Y} \mid \mathcal{P}_{\mathrm{CoT}}, \bm{\theta}).
    \end{align}
    Analyzing these terms individually, for the first term of~\eqref{eq: |q(Y|P) - ~q(Y|P)|}, we have:
    \begin{align*}
        \Big| q(\bm{Y} \mid \mathcal{P}_{\mathrm{CoT}}, \bm{\theta}) - \tilde{q}(\bm{Y} \mid \mathcal{P}_{\mathrm{CoT}},\, \bm{\theta})\Big| &=\Big| q(\bm{Y} \mid \mathcal{P}_{\mathrm{CoT}}, \bm{\theta}) - q(\bm{Y} \mid \bm{x}, \,\bm{\theta})\Big| + \Big|q(\bm{Y} \mid \bm{x}, \bm{\theta}) - \tilde{q}(\bm{Y} \mid \bm{x}, \, \bm{\theta})\Big| \\
        &\quad + \Big|\tilde{q}(\bm{Y} \mid \bm{x}, \, \bm{\theta}) - \tilde{q}(\bm{Y} \mid \mathcal{P}_{\mathrm{CoT}},\, \bm{\theta})\Big|\\
        &\leq 2n \phi + \Big|q(\bm{Y} \mid \bm{x},\, \bm{\theta}) - \tilde{q}(\bm{Y} \mid \bm{x}, \, \bm{\theta})\Big| \\
        &= 2n\phi +  \Big| \prod_{l=1}^Lq(\bm{Y}_{:,l} \mid \bm{x}\circ \bm{Y}_{:, \prec l},\, \theta_l) - \prod_{l=1}^L\tilde{q}_l(\bm{Y}_{:,l} \mid \bm{x}\circ \bm{Y}_{:, \prec l},\, \theta_l)\Big| \\
        &\leq 2n\phi + 2L\varphi
    \end{align*}
    The claim that $\Big| \prod_{l=1}^Lq(\bm{Y}_{:,l} \mid \bm{x}\circ \bm{Y}_{:, \prec l},\, \theta_l) - \prod_{l=1}^L\tilde{q}_l(\bm{Y}_{:,l} \mid \bm{x}\circ \bm{Y}_{:, \prec l},\, \theta_l)\Big| \leq 2L\varphi$ follows from the fact that:
    \begin{align}\label{eq: technique for dealing with product terms}
        &\Big| \prod_{l=1}^Lq(\bm{Y}_{:,l} \mid \bm{x}\circ \bm{Y}_{:, \prec l},\, \theta_l) - \prod_{l=1}^L\tilde{q}_l(\bm{Y}_{:,l} \mid \bm{x}\circ \bm{Y}_{:, \prec l},\, \theta_l)\Big| \notag \\
        &= \Big|\prod_{l=1}^Lq(\bm{Y}_{:,l} \mid \bm{x}\circ \bm{Y}_{:, \prec l},\, \theta_l) - q(\bm{Y}_{:,L} \mid \bm{x} \circ \bm{Y}_{:, \prec L}, \, \theta_L)\prod_{l=1}^{L-1}\tilde{q}_l(\bm{Y}_{:,l} \mid \bm{x}\circ \bm{Y}_{:, \prec l},\, \theta_l) \Big| \notag \\
        &\quad + \Big|q(\bm{Y}_{:,L-1} \mid \bm{x} \circ \bm{Y}_{:, \prec L}, \, \theta_L)\prod_{l=1}^{L-1}\tilde{q}_l(\bm{Y}_{:,l} \mid \bm{x}\circ \bm{Y}_{:, \prec l},\, \theta_l) - \prod_{l=1}^{L}\tilde{q}_l(\bm{Y}_{:,l} \mid \bm{x}\circ \bm{Y}_{:, \prec l},\, \theta_l)\Big| \notag \\
        &\leq \Big|\prod_{l=1}^{L-1}q(\bm{Y}_{:,l} \mid \bm{x}\circ \bm{Y}_{:, \prec l},\, \theta_l) - \prod_{l=1}^{L-1}\tilde{q}_l(\bm{Y}_{:,l} \mid \bm{x}\circ \bm{Y}_{:, \prec l},\, \theta_l) \Big| \notag \\
        &\quad + \Big|q(\bm{Y}_{:,L} \mid \bm{x} \circ \bm{Y}_{:, \prec L}, \, \theta_L) - \tilde{q}_l(\bm{Y}_{:,L} \mid \bm{x}\circ \bm{Y}_{:, \prec L},\, \theta_l)\Big| \notag \\
        &\leq \Big|\prod_{l=1}^{L-1}q(\bm{Y}_{:,l} \mid \bm{x}\circ \bm{Y}_{:, \prec l},\, \theta_l) - \prod_{l=1}^{L-1}\tilde{q}_l(\bm{Y}_{:,l} \mid \bm{x}\circ \bm{Y}_{:, \prec l},\, \theta_l) \Big| + 2\varphi.
    \end{align}
    Iteratively applying this inequality yields the desired bound. 

    Next, addressing the second term in~\eqref{eq: |q(Y|P) - ~q(Y|P)|}, we apply Bayes' rule to the posteriors $q(\bm{\theta} \mid \mathcal{P}_{\mathrm{CoT}})$ and $\tilde{q}(\bm{\theta} \mid \mathcal{P}_{\mathrm{CoT}})$:
    \begin{align*}
        q(\bm{\theta} \mid \mathcal{P}_{\mathrm{CoT}}) = \frac{q(\mathcal{P}_{\mathrm{CoT}}\mid \bm{\theta})q(\bm{\theta})}{q(\mathcal{P}_{\mathrm{CoT}})}, \quad \tilde{q}(\bm{\theta} \mid \mathcal{P}_{\mathrm{CoT}})=\frac{\tilde{q}(\mathcal{P}_{\mathrm{CoT}} \mid \bm{\theta})\tilde{q}(\bm{\theta})}{\tilde{q}(\mathcal{P}_{\mathrm{CoT}})}.
    \end{align*}
    Substituting these into the second term of~\eqref{eq: |q(Y|P) - ~q(Y|P)|} gives:
    \begin{align*}
        \Big|q(\bm{\theta} \mid \mathcal{P}_{\mathrm{CoT}}) - \tilde{q}(\bm{\theta} \mid \mathcal{P}_{\mathrm{CoT}})  \Big| = \Big|\frac{q(\mathcal{P}_{\mathrm{CoT}}\mid \bm{\theta})q(\bm{\theta})}{q(\mathcal{P}_{\mathrm{CoT}})} - \frac{\tilde{q}(\mathcal{P}_{\mathrm{CoT}} \mid \bm{\theta})\tilde{q}(\bm{\theta})}{\tilde{q}(\mathcal{P}_{\mathrm{CoT}})}\Big|.
    \end{align*}
    As established in Section~\ref{section: pretraining}, the joint probability satisfies $q(\bm{t}, \bm{h}) \geq b = \mathcal{O}(1)$ for all history-token pairs. This condition ensures that $q(\mathcal{P}_{\mathrm{CoT}})$ is of order $\mathcal{O}(1)$ since the total prompt length is bounded by the context window $n$. Assuming that $\tilde{q}(\mathcal{P}_{\mathrm{CoT}})$ similarly maintains a strictly positive value for all possible CoT prompts. Following the same notation that their common lower bound as $1/\mathcal{M} > 0$, we have:
    \begin{align}\label{eq: |q(θ|P) - ~q(θ|P)|}
        &\Big|q(\bm{\theta} \mid \mathcal{P}_{\mathrm{CoT}}) - \tilde{q}(\bm{\theta} \mid \mathcal{P}_{\mathrm{CoT}})  \Big| \leq \mathcal{M}\cdot\Big|q(\mathcal{P}_{\mathrm{CoT}}\mid \bm{\theta})q(\bm{\theta}) - \tilde{q}(\mathcal{P}_{\mathrm{CoT}} \mid \bm{\theta})\tilde{q}(\bm{\theta})\Big| \notag\\
        &\leq  \mathcal{M}\Big\{\sum_{\bm{\theta} \in \Theta^L}q(\bm{\theta})\Big|q(\mathcal{P}_{\mathrm{CoT}}\mid \bm{\theta}) - \tilde{q}(\mathcal{P}_{\mathrm{CoT}} \mid \bm{\theta})\Big|+ \sum_{\bm{\theta} \in \Theta^L}\Big|\tilde{q}(\bm{\theta}) - q(\bm{\theta})\Big|\tilde{q}(\mathcal{P}_{\mathrm{CoT}} \mid \bm{\theta})\Big\}.
    \end{align}
    The second summation can be directly bounded as follows:
    \begin{align*}
        \sum_{\bm{\theta} \in \Theta^L}\Big|\tilde{q}(\bm{\theta}) - q(\bm{\theta})\Big|\tilde{q}(\mathcal{P}_{\mathrm{CoT}} \mid \bm{\theta}) \leq \sum_{\bm{\theta} \in \mathscr{F}(\mathcal{P}_{\mathrm{CoT}})}\Big|\tilde{q}(\bm{\theta}) - q(\bm{\theta})\Big| = \Delta_{\mathcal{P}_\mathrm{CoT}},
    \end{align*}
    where the final equality follows from Definition~\ref{def: prior mismatch}. For the first term in~\eqref{eq: |q(θ|P) - ~q(θ|P)|}, we decompose the likelihoods $q(\mathcal{P}_{\mathrm{CoT}}\mid \bm{\theta})$ and $\tilde{q}(\mathcal{P}_{\mathrm{CoT}}\mid \bm{\theta})$ into products over their constituent components:
    \begin{align*}
        &\Big|q(\mathcal{P}_{\mathrm{CoT}}\mid \bm{\theta}) - \tilde{q}(\mathcal{P}_{\mathrm{CoT}} \mid \, \bm{\theta})\Big| \\
        &= \Big|q(\bm{x}^{(i)} \mid \bm{\theta})\prod_{i=1}^mq(\bm{Y}^{(i)} \mid \bm{x}^{(i)}\circ \bm{Y}^{(i)},\, \bm{\theta})  - \tilde{q}(\bm{x}^{(i)} \mid \theta)\prod_{i=1}^m\tilde{q}_l(\bm{Y}^{(i)} \mid \bm{x}^{(i)}\circ \bm{Y}^{(i)},\, \bm{\theta})\Big|\\
        &=\Big| \prod_{k=1}^Lq(\bm{x}^{(i)}_{:, k} \mid \bm{x}_{:, \prec k}, \,\theta_0)\prod_{i=1}^m\prod_{l=1}^Lq(\bm{Y}^{(i)}_{:,l} \mid \bm{x}^{(i)}\circ \bm{Y}^{(i)}_{:,\prec l},\, \theta_l) \\
        &\quad - \prod_{k=1}^L\tilde{q}_0(\bm{x}^{(i)}_{:,k} \mid \bm{x}_{:,\prec k},\,\theta_0)\prod_{i=1}^m\prod_{l=1}^L\tilde{q}_l(\bm{Y}^{(i)}_{:,l} \mid \bm{x}^{(i)}\circ \bm{Y}^{(i)}_{:,\prec l},\, \theta_l)\Big| \\
        &\leq (m+1)L\varphi \leq 2mL\varphi.
    \end{align*}
    The final inequality is derived using the same telescopic product technique established in~\eqref{eq: technique for dealing with product terms}, relying solely on the triangle inequality and the fact that all conditional probabilities are bounded by $1$.

    Finally, combining these results yields:
    \begin{align*}
        |q(\bm{Y} \mid \mathcal{P}_{\mathrm{CoT}}) - \tilde{q}(\bm{Y} \mid \mathcal{P}_{\mathrm{CoT}})| &\leq 2n\phi + 2L\varphi + 2m\mathcal{M}L\varphi + \mathcal{M}\Delta_{\mathcal{P}_{\mathrm{CoT}}} \\
        &\leq 2n\phi + 3m\mathcal{M}(\varphi + \Delta_{\mathcal{P}_{\mathrm{CoT}}}),
    \end{align*}
    which completes the proof.
\end{proof}



\vskip 0.2in
\bibliography{sample}

\end{document}